\newcommand{\ra}[1]{\renewcommand{\arraystretch}{#1}}
\theoremstyle{thmstyleone}%
\newtheorem{theorem}{Theorem}[section]
\newtheorem{proposition}[theorem]{Proposition}
\newtheorem{lemma}[theorem]{Lemma}
\newtheorem{definition}[theorem]{Definition}
\newtheorem{exmp}{Example}[section]
\begin{document}
\doparttoc 
\faketableofcontents 

\title[Article Title]{A New Formulation of Lipschitz Constrained With Functional Gradient  Learning for GANs}


\author[1]{\sur{Chang Wan}}\email{wanchang\_phd2020@zjnu.edu.cn}

\author[2]{\sur{Ke Fan}}\email{kfan21@m.fudan.edu.cn}
\author[2,3]{\sur{Xinwei Sun}}\email{sunxinwei@fudan.edu.cn}
\equalcont{These authors should be nominated as the corresponding author.}
\author[2,3]{\sur{Yanwei Fu}}\email{yanweifu@fudan.edu.cn}
\equalcont{These authors should be nominated as the corresponding author.}

\author[1]{\sur{Minglu Li}}\email{mlli@zjnu.edu.cn}
\author[1]{\sur{Yunliang Jiang}}\email{jyl2022@zjnu.cn}
\author*[1]{\sur{Zhonglong Zheng}}\email{zhonglong@zjnu.edu.cn}
\equalcont{These authors should be nominated as the corresponding author.}

\affil*[1]{\orgdiv{School of Computer Science and Technology}, \orgname{Zhejiang Normal University}, \orgaddress{\street{No. 688 Yingbin Avenue}, \city{Jinhua}, \postcode{321004}, \state{Zhejiang}, \country{China}}}

\affil[2]{\orgdiv{School of Data Science and MOE Frontiers Center for Brain Science}, \orgname{Fudan University}, \orgaddress{\street{No.220 Handan Road}, \city{Shanghai}, \postcode{200433}, \state{Shanghai}, \country{China}}}

\affil[3]{\orgdiv{Fudan ISTBI-JNU Algorithm Centre for Brain inspired Intelligence}, \orgname{Zhejiang Normal University}, \orgaddress{\street{No. 688 Yingbin Avenue}, \city{Jinhua}, \postcode{321004}, \state{Zhejiang}, \country{China}}}

\abstract{
This paper introduces a promising alternative method for training Generative Adversarial Networks (GANs) on large-scale datasets with clear theoretical guarantees. GANs are typically learned through a minimax game between a generator and a discriminator, which is known to be empirically unstable. Previous learning paradigms have encountered mode collapse issues without a theoretical solution.
To address these challenges, we propose a novel Lipschitz-constrained Functional Gradient GANs learning (Li-CFG) method to stabilize the training of GAN and provide a theoretical foundation for effectively increasing the diversity of synthetic samples by reducing the neighborhood size of the latent vector. Specifically, we demonstrate that the neighborhood size of the latent vector can be reduced by increasing the norm of the discriminator gradient, resulting in enhanced diversity of synthetic samples.
To efficiently {enlarge} the norm of the discriminator gradient, we introduce a novel $\boldsymbol\varepsilon$-centered gradient penalty that amplifies the norm of the discriminator gradient using the hyper-parameter $\boldsymbol\varepsilon$. In comparison to other constraints, our method {enlarging} the discriminator norm, thus obtaining the smallest neighborhood size of the latent vector. {Extensive experiments on benchmark datasets for image generation demonstrate the efficacy of the Li-CFG method and the $\boldsymbol\varepsilon$-centered gradient penalty.} The results showcase improved stability and increased diversity of synthetic samples.
}

\keywords{Generative Adversarial Nets, Functional Gradient Methods, New Lipschitz Constraint, Synthesis Diversity}



\maketitle

\begin{wrapfigure}{r}{8cm}
\centering
\vspace{-0.25in}
\includegraphics[width=0.45\textwidth]{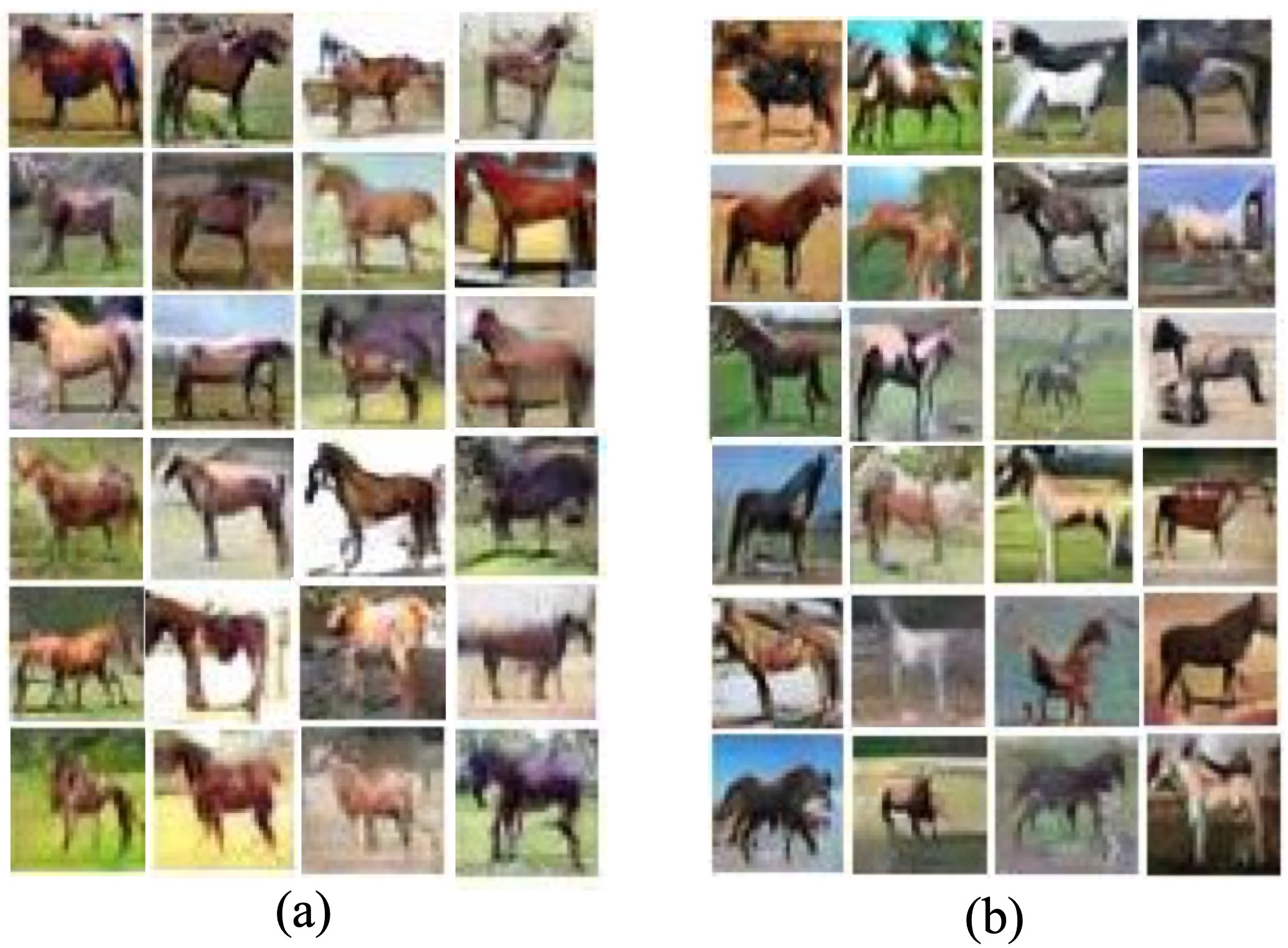}
\caption{ \small
Highlighting Diversity. 
We underscore the significance of diversity in image synthesis. The left column (a) and right column (b) display horse label images generated using the CFG and Li-CFG methods trained on the CIFAR-10 dataset, respectively.
\label{diversity}}
\vspace{-0.2in}
\end{wrapfigure}
\section{Introduction}\label{Introduction:main}
GANs are designed to sample a random variable $z$ from a known distribution $p_z$, approximating the underlying data distribution $p_*$. This learning process is modeled as a minimax game where the generator and discriminator are iteratively optimized, as introduced by~\citet{goodfellow2014generative}. The generator produces samples that mimic the true data distribution, while the discriminator distinguishes between the generated data and real data samples.
Despite numerous remarkable efforts that have been made~\cite{arjovsky2017wasserstein, gulrajani2017improved}, the GANs learning still suffers from training instability and mode collapse.  

{Recently, Composite Functional Gradient Learning (CFG), as proposed in~\citet{johnson2018composite}, has gained attention. CFG utilizes a strong discriminator and functional gradient learning for the generator, leading to convergent GAN learning theoretically and empirically.}
However, we have observed that there are still various hyper-parameters in CFG that may significantly impact the GAN learning process. Despite the advancements in stability, one still needs to carefully set these hyper-parameters to ensure a successful and well-trained GAN. Properly tuning these parameters remains an essential aspect of achieving optimal performance in GAN training. 
This issue hinders the widespread adoption of the CFG method for training GAN in real-world and large-scale datasets. {One of the most effective mechanisms for addressing this issue for GAN training is the Lipschitz constraint, with the $0$-centered gradient penalty introduced by ~\citet{mescheder2018training} and the $1$-centered gradient penalty introduced by~\citet{gulrajani2017improved} being among the most well-known.}
{Empirically, stable training of a GAN can result in more diverse synthesis results. However, the theoretical foundation for stable GAN training using the Lipschitz gradient penalty and generating diverse synthesis samples is still unclear. It is therefore important to develop a new theoretical framework that can account for the Lipschitz constraint of the discriminator and the diversity of synthesis samples.}

To address these challenges, we propose Lipschitz Constrained Functional Gradient GANs Learning (Li-CFG), an improved version of CFG. 
We present a comparative analysis of our Li-CFG and CFG methods in Fig.~\ref{motivate} and Example.~\ref{ex:example}. 
Additionally, we emphasize the importance of diversity in synthetic samples through Fig.~\ref{diversity}. Remarkably, as shown in Fig.~\ref{diversity}, the horse images generated by Li-CFG display a more diverse range of gestures, colors, and textures in comparison to those produced by the CFG method. When synthetic samples lack diversity, they fail to capture essential image characteristics. While recent generative models have made notable progress in enhancing diversity, there remains a gap in providing comprehensive theoretical explanations for certain aspects.
\begin{figure*}
\centering
\setlength{\abovecaptionskip}{-1.4cm}
\includegraphics[width=1\textwidth]{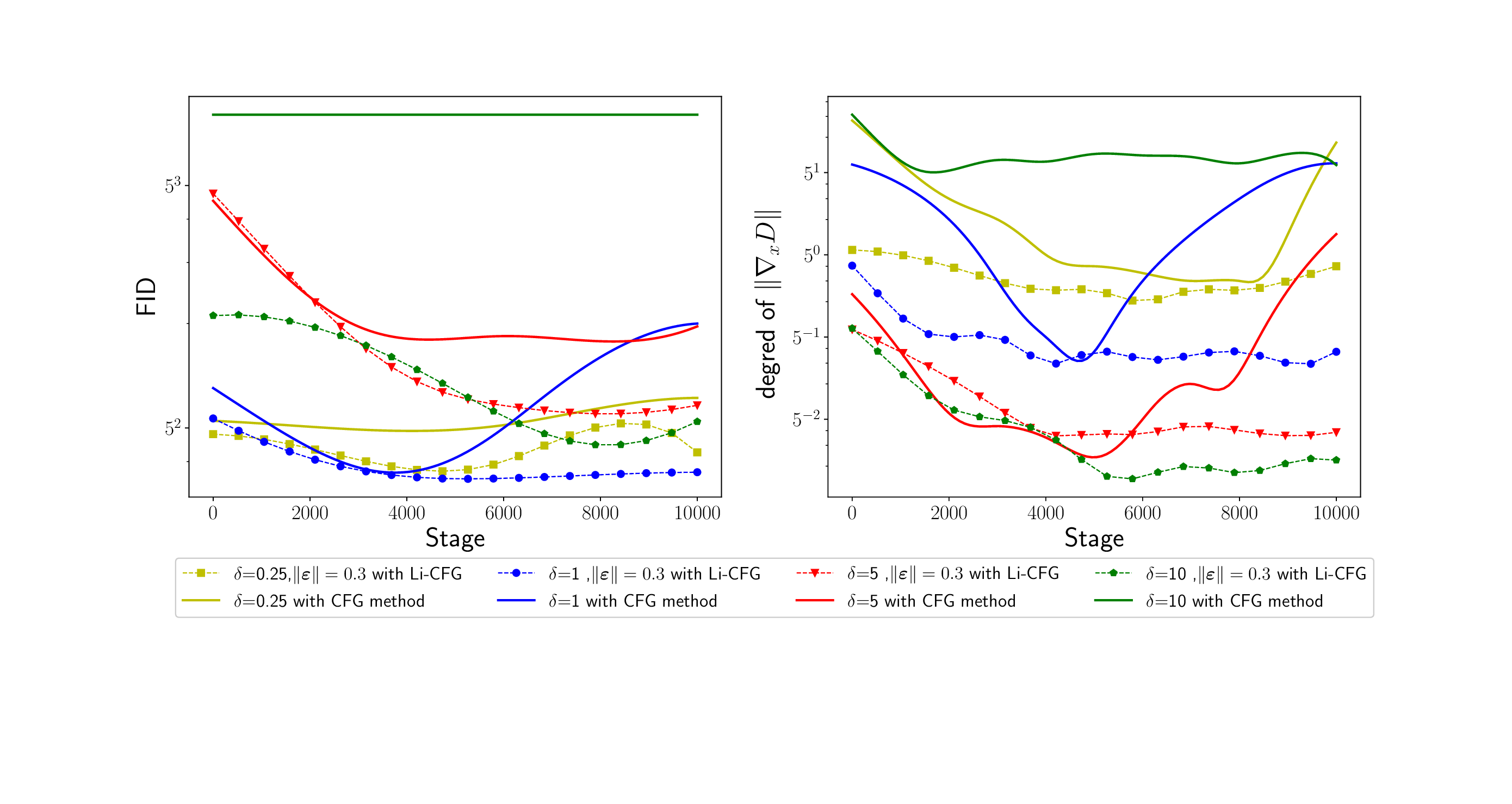}
\caption{ \small
CFG method V.S Li-CFG. The left and right figures show the results of the FID and the norm of the discriminator gradient $\|\nabla_{\boldsymbol x} D(\boldsymbol x)\|$ for the CFG method and Li-CFG with different values of the hyper-parameter $\delta(\boldsymbol x)$, respectively. 
{FID is a metric that measures the diversity between synthetic samples and real samples. A lower score is better. More information about FID is present in Section \ref{FID-explain}. The hyper-parameter $\delta(\boldsymbol x)$ is important as it controls the gradient magnitude for the CFG method, which is defined in Eq.~(\ref{eq:delta_x}).}
Solid and dashed lines of the same color in both figures indicate FID and $\|\nabla_{\boldsymbol x} D(\boldsymbol x)\|$ result of CFG method and Li-CFG with the same $\delta(\boldsymbol x)$.  \label{motivate}}
\vspace{-0.1in}
\end{figure*}

\begin{exmp}
  Fig.~\ref{motivate} indicates the idea that too large degree of $\|\nabla_{\boldsymbol x} D(\boldsymbol x)\|$ leads to an excessively small neighborhood of the latent vector which causes an untrained model, a too-small degree of $\|\nabla_{\boldsymbol x} D(\boldsymbol x)\|$
lead to an overly large neighborhood of the latent vectors which cause a worse diversity, i.e., the blue dashed lines with a reasonable degree of $\|\nabla_{\boldsymbol x} D(\boldsymbol x)\|$ lead to the best FID compared to the results of other colors, the green solid lines with a too large degree of $\|\nabla_{\boldsymbol x} D(\boldsymbol x)\|$ lead to an untrained result and the red dashed lines with a too small degree of $\|\nabla_{\boldsymbol x} D(\boldsymbol x)\|$ lead to a worse diversity.
Training results with different $\delta(\boldsymbol x)$ of CFG method and Li-CFG, illustrating that FID result of CFG with different $\delta(\boldsymbol x)$ varying dramatically due to the unstable change of $\|\nabla_{\boldsymbol x} D(\boldsymbol x)\|$. However, the FID result of our Li-CFG with different $\delta(\boldsymbol x)$ varying more stable due to the smooth
$\|\nabla_{\boldsymbol x} D(\boldsymbol x)\|$
thanks to the gradient penalty. By controlling the degree of $\|\nabla_{\boldsymbol x} D(\boldsymbol x)\|$ via changing the  $\boldsymbol\varepsilon$ value of our $\boldsymbol\varepsilon$-centered gradient penalty, we can adjust the neighborhood size of the latent vector and consequently influence the degree of diversity of synthetic samples.\label{ex:example}
\end{exmp}
\textbf{Our key insight is the introduction of Lipschitz continuity, a robust form of uniform continuity, into CFG. 
This provides a theoretical basis showing that synthetic sample diversity can be enhanced by reducing the latent vector neighborhood size through a discriminator constraint.}
{For simplicity and clarity, we will use latent N-size to denote the \textbf{n}eighborhood \textbf{s}ize of the \textbf{l}atent vector and constraint to refer to the discriminator constraint for the rest of the article.}
First, Li-CFG integrates the Lipschitz constraint into CFG to tackle instability in training under dynamic theory
Second, we establish a theoretical link between the discriminator gradient norm and the latent N-size.
By increasing the discriminator gradient norm, the latent N-size is reduced, thus enhancing the diversity of synthetic samples.
Lastly, to efficiently adjust the discriminator gradient norm, we introduce a novel Lipschitz constraint mechanism, the $\boldsymbol\varepsilon$-centered gradient penalty.
This mechanism enables fine-tuning of the latent vector neighborhood size by varying the hyper-parameter $\boldsymbol\varepsilon$. 
Through this approach, we aim to achieve more effective control over the discriminator gradient norm and further improve the diversity of generated samples.

We summarize the key contributions of this work as follows: 
(1) We introduce a novel Lipschitz constraint to CFG, the $\boldsymbol\varepsilon$-centered gradient penalty, addressing the hyperparameter sensitivity in regression-based GAN training. 
Our Li-CFG method enables stable GAN training, producing superior results compared to traditional CFG. 
(2) We present a new perspective on analyzing the relationship between the discriminator constraint and synthetic sample diversity.
To the best of our knowledge, we are the first to explore this relationship, demonstrating that our $\boldsymbol\varepsilon$-centered gradient penalty allows for effective control of diversity during training.
(3) Our empirical studies highlight the superiority of Li-CFG over CFG across a variety of datasets, including synthetic and real-world data such as MNIST, CIFAR10, LSUN, and ImageNet. 
In an ablation study, we show a trade-off between diversity and model trainability.
Additionally, we demonstrate the generalizability of our $\boldsymbol\varepsilon$-centered gradient penalty across multiple GAN models, achieving better results compared to existing gradient penalties.

\section{Preliminary}
{
To enhance readability, we provide definitions for frequently used mathematical symbols in our theory. For CFG and dynamic theory, the symbols will be explained in the respective sections.
}

\noindent \textbf{Notation.}
{
$G_\theta(z): \mathcal{Z} \rightarrow \mathcal{Y}$ represents a generator model that maps an element of the latent space $\mathcal{Z}$ to the image space $\mathcal{Y}$, where the parameter of generator is denoted by $\theta$.
Here $z$ is a sample drawn from a low dimensional latent distribution $p_z$.  
We will use $G_\theta$ and $G$ interchangeably to refer to the generator throughout this article. 
}

{
$D_\psi: \mathcal{Y} \rightarrow \mathcal{C}$ represents a discriminator model that maps elements from the image space $\mathcal{Y}$ to the probability of belonging to real data distribution or the generated data distribution, where the parameter of generator is denoted by $\psi$.
$\mathcal{Y}$ is a sample drawn from real data distribution $p_*$. $\mathcal{C}$ consists of the scale value obtained from the outputs of the discriminator.
The symbol $D$ in the CFG method, along with the notation $D_{\psi}$ used in both the gradient penalty and our neighborhood theory, all represent the discriminator.}

{
We use $R$ to denote the gradient penalty. 
Furthermore, we use $R_{1}$ to denote the $1$-centered Gradient Penalty,  $R_{0}$ to denote the $0$-centered Gradient Penalty and $R_{\boldsymbol\varepsilon}$ to denote the $\boldsymbol\varepsilon$-centered Gradient Penalty. $r_{R}$ stands for the latent N-size of the corresponding gradient penalty.
The symbol $\hat{\epsilon}$ in the definition of our neighborhood method represents a small quantity in image space. It is distinct from the symbol $\boldsymbol\varepsilon$ used in our $\boldsymbol\varepsilon$-centered GP method.}

\noindent \textbf{Composite Functional Gradient GANs.}
{We follow the definition in CFG~\citep{johnson2019framework}. The CFG employs discriminator works as a logistic regression to differentiate real samples from synthetic samples. Meanwhile, it employs the functional compositions gradient to learn the generator as the following form
\begin{gather}\small
   \begin{aligned}
  G_{m}( \boldsymbol{z})=G_{m-1}( \boldsymbol{z})+\eta_{m}g_m(G_{m-1}( \boldsymbol{z})),(m=1...,M) \label{eq:generator}
\end{aligned} 
\end{gather} 
to obtain $G(\boldsymbol{z})=G_{M}(\boldsymbol{z})$.
The $M$ represents the number of steps in the generator used to approximate the distribution of real data samples. Each $g_m$ is a function to be estimated from data.} {$g_m$ is a residual which gradually move the generated samples of $m-1$ step  $G_{m-1}( \boldsymbol{z})$ towards the $m$ step $G_{m}( \boldsymbol{z})$.
 $g_m$ guarantees that the distance between the latent distribution $p_z$ and the real data distribution $p_*$ will gradually decrease until it reaches zero, }
{and the $\eta_m$ is a small step size.}

{To simplify the analysis of this problem, we first transform the discrete M into the continuous M.
First, we transform the Eq.~(\ref{eq:generator}) into $
G_{m+\delta}(Z)=G_m(Z)+\delta g_m\left(G_m(Z)\right)$ by setting $\eta_m=\delta$, where $\delta$ is a small time step. 
By letting $\delta \rightarrow 0$, we have a generator that evolves continuously in time $m$ that satisfies an ordinary differential equation
$$
\frac{d\left(G_m(Z)\right)}{d m}=g_m\left(G_m(Z)\right).
$$}

{
The goal is to learn $g_m: \mathbb{R}^k \rightarrow \mathbb{R}^k$ from data so that the probability density $p_m$ of $G_m(\boldsymbol{z})$, which continuously evolves by Eq.~(\ref{eq:generator}), becomes close to the density $p_*$ of real data as $m$ continuously increases. 
To measure the ‘closeness’, we use
$L$ denotes a distance measure between two distributions:}

{
$$
L(p_m)=\int \ell\left(p_*(x), p_m(x)\right) d x ,
$$
where $\ell: R^2 \rightarrow R$ is a pre-defined function so that $L$ satisfies $L(p_m)=0$ if and only if $p_m=p_*$ and $L(p) \geq 0$ for any probability density function $p$.
}

{
From the above equation, we will derive the choice of $g_m(\cdot)$ that guarantees that transformation Eq.~(\ref{eq:generator}) can always reduce $L(\cdot)$.
Let $p_m$ be the probability density of random variable $G_t(\boldsymbol{z})$. Let $\ell_2^{\prime}\left(\rho_*, \rho\right)=$ $\partial \ell\left(\rho_*, \rho\right) / \partial \rho$. Then we have
$$
\frac{d L\left(p_m\right)}{d m}=\int p_m(x) \nabla_x \ell_2^{\prime}\left(p_*(x), p_m(x)\right) \cdot g_m(x) d x.
$$
}

{
{With this definition of $\frac{d L\left(p_m\right)}{d m}$, we aim to keep that $\frac{d L\left(p_m\right)}{d m}$ is negative, so that the distance $L$ decreases. To achieve this goal, we choose $g_m(x)$ to be:}
\begin{gather}
\begin{aligned}
  g_m(\boldsymbol{x})=-s_m(\boldsymbol{x}) \phi_0\left(\nabla_x \ell_2^{\prime}\left(p_*(\boldsymbol{x}), p_m(\boldsymbol{x})\right)\right),
\end{aligned}
\end{gather}
where $s_m(x)>0$ is an arbitrary scaling factor. $\phi_0(u)$ is a vector function such that $\phi(u)=u \cdot \phi_0(u) \geq 0$ and $\phi(u)=0$ if and only if $u=0$. Here are two examples: $(\phi_0(u)=u, \phi(u)=\|u\|_2^2)$ and $(\phi_0(u)=\operatorname{sign}(u), \phi(u)=\|u\|_1)$.
} 

{With this choice of $g_m(x)$, we obtain
\begin{gather}
\begin{aligned}\label{eq:theorem 2.1}
\frac{d L\left(p_m\right)}{d m}=-\int s_m(x) p_m(x) \phi\left(\nabla_x \ell_2^{\prime}\left(p_*(x), p_m(x)\right)\right) d x \leq 0,
\end{aligned}
\end{gather}
that is, the distance $L$ is guaranteed to decrease unless the equality holds. Moreover, this implies that we have $\lim _{m \rightarrow \infty} \int s_m(x) p_m(x) \phi\left(\nabla_x \ell_2^{\prime}\left(p_*(x), p_m(x)\right)\right) d x=0$. (Otherwise, $L\left(p_m\right)$ would keep going down and become negative as $m$ increases, but $L\left(p_m\right) \geq 0$ by definition.)
For simplicity, we omit the subscript $m$ in the following empirical settings.}

{
Let us consider a case where the distance measure $L(\cdot)$ is an $f$-divergence. With a convex function $f: \mathbb{R}^{+} \rightarrow \mathbb{R}$ such that $f(1)=0$ and that $f$ is strictly convex at $1, L\left(p_m\right)$ defined by
$$
L\left(p_m\right)=\int p_*(x) f\left(r_m(x)\right) d x \text { where } r_m(x)=\frac{p_m(x)}{p_*(x)}
$$
is called $f$-divergence. Here we focus on a special case where $f$ is twice differentiable and strongly convex so that the second order derivative of $f$, denoted here by $f^{\prime \prime}$, is always positive.
For instance, when we consider the KL divergence, $f$ can be represented by $-\ln x$, in which case $f^{\prime}=-1/x$ and $f^{\prime\prime}=1/x^2$. On the other hand, if we consider the reverse KL divergence, $f$ can be represented as $x\ln{x}$, in which case $f^{\prime}=\ln{x}+1$ and $f^{\prime\prime}=1/x$.}

{
{With this definition of function $f$ and $L$ as $f$-divergence setting, the value of $g_m(x)$ should to be}
\begin{gather}
\begin{aligned}
  g_m(\boldsymbol{x})&=-s_m(\boldsymbol{x}) \phi_0\left(\nabla_x \ell_2^{\prime}\left(p_*(\boldsymbol{x}), p_m(\boldsymbol{x})\right)\right) \\
  &=-s_m(\boldsymbol{x})\nabla_x \ell_2^{\prime}\left(p_*(\boldsymbol{x}), p_m(\boldsymbol{x})\right)\\
  &=-s_m(\boldsymbol{x})f^{\prime \prime}\left(r_m(\boldsymbol{x})\right) \nabla r_m(\boldsymbol{x})\\
  &\approx s_m(\boldsymbol{x})f^{\prime \prime}\left(\tilde{r}_m(\boldsymbol{x})\right) \tilde{r}_m(\boldsymbol{x})\nabla_x D(\boldsymbol{x}),\label{eq:empirical}
\end{aligned}
\end{gather}
where $s_m(x)>0$ is an arbitrary scaling factor, $\ell_2\left(\rho_*, \rho\right)=\rho_* f\left(\rho / \rho_*\right)$, $\nabla_x \ell_2^{\prime}\left(p_*(x), p_m(x)\right)=f^{\prime \prime}\left(r_m(x)\right) \nabla r_m(x)$,  $f^{\prime\prime}=1/x^2$ when $f$ is KL-divergence and $r_m(\boldsymbol{x})=\exp(-D(\boldsymbol{x}))\approx p_m(x)/p_*(x)=\tilde{r}_m(\boldsymbol{x})$ when $D(x)\approx \ln \frac{p_*(x)}{p_m(x)}$, which is the analytic solution of the CFG discriminator. 
} 

{
Empirically,
we define the function in Eq.~(\ref{eq:empirical}) as $g( \boldsymbol{x})=\delta(\boldsymbol{x}) \nabla_{\boldsymbol x}  D( \boldsymbol{x})$. The $\delta(\boldsymbol{x}) $ can be computed as
\begin{gather}
\begin{aligned}
\delta(\boldsymbol{x})=s_m( \boldsymbol{x})\tilde{r}_m( \boldsymbol{x})f''(\tilde{r}_m( \boldsymbol{x})), \label{eq:delta_x} 
\end{aligned}
\end{gather}
where we have an arbitrary scaling factor $s_m( \boldsymbol{x})$, a KL-divergence function $f=-\ln  \boldsymbol{x}$, $f^{\prime\prime}=1/x^2$ and $\tilde{r}_m( \boldsymbol{x})=\exp(-D( \boldsymbol{x}))$. Since $ \tilde{r}_m(\boldsymbol{x})f^{\prime\prime}(x)>0$, it can be absorbed into the $s_m(x)$. So the value of $\delta(\boldsymbol{x})$ is always greater than 0. For simplicity, we directly regard $\delta(\boldsymbol{x})$  as the scaling factor and set it to a fixed value as a hyper-parameter.  We also demonstrate the results of different $\delta(\boldsymbol x)$ values in the Fig.~\ref{motivate}.}

\noindent \textbf{Gradient Penalty for GANs.}
{Let us work with the most commonly used  WGAN-GP~\citep{gulrajani2017improved}. 
Its regularization term is commonly referred to as
\begin{gather}
R(\theta, \psi)=\frac{\gamma}{2} \mathrm{E}_{\hat{\boldsymbol x}}\left(\left\|\nabla_{\boldsymbol x} D_{\psi}(\hat{\boldsymbol x})\right\|-g_{0}\right)^{2},
\end{gather}
\noindent 
where $\hat{\boldsymbol x}$ is sampled uniformly on the line segment between two random points vector $\boldsymbol x_{1} \sim p_{\theta}\left(\boldsymbol x_{1}\right), \boldsymbol x_{2} \sim p_{\mathcal{D}}\left(\boldsymbol x_{2}\right)$. 
The notation of $R(\theta, \psi)$ presents the regularization term of the generator with weight $\theta$ and the discriminator with weight $\psi$.
The notion of $\gamma$ is a coefficient that controls the magnitude of regularization. {The value of $g_0$ is always empirically set to 1.} We call it $g_0$-centered GP or $1$-centered GP.}
{
The other solution is referred to as the $0$-centered GP method, proposed by~\citet{roth2017stabilizing} and ~\citet{mescheder2018training}. The formulation is 
\begin{gather}
\begin{aligned}
R(\theta, \psi) =\frac{\gamma}{2} \mathrm{E}_{(\hat{\boldsymbol x})}\left[\left\|\nabla_{\hat{\boldsymbol x}} D_{\psi}(\hat{\boldsymbol x})\right\|^{2}\right],
\end{aligned}
\end{gather}
where $\hat{\boldsymbol x}$ is sampled uniformly on the line segment between two random points vector $\boldsymbol x_{1} \sim p_{\theta}\left(\boldsymbol x_{1}\right), \boldsymbol x_{2} \sim p_{\mathcal{D}}\left(\boldsymbol x_{2}\right)$ like WGAN-GP.}

{
When integrating the gradient penalty into the GAN model, it is expressed as a regularization term following the loss function of the GAN. The formulation is represented as
\begin{gather}\label{general-d}
\mathop{\min_{G_{\theta}}}\mathop{\max_{D_{\psi}}} L_{GAN}(G_{\theta}, D_{\psi})=\mathbb{E}_{\boldsymbol z\sim  p_{z}}\left[f\left(D_{\psi}\left(G_{\theta}(\boldsymbol z)\right)\right)\right]+ 
\mathbb{E}_{\boldsymbol x\sim p_{*}}\left[f\left(-D_{\psi}(\boldsymbol x)\right)\right]+\lambda R(\theta, \psi),    
\end{gather}
where $\lambda$ controls the importance of the regularization, $R(\theta, \psi)$ denotes the gradient penalty term, $f$ represents a general function from \cite{gulrajani2017improved} with different forms in various GAN models. This specific type of loss function indicates that the gradient penalty can influence the variability of the discriminator's gradient, thereby impacting the generation of synthetic samples by the generator.
}

\noindent \textbf{Dynamic Theory for CFG.} 
\label{Dynamic for icfg}
We incorporate dynamic theory to gain a theoretical understanding of the equivalence between the CFG method and the common GAN theory.
However, the CFG method also faces challenges related to unstable training and the lack of local convergence near the Nash-equilibrium point.
A summary of the key outcomes is provided in Appendix~\ref{adtftcm}.

\section{Methodology}\label{gp}
\noindent \textbf{Overview.}
The main structure of this section is organized as follows:
In Section~\ref{relation between}, we 
present the definition of the latent N-size with the gradient penalty.
Guided by this, in Section~\ref{our theory}, we introduce our regularization termed as $\boldsymbol\varepsilon$-centered gradient penalty.
In Section~\ref{relation theory}, 
we present our main theorem to demonstrate the connection between the latent N-size and the various gradient penalties.
Proofs of these theorems are provided in Appendix~\ref{appendix proof}.

\subsection{Latent N-size with gradient penalty} 
\label{relation between}
In this section, our objective is to reveal the interrelation between the latent N-size and the gradient penalty. 
First, we will define the latent N-size, along with an intuitive explanation. 
Then, we will explain three basic definitions of the latent N-size by showing how it relates to the diversity of synthetic samples. 
Additionally, building on the previous step, we will discuss the expansion of the latent N-size by including the gradient penalty.

\noindent \textbf{Latent N-size.}
We present the definition of latent N-size, which forms the basis for the subsequent theory.
\begin{definition}[Latent Neighborhood Size]\label{definition-radius}
Let $\boldsymbol{z}_1$, $\boldsymbol{z}_2$ be two samples in the latent space. Suppose $\boldsymbol z_1$ is attracted to the mode $\mathcal{M}_{i}$ by $\hat{\epsilon}$, then there exists a neighborhood $\mathcal{N}_r\left(\boldsymbol{z}_1\right)$ of $\boldsymbol{z}_1$ such that $\boldsymbol{z}_2$ is distracted to $\mathcal{M}_{i}$ by $(\hat\epsilon/ 2-2\alpha)$, for all $\boldsymbol{z}_2 \in \mathcal{N}_r\left(\boldsymbol{z}_1\right)$. The size of $\mathcal{N}_r\left(\boldsymbol{z}_1\right)$ can be arbitrarily large but is bounded by an open ball of radius $r$. The $r$ is defined as
$$
  r=\hat\epsilon \cdot\left(2 \inf _{\boldsymbol{z}}\left\{\left(\frac{\left\|G_{\theta_t}\left(\boldsymbol z_1\right)-G_{\theta_t}(\boldsymbol z)\right\|}{\left\|\boldsymbol z_1-\boldsymbol z\right\|}+ \frac{\left\|G_{\theta_{t+1}}\left(\boldsymbol z_1\right)-G_{\theta_{t+1}}(\boldsymbol{z})\right\|}{\left\|\boldsymbol z_1-\boldsymbol{z}\right\|}\right)\right\}\right)^{-1},$$
\end{definition}
where the mode $\mathcal{M}$, $attracted$ and $distracted$ are defined in Definition.~\ref{definition1},~\ref{definition2},~\ref{definition3}, respectively.

According to this definition, the radius $r$ is inversely proportional to the discrepancy between the preceding and subsequent outputs of the generator, given a similar latent vector $\boldsymbol z$. 
A large value of $r$ results in a small difference between the previous and subsequent generator outputs, leading to mode collapse. 
Conversely, a small value of $r$ leads to a large difference, resulting in diverse synthesis.

\noindent \textbf{Latent N-size and the diversity.}
In this paragraph, we discuss the relationship between the latent N-size and
the gradient penalty.
To begin, let's discuss the above three definitions, the mode $\mathcal{M}$, $attracted$, and $distracted$, which play a crucial role in the mode collapse phenomenon~\cite{yang2019diversity}.  

Additionally, we propose implementing the gradient penalty in the discriminator to adjust the latent N-size and alleviate the mode collapse phenomenon.
If the neighborhood size is too large, a significant portion of the latent space vectors would be attracted to this specific image mode, leading to limited diversity in the synthetic samples. 
Conversely, if the neighborhood size is too small and contains only one vector, the latent space vectors cannot adequately cover all the modes in the image space.

The intuition behind our idea is illustrated in Fig.~\ref{Fig.mainidea}.
The top and bottom rows indicate the latent N-size for the discriminator with or without gradient penalty, respectively. 
The yellow line at the top indicates that a different latent vector is being drawn towards a new mode, distinct from $z_1$.
The blue line at the bottom row indicates that the same latent vector in the neighborhood of $\boldsymbol z_1$ is attracted to the same mode as $\boldsymbol z_1$. 
The top row represents improved sample diversity, while the bottom row indicates a mode collapse phenomenon.
\begin{wrapfigure}{r}{7cm}
\centering 
\includegraphics[width=0.5\textwidth]{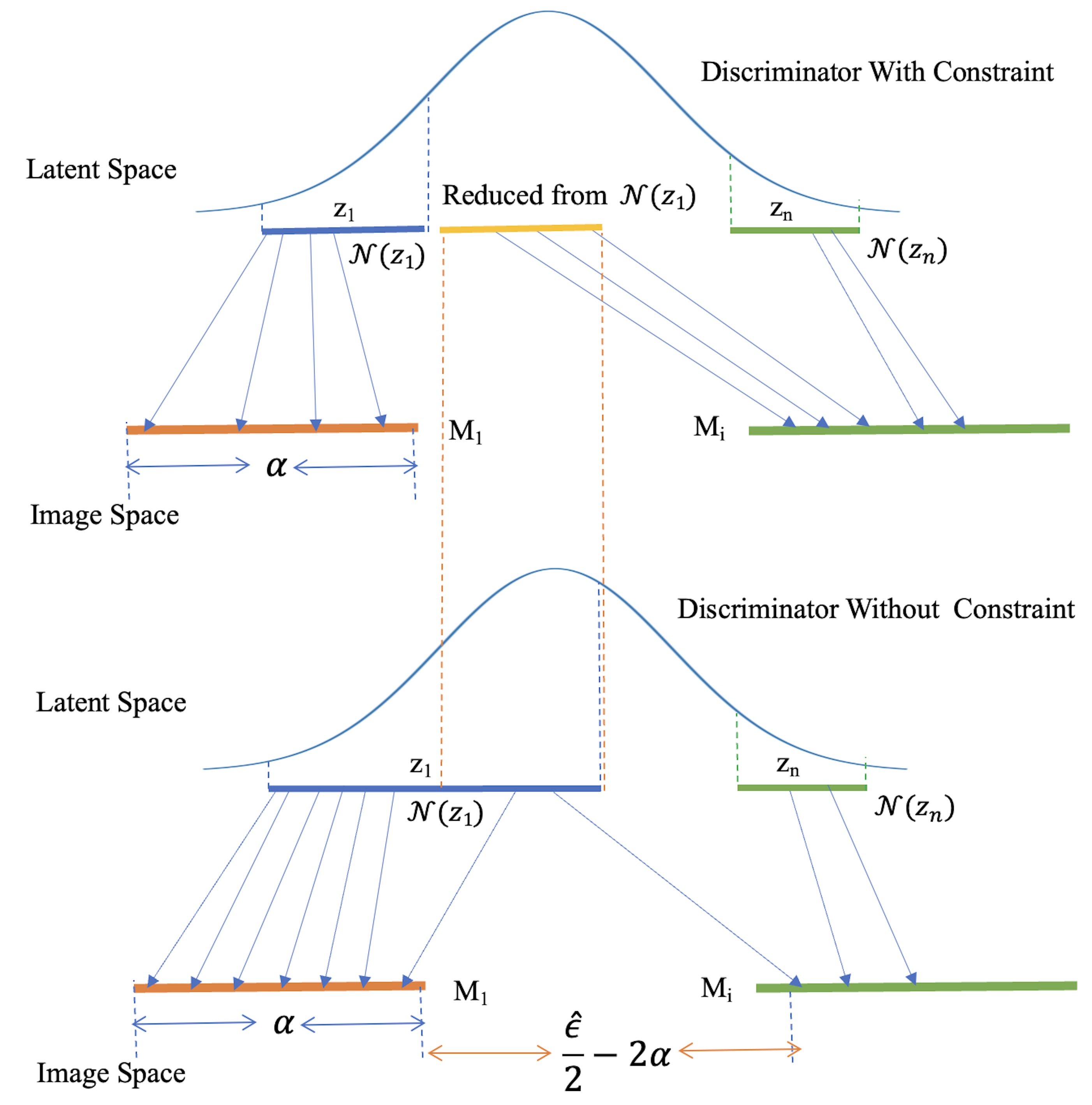}
\caption{{Main idea of the relationship between latent N-size and constraint .}\label{Fig.mainidea}}
\vspace{-0.3in}
\end{wrapfigure}

\begin{definition}[Modes in Image Space]\label{definition1}
There exist some modes $\mathcal{M}$ cover the image space $\mathcal{Y}$.
Mode $\mathcal{M}_{i} $ is a subset of $\mathcal{Y}$ satisfying $\max _{\boldsymbol{y_{k,j}} \in \mathcal{M}_{i}}\left\|\boldsymbol{y_{k}}-\boldsymbol{y_{j}}\right\|<\alpha$ and  
$\min_{\boldsymbol{y_{k}} \in \mathcal{M}_{i},\boldsymbol{y_{m}} \not\in \mathcal{M}_{i}}\alpha<\left\|\boldsymbol{y_{m}}-\boldsymbol{y_{k}}\right\|<2\alpha$, where $\boldsymbol{y_{k}}$ and $\boldsymbol{y_{j}}$ belong to the same mode $\mathcal{M}_{i}$,  $\boldsymbol{y_{m}}$ and $\boldsymbol{y_{k}}$ belong to different modes $\mathcal{M}_{i}$, and $\alpha>0$.
\end{definition}

Definition~\ref{definition1} asserts that images within the same mode exhibit minimal differences.
Conversely, images belonging to different modes exhibit more significant differences.
\begin{definition}[Modes Attracted]\label{definition2}
Let $\boldsymbol{z}_1$ be a sample in latent space, we say $\boldsymbol{z}_1$ is attracted to a mode $\mathcal{M}_{i}$ by $\hat{\epsilon}$ from a gradient step if $\left\|\boldsymbol{y_{k}}-G_{\theta_{t+1}}\left(\boldsymbol{z}_1\right)\right\|+\hat{\epsilon}<\left\|\boldsymbol{y_{k}}-G_{\theta_t}\left(\boldsymbol{z}_1\right)\right\|$, where $\boldsymbol{y_{k}} \in \mathcal{M}_{i}$ is an image in a mode $\mathcal{M}_{i}$, $\hat{\epsilon}$ denotes a small quantity, {$\theta_t$ and $\theta_{t+1}$} are the generator parameters before and after the gradient updates respectively.   
\end{definition}

Definition~\ref{definition2} establishes that a latent vector $\boldsymbol{z}_1$ is attracted to a specific mode $\mathcal{M}_{i}$ in the latent space.
As training progresses, the output corresponding to $\boldsymbol{z}_1$ will exhibit only minor deviations from images within that mode.

\begin{definition}[Modes Distracted]\label{definition3}
Let $\boldsymbol{z}_2$ be a sample in latent space, we say $\boldsymbol{z}_2$ is distracted from a mode $\mathcal{M}_{i}$ by $(\alpha+\hat{\epsilon})/2$ from a gradient step if $\left\|\boldsymbol{y_{m}}-G_{\theta_{t+1}}\left(\boldsymbol{z}_2\right)\right\|+(\hat\epsilon/ 2-2\alpha)<\left\|\boldsymbol{y_{k}}-G_{\theta_t}\left(\boldsymbol{z}_2\right)\right\|$, where $\boldsymbol{y_{k}} \in \mathcal{M}_{i}$ is an image in a mode $\mathcal{M}_{i}$, $\boldsymbol{y_{m}} \not\in \mathcal{M}_{i}$ is an image from other modes,  $\alpha$ keeps the same meaning as in {
Definition~\ref{definition1}, $\theta_t$ and $\theta_{t+1}$} 
are the generator parameters before and after the gradient updates respectively.
\end{definition}
Definition~\ref{definition3} explains that when a vector $\boldsymbol{z}_2$ close to $\boldsymbol{z}_1$ is drawn towards a particular mode in the image space, it is less likely to be attracted by a different mode. 
Therefore, it is crucial to decrease the latent N-size, as this encourages latent vectors to be attracted to various modes within the image space.

\noindent \textbf{Latent N-size with gradient penalty.}
We demonstrate the relationship between the latent N-size and the gradient penalty. 
According to Proposition~\ref{relation n d}, as $\|\nabla_{\boldsymbol x} D(\boldsymbol{x})\|$ increases, the latent N-size decreases, and vice versa.

\begin{proposition}\label{relation n d}
$\mathcal{N}_r\left(\boldsymbol{z}_1\right)$ can be defined with discriminator gradient penalty as follows: 
$$\small
r=\hat{\epsilon}\cdot\left(2 \inf _{\boldsymbol{z}}\left\{ \left(\frac{2\left\|G_{\theta_t}\left(\boldsymbol{z}_1\right)-G_{\theta_t}(\boldsymbol{z})\right\|+\eta_m \delta(\boldsymbol x) \sum\limits_{m=1}^{N}\left(\|\nabla_{\boldsymbol x} D_{m}(\mathcal{Y}_2)\|+\|\nabla_{\boldsymbol x} D_{m}(\mathcal{Y})\|\right)}{\left\|\boldsymbol{z}_1-\boldsymbol{z}\right\|}\right)\right\}\right)^{-1}
$$  
, where $\|\nabla_{\boldsymbol x} D_{m}(\mathcal{Y}_2)\|=$
$\|\nabla_{\boldsymbol x} D_{m}(G_{\theta_t}(\boldsymbol{z}_2))+R\|$ and   $\|\nabla_{\boldsymbol x} D_{m}(\mathcal{Y})\|=$
$\|\nabla_{\boldsymbol x} D_{m}(G_{\theta_t}(\boldsymbol{z}))+R\|$. $R$ stands for the discriminator gradient penalty.
\end{proposition}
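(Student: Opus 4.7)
The plan is to derive the expression for $r$ in Proposition~\ref{relation n d} directly from Definition~\ref{definition-radius} by expanding the term $\|G_{\theta_{t+1}}(\boldsymbol z_1) - G_{\theta_{t+1}}(\boldsymbol z)\|$ using the CFG recursion, and then absorbing the discriminator gradient penalty into each functional gradient step. Concretely, I would treat the passage from $\theta_t$ to $\theta_{t+1}$ as the cumulative effect of $N$ functional gradient sub-steps of the form $G_m(\boldsymbol z) = G_{m-1}(\boldsymbol z) + \eta_m g_m(G_{m-1}(\boldsymbol z))$ in Eq.~(\ref{eq:generator}), where $g_m(\boldsymbol x) = \delta(\boldsymbol x) \nabla_{\boldsymbol x} D_m(\boldsymbol x)$ from Eq.~(\ref{eq:empirical}) and $D_m$ is the regularized discriminator arising from the penalized objective in Eq.~(\ref{general-d}).

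First, I would telescope over the $N$ sub-steps to write
$$G_{\theta_{t+1}}(\boldsymbol z_1) - G_{\theta_{t+1}}(\boldsymbol z) = \bigl(G_{\theta_t}(\boldsymbol z_1) - G_{\theta_t}(\boldsymbol z)\bigr) + \sum_{m=1}^{N} \eta_m \bigl[g_m(G_{m-1}(\boldsymbol z_1)) - g_m(G_{m-1}(\boldsymbol z))\bigr],$$
and then apply the triangle inequality twice: once to split off the $G_{\theta_t}$ part and once within each $g_m$ difference, yielding the upper bound
$$\|G_{\theta_{t+1}}(\boldsymbol z_1) - G_{\theta_{t+1}}(\boldsymbol z)\| \leq \|G_{\theta_t}(\boldsymbol z_1) - G_{\theta_t}(\boldsymbol z)\| + \eta_m \delta(\boldsymbol x) \sum_{m=1}^{N}\bigl(\|\nabla_{\boldsymbol x} D_m(\mathcal{Y}_2)\| + \|\nabla_{\boldsymbol x} D_m(\mathcal{Y})\|\bigr),$$
after pulling the common factor $\eta_m \delta(\boldsymbol x)$ outside the sum and using the notational identifications $\mathcal{Y}_2 = G_{\theta_t}(\boldsymbol z_2)$ and $\mathcal{Y} = G_{\theta_t}(\boldsymbol z)$ stated in the proposition, with the gradient penalty $R$ folded into the regularized discriminator gradient.

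Second, I would substitute this bound into Definition~\ref{definition-radius}. The two $\|G_{\theta_t}(\boldsymbol z_1)-G_{\theta_t}(\boldsymbol z)\|/\|\boldsymbol z_1-\boldsymbol z\|$ contributions combine to produce the factor of $2$ in the numerator, and the remaining term collects the penalized discriminator gradient contributions divided by $\|\boldsymbol z_1 - \boldsymbol z\|$. Taking the infimum over $\boldsymbol z$ and the reciprocal yields exactly the formula claimed in Proposition~\ref{relation n d}.

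The main obstacle will be justifying the shorthand $\|\nabla_{\boldsymbol x} D_m(\mathcal{Y}_2)\| = \|\nabla_{\boldsymbol x} D_m(G_{\theta_t}(\boldsymbol z_2)) + R\|$, since the regularization in Eq.~(\ref{general-d}) is added at the loss level and only indirectly modifies the gradient field that drives $g_m$; one must argue that at each CFG sub-step the effective gradient used by the generator is the penalized gradient. A secondary but important point is the treatment of the per-step hyper-parameters: pulling a single $\eta_m \delta(\boldsymbol x)$ outside the $m$-indexed sum tacitly assumes these factors are held constant across sub-steps (or are replaced by a uniform bound), and this should be stated as a simplifying assumption consistent with the empirical convention below Eq.~(\ref{eq:delta_x}) of treating $\delta(\boldsymbol x)$ as a fixed scalar.
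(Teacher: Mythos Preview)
Your proposal is correct and follows essentially the same route as the paper: the paper also postulates the penalized CFG update $G_{\theta_{t+1}}(\boldsymbol z)=G_{\theta_t}(\boldsymbol z)+\sum_{m=1}^{N}\eta_m\delta(\boldsymbol x)\bigl(\nabla_{\boldsymbol x}D_m(G_{\theta_t}(\boldsymbol z))+R\bigr)$, substitutes it into the radius expression of Definition~\ref{definition-radius}, and then applies the triangle inequality twice exactly as you describe to reach the claimed bound. The only structural difference is that the paper first re-derives Definition~\ref{definition-radius} itself (via a chain of triangle inequalities starting from $\|\boldsymbol{y_m}-G_{\theta_{t+1}}(\boldsymbol z_2)\|$ and the ``attracted'' condition) before doing the substitution, whereas you take the definition as given; and the paper evaluates every $D_m$ at $G_{\theta_t}(\boldsymbol z)$ rather than at the intermediate iterates $G_{m-1}(\boldsymbol z)$ you write in your telescoping, so your version is slightly more careful before collapsing to the same shorthand. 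Your two caveats about the meaning of ``$+R$'' inside the gradient and about pulling a single $\eta_m\delta(\boldsymbol x)$ outside the sum are well taken---the paper simply adopts both as notational conventions without further comment.
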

We show that $G_{\theta_t}$ can be iterated computed as follows: $G_{\theta_t} = G_{\theta_{t-1}}+\eta_m\delta(\boldsymbol{x})\nabla_x D_m(G_{\theta_{t-1}}(\boldsymbol{z}))$. For the sake of simplicity, we present the expansion for the $t$-th term.

From Proposition~\ref{relation n d}, it's crucial to maintain the latent N-size within a specific range to generate diverse synthetic samples. 
To achieve this, it is necessary to control the magnitude of the gradient norm with a gradient penalty.

Based on the corollary $\nabla_{\boldsymbol{x}} D(\boldsymbol{x})\leq 0$, we propose subtracting a small value $\boldsymbol{\varepsilon}$ from $\nabla_{\boldsymbol{x}} D(\boldsymbol{x})$ to control the magnitude of the gradient norm.
This will effectively enhance the gradient norm, leading to a reduction in the latent N-size.
Additionally, we propose the $\boldsymbol{\varepsilon}$-centered gradient penalty based on the above insight.

\subsection{\texorpdfstring{$\boldsymbol\varepsilon$}{-}-centered GP}\label{our theory}
We propose our $\boldsymbol\varepsilon$-centered gradient penalty in this section. We use notation $\boldsymbol\varepsilon$ in our penalty name and equation to differ from the hyper-parameter $\varepsilon'$.
 The $\boldsymbol{\varepsilon}$-centered GP is 
\begin{gather}
R(\theta, \psi)=\frac{\gamma}{2} \mathrm{E}_{\hat{\boldsymbol x}}\left(\left\|\nabla_{\boldsymbol x} D_{\psi}(\hat{\boldsymbol x})-\boldsymbol \varepsilon\right\|\right)^{2},\label{epsilon-center-GP}
\end{gather}
 where $\boldsymbol{\varepsilon}$ is a vector such that $\Vert \boldsymbol\varepsilon \Vert_{2}= \varepsilon'$
 with $\varepsilon'=\sqrt{C\cdot N^2\cdot \boldsymbol\varepsilon^{2}}$,  $N$ and $C$ are dimensions and channels of the real data respectively.
$\hat{\boldsymbol x}$ is sampled uniformly on the line segment between two random points vector $\boldsymbol x_{1} \sim p_{\theta}\left(\boldsymbol x_{1}\right), \boldsymbol x_{2} \sim p_{\mathcal{D}}\left(\boldsymbol x_{2}\right)$.
Combining the corollary $\nabla_{\boldsymbol{x}} D(\boldsymbol{x})\leq 0 $, our $\boldsymbol{\varepsilon}$-centered GP increases the $\|\nabla_{\boldsymbol{x}} D(\boldsymbol{x})\|$ as to achieve a better latent N-size which other two gradient penalty behaviors worse. 

When training the GAN model with the loss function  Eq.~(\ref{general-d}) and Eq.~(\ref{epsilon-center-GP}), our approach will control the gradient of the discriminator and result in a diversity of synthesized samples. 

\subsection{Latent N-size with different gradient penalties} 
 \label{relation theory}
In this section, we combine Definition~\ref{definition-radius}, Proposition~\ref{relation n d}, Lemma~\ref{lemma compare}, and our $\varepsilon$-centered gradient penalty to prove the main theorem.
First, we establish the relationship among the latent N-size with three gradient penalties in the following Lemma.
\begin{lemma}\label{lemma compare}
The norms of the three Gradient Penalties, which determine the latent N-size, are defined as follows: $\|R_1\|$ =  $\|\left(\left\|\nabla_{\boldsymbol x} D_{m}(G_{\theta_t}(\boldsymbol z))\right\|-g_0\right)\|$, $\|R_0\|=\|\left(\left\|\nabla_{\boldsymbol x} D_{m}(G_{\theta_t}(\boldsymbol z))\right\|\right)\| $, $ \|R_{\boldsymbol\varepsilon}\|$=$\|\left(\left\|\nabla_{\boldsymbol x} D_{m}(G_{\theta_t}(\boldsymbol z))\right\|+\|\boldsymbol\varepsilon\|\right)\|$, 
respectively. 
The order of magnitude between the norms of three Gradient Penalty is $\|R_1\|<\|R_0\|<\|R_{\boldsymbol\varepsilon}\|$. Consequently, the relationship between the latent N-size of three Gradient Penalty is $r_{R_1}>r_{R_0}>r_{R_{\boldsymbol\varepsilon}}$. 
\end{lemma}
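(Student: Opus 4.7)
The plan is to decompose the three gradient-penalty norms into scalar expressions that share a common summand, namely $\|\nabla_{\boldsymbol x}D_m(G_{\theta_t}(\boldsymbol z))\|$, and then read off the pairwise inequalities by sign analysis. First I would recall the corollary $\nabla_{\boldsymbol x}D(\boldsymbol x)\le 0$ used just before the statement of the lemma. Because $\boldsymbol\varepsilon$ is a fixed vector with $\|\boldsymbol\varepsilon\|_2=\varepsilon'>0$ chosen so that $\nabla_{\boldsymbol x}D$ and $\boldsymbol\varepsilon$ point in opposite directions, the identity $\|\nabla_{\boldsymbol x}D_m-\boldsymbol\varepsilon\|=\|\nabla_{\boldsymbol x}D_m\|+\|\boldsymbol\varepsilon\|$ holds, which is exactly the expression given for $\|R_{\boldsymbol\varepsilon}\|$. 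For $\|R_1\|$ the scalar $|{\|\nabla_{\boldsymbol x}D_m\|-g_0}|$ appears directly, and for $\|R_0\|$ the expression is simply the nonnegative number $\|\nabla_{\boldsymbol x}D_m\|$.

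Next I would prove the strict chain $\|R_1\|<\|R_0\|<\|R_{\boldsymbol\varepsilon}\|$ in two steps. The right inequality is immediate, because $\|\boldsymbol\varepsilon\|>0$ gives $\|\nabla_{\boldsymbol x}D_m\|<\|\nabla_{\boldsymbol x}D_m\|+\|\boldsymbol\varepsilon\|$. The left inequality requires showing $|{\|\nabla_{\boldsymbol x}D_m\|-g_0}|<\|\nabla_{\boldsymbol x}D_m\|$ with $g_0=1$; this holds whenever $\|\nabla_{\boldsymbol x}D_m\|>g_0/2$, which I would justify as a mild training regime assumption (once the discriminator is nontrivial the penalty drives its gradient norm near $g_0=1$, so $\|\nabla_{\boldsymbol x}D_m\|>1/2$ is natural). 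A short case split on whether $\|\nabla_{\boldsymbol x}D_m\|\ge 1$ or $1/2<\|\nabla_{\boldsymbol x}D_m\|<1$ disposes of the absolute value.

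Finally I would invoke Proposition~\ref{relation n d}, whose radius formula makes $r$ the reciprocal of an infimum that is monotone increasing in each term of the form $\|\nabla_{\boldsymbol x}D_m(\mathcal Y)\|=\|\nabla_{\boldsymbol x}D_m(G_{\theta_t}(\boldsymbol z))+R\|$. Substituting the three penalties and using the norm ordering just established (together with the triangle inequality to absorb $R$ additively into $\|\nabla_{\boldsymbol x}D_m\|$), the denominator numerator grows strictly from $R_1$ to $R_0$ to $R_{\boldsymbol\varepsilon}$. Inverting yields $r_{R_1}>r_{R_0}>r_{R_{\boldsymbol\varepsilon}}$, completing the proof.

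The main obstacle is the left inequality $\|R_1\|<\|R_0\|$, since without a lower bound on $\|\nabla_{\boldsymbol x}D_m\|$ the subtraction of $g_0$ in $R_1$ could in principle produce a larger absolute value than $\|\nabla_{\boldsymbol x}D_m\|$ itself; I would therefore make the operating assumption $\|\nabla_{\boldsymbol x}D_m\|>g_0/2$ explicit and tie it back to the training dynamics discussed in Example~\ref{ex:example}, where only a reasonable (not arbitrarily small) gradient norm is observed during successful training. A minor secondary point is justifying the alignment assumption used to collapse $\|\nabla_{\boldsymbol x}D_m-\boldsymbol\varepsilon\|$ to an arithmetic sum; I would handle this by citing the corollary $\nabla_{\boldsymbol x}D(\boldsymbol x)\le 0$ componentwise together with the chosen sign convention for $\boldsymbol\varepsilon$ in Section~\ref{our theory}.
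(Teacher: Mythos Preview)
Your proposal follows essentially the same route as the paper: substitute the three penalties into the radius formula of Proposition~\ref{relation n d}, reduce to a comparison of the scalar quantities $\|\nabla_{\boldsymbol x}D_m\|-g_0$, $\|\nabla_{\boldsymbol x}D_m\|$, and $\|\nabla_{\boldsymbol x}D_m\|+\|\boldsymbol\varepsilon\|$, and then invert. Two small differences are worth noting. First, for the $\boldsymbol\varepsilon$-centered case the paper only invokes the triangle inequality $\|\nabla_{\boldsymbol x}D_m-\boldsymbol\varepsilon\|\le\|\nabla_{\boldsymbol x}D_m\|+\|\boldsymbol\varepsilon\|$ to obtain an upper bound on the determined part, whereas you argue for equality via the componentwise sign corollary; either suffices for the stated ordering since the comparison is ultimately between these bounds. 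Second, the paper simply asserts the left inequality $\|(\|\nabla_{\boldsymbol x}D_m\|-g_0)\|<\|\nabla_{\boldsymbol x}D_m\|$ without hypothesis, while you correctly observe that it requires $\|\nabla_{\boldsymbol x}D_m\|>g_0/2$ and make this explicit; your treatment is therefore more rigorous on this point, not a departure from the paper's strategy.
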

Combining the Proposition~\ref{relation n d}, a larger value of $\|\nabla_{\boldsymbol{x}} D(\boldsymbol{x})\|$ will lead to a smaller latent N-size, thus enhancing the diversity of synthetic samples.

\begin{theorem}\label{theorem main}
Suppose $\boldsymbol z_1$ is attracted to the mode $\mathcal{M}_{i}$ by $\hat{\epsilon}$, then there exists a neighborhood $\mathcal{N}_r\left(\boldsymbol{z}_1\right)$ of $\boldsymbol{z}_1$ such that $\boldsymbol{z}_2$ is distracted to $\mathcal{M}_{i}$ by $(\hat\epsilon/ 2-2\alpha)$, for all $\boldsymbol{z}_2 \in \mathcal{N}_r\left(\boldsymbol{z}_1\right)$. The size of $\mathcal{N}_r\left(\boldsymbol{z}_1\right)$ can be arbitrarily large but is bounded by an open ball of radius $r$ where be controlled by Gradient Penalty terms of the discriminator. The relationship between the latent N-size corresponding to the three Gradient Penalty is $r_{R_1}>r_{R_0}>r_{R_{\boldsymbol\varepsilon}}$.
\end{theorem}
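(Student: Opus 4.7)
The plan is to chain the three preceding ingredients---Definition~\ref{definition-radius}, Proposition~\ref{relation n d}, and Lemma~\ref{lemma compare}---and exploit the monotonicity of $r$ in the gradient-norm term. First, I would invoke Definition~\ref{definition-radius} to obtain the qualitative content of the theorem: whenever $\boldsymbol z_1$ is attracted to $\mathcal{M}_i$ by $\hat\epsilon$, every $\boldsymbol z_2$ lying in the open ball of radius $r$ around $\boldsymbol z_1$ is distracted from $\mathcal{M}_i$ by $(\hat\epsilon/2-2\alpha)$. This reduces the theorem to tracking how $r$ itself depends on the regularizer $R$ applied to the discriminator, so that the qualitative statement is inherited from Definition~\ref{definition-radius} and only the ordering of the three specific radii requires real work.

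Next, I would apply Proposition~\ref{relation n d} to rewrite $r$ in terms of the quantities $\|\nabla_{\boldsymbol x} D_m(\mathcal{Y}_\bullet)\|=\|\nabla_{\boldsymbol x} D_m(G_{\theta_t}(\boldsymbol z_\bullet))+R\|$. The key structural observation is that the numerator inside the infimum in Proposition~\ref{relation n d} is monotonically non-decreasing in $\|R\|$, while the denominator $\|\boldsymbol z_1-\boldsymbol z\|$ is independent of $R$. Therefore the infimum over $\boldsymbol z$ is itself monotonically non-decreasing in $\|R\|$, and since $r$ is proportional to the reciprocal of twice this infimum, $r$ is a monotonically decreasing function of $\|R\|$. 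This monotonicity is the single analytic fact that drives everything else.

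Third, I would substitute each of the three penalties into this expression to define $r_{R_1}$, $r_{R_0}$, and $r_{R_{\boldsymbol\varepsilon}}$, and then feed in the inequality $\|R_1\|<\|R_0\|<\|R_{\boldsymbol\varepsilon}\|$ from Lemma~\ref{lemma compare}. Combining the two monotonicities---the norm ordering of the penalties on one side, and the reverse monotonicity of $r$ in $\|R\|$ on the other---yields the target chain $r_{R_1}>r_{R_0}>r_{R_{\boldsymbol\varepsilon}}$ immediately.

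The main obstacle is the monotonicity step: one has to be careful about the sign convention $\nabla_{\boldsymbol x} D(\boldsymbol x)\le 0$ invoked in Section~\ref{our theory} to justify that subtracting the vector $\boldsymbol\varepsilon$ genuinely \emph{enlarges} $\|\nabla_{\boldsymbol x} D_\psi(\hat{\boldsymbol x})-\boldsymbol\varepsilon\|$ rather than cancelling part of it, so that the triangle-inequality bounds packaged in Lemma~\ref{lemma compare} translate cleanly, term-by-term inside the iterated sum over $m$, into the numerator of Proposition~\ref{relation n d}. Verifying that no cancellations occur across the $m$-indexed steps, and that the infimum over $\boldsymbol z$ preserves these termwise inequalities, is the delicate bookkeeping I would need to carry out before invoking the reciprocal to conclude.
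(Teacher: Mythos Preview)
Your proposal is correct and follows essentially the same route as the paper: the paper's own proof consists of a single sentence stating that the theorem follows by combining Definition~\ref{definition-radius}, Proposition~\ref{relation n d}, Lemma~\ref{lemma compare}, and the $\boldsymbol\varepsilon$-centered penalty. In fact your outline is considerably more explicit than the paper's, since you isolate the monotonicity of $r$ in $\|R\|$ and flag the sign-convention and termwise-inequality bookkeeping that the paper leaves implicit (and note that the radius ordering is already asserted as part of Lemma~\ref{lemma compare}, so once that lemma is granted the theorem is immediate).
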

The theorem suggests that if a latent vector is pulled towards a specific mode in the image space, the size of the latent vector should be kept reasonable. 
If the latent N-size is excessively large, it could prevent latent vectors from attracting toward other modes, potentially causing the mode collapse phenomenon. 
The gradient penalty can be used to effectively adjust the latent N-size.
In descending order, the relationship between the latent N-size corresponding to the three Gradient Penalties can be summarized as follows: $r_{R_1} > r_{R_0} > r_{R_{\boldsymbol\varepsilon}}$.

\begin{figure*}
\centering
\includegraphics[width=0.85\textwidth]{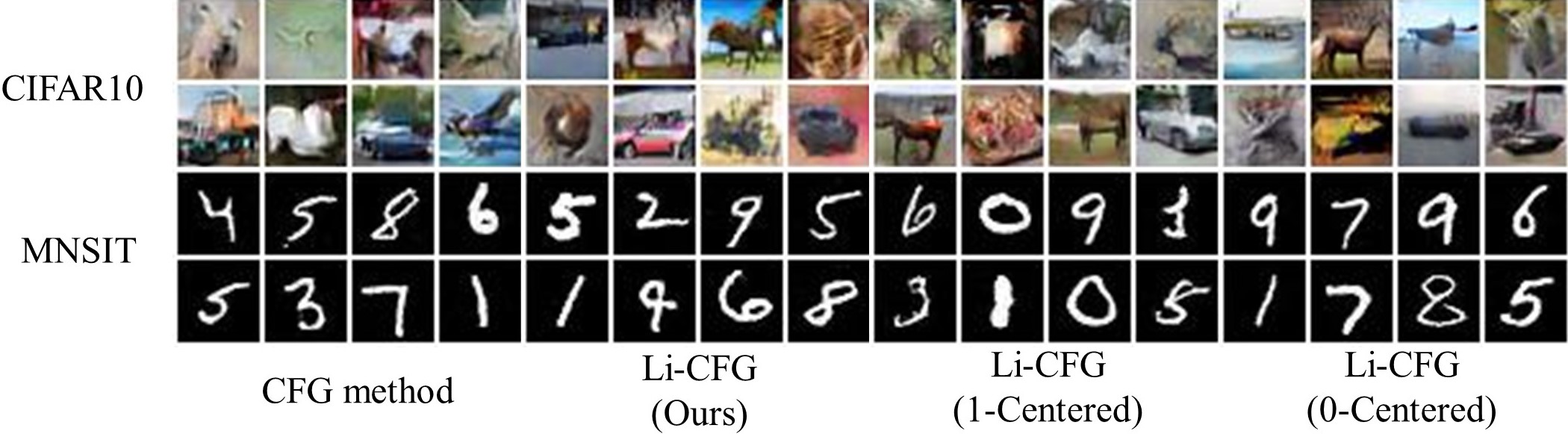} 
\caption{Results for CIFAR10, MNIST: The most left four columns are CFG method, the second four columns are Li-CFG with $\boldsymbol\varepsilon$-centered GP(ours), the third four columns are Li-CFG with $1$-centered, the most right four columns are  Li-CFG with $0$-centered. \label{Fig.main6} 
}
\end{figure*}

\begin{figure*}
\vspace{-0.1in}
\centering
\includegraphics[width=1\textwidth]{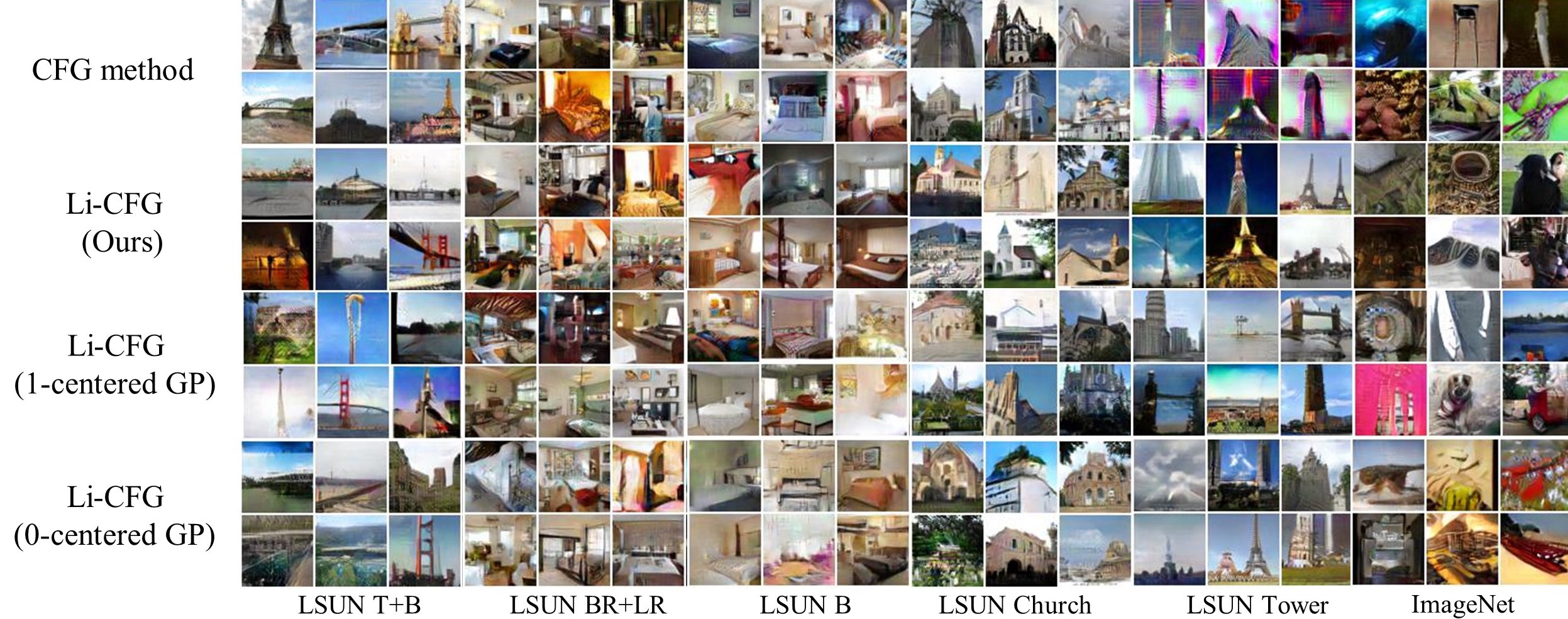}
\vspace{-0.1in}
\caption{Results for LSUN tower, Church, B, BR+LR, T+B, ImageNet: The method from top to bottom are CFG method, Li-CFG with $\boldsymbol\varepsilon$-centered GP(ours), Li-CFG with $1$-centered and Li-CFG with $0$-centered in each two rows.
\label{Fig.main7}
}
\vspace{-0.1in}
\end{figure*}

\section{Related Work}

\noindent \textbf{Generative Adversarial Network}.
GAN is optimized as a discriminator and a generator in a minimax game formulated as~\cite{goodfellow2014generative}. 
There are various variants of GAN for image generation, such as DCGAN~\cite{radford2015unsupervised},
SAGAN~\cite{zhang2019self}, Progressive Growing GAN~\cite{karras2017progressive}, BigGAN~\cite{brock2018large} and StyleGAN~\cite{karras2019style}. In general, the GANs are very difficult to be stably trained. Training GAN may suffer from various issues, including gradients vanishing, mode collapse, and so on~\cite{che2016mode,roth2017stabilizing,nowozin2016f}.
Numerous excellent works have been done in addressing these issues.
For example, by using the Wasserstein distance,  WGAN~\cite{arjovsky2017wasserstein} and its extensions
~\cite{gulrajani2017improved,nowozin2016f,nagarajan2017gradient,mescheder2017numerics,mescheder2018training} improve the training stability of GAN.

\noindent \textbf{Lipschitz Constrained for GANs}.
Applying Lipschitz constraint to CNNs have been widely explored~\cite{oberman2018lipschitz,scaman2018lipschitz,zhou2018understanding,zhou2019lipschitz,herrera2020estimating,kim2021lipschitz}. 
Such an idea of Lipschitz constraint has also been introduced in Wasserstein GAN (WGAN). 
Recently theoretical results proposed by~\citet{kim2021lipschitz} show that self-attention which is widely used in the transformer model does not meet the Lipschitz continuity. 
It has been proved in ~\citet{zhou2018understanding} that the Lipschitz-continuity is a general solution to make the gradient of the optimal discriminative function reliable. Unfortunately, directly applying the Lipschitz constraint to complicated neural networks is not easy. Previous works typically employ the  
 mechanisms of gradient penalty and weight normalization. The gradient penalty proposed by ~\citet{gulrajani2017improved} adds a function after the loss function to control the gradient varying of the discriminator. ~\citet{mescheder2018training} and ~\citet{nagarajan2017gradient} argue that the WGAN-GP is not stable near Nash-equilibrium and propose a new form of gradient penalty.
Weight normalization or spectrum normalization was first studied by 
~\citet{miyato2018spectral}. The normalization constructs a layer of neural networks to control the gradient of the discriminator. Recently, 
~\citet{bhaskara2022gran,wu2021gradient} propose a new form of normalization behaviors better than the spectrum normalization.

\noindent \textbf{Compare our method with existing methods}.
Our method appears similar to AdvLatGAN-div~\cite{li2022improving} and MSGAN~\cite{mao2019mode}, aiming to increase the pixel space distance ratio over latent space. 

{We have clarified the differences between our method and AdvLatGAN-div and MSGAN as follows: Firstly, we note that MSGAN, AdvLatGAN, and our method are related to Eq.~(\ref{diversz}).}
\begin{equation}
    \begin{aligned}  \left(\frac{d_{\boldsymbol{I}}\left(G\left(\boldsymbol{c}, \boldsymbol{z}_1\right), G\left(\boldsymbol{c}, \boldsymbol{z}_2\right)\right)}{d_{\boldsymbol{z}}\left(\boldsymbol{z}_1, \boldsymbol{z}_2\right)}\right)\end{aligned},\label{diversz}
\end{equation}
{
where $\boldsymbol{c}$ is the condition vector for generation, and $d_{\boldsymbol{I}}$ and $d_{\boldsymbol{z}}$ are the distance metrics in the target (image) space and latent space, respectively.}

{
Secondly, MSGAN maximizes Eq.~(\ref{diversz}) to train the generator, encouraging it to synthesize more diverse samples. Essentially, maximizing Eq.~(\ref{diversz}) means that for two different samples $\boldsymbol{z_1}$ and $\boldsymbol{z_2}$ from the latent space, the distance between their corresponding synthesized outputs $G(\boldsymbol{z_1})$ and $G(\boldsymbol{z_2})$ in the target space should be as large as possible. This leads to an increase in the diversity of the generated samples.
The AdvLatGAN-div method searches for pairs of $z_t$ that match hard sample pairs, which are more likely to collapse in image space. 
This search process is achieved by iteratively optimizing the latent samples $\boldsymbol{z}$ to minimize Eq.~(\ref{diversz}), similar to generating adversarial examples.
Then, the objective is to maximize Eq.~(\ref{diversz}) by using these hard sample pairs $z_t$ in the latent space as input again to optimize the generator. 
This process helps to generate diverse samples in the image space and avoid mode collapse.
Both MSGAN and AdvLatGAN-div, which utilize Eq.~(\ref{diversz}), are based on empirical observations and do not have a solid theoretical basis.}

{Unlike the previous two methods, we don't employ Eq.~(\ref{diversz}) as an additional loss function or an optimal target for training our model. Instead, we use Eq.~(\ref{diversz}) as the fundamental explanation of our theory. We have improved upon Eq.~(\ref{diversz}) by using the CFG method. This has helped us establish a connection between the distance of various $\boldsymbol{z}$ from the latent space and the discriminator's constraints. We have also explained how the Lipschitz constraint of the discriminator can enhance the generator's diversity based on our new theory.}

\section{Experiment}
\noindent \textbf{Datasets}. {
To demonstrate the efficiency of our approach, we use both synthetic datasets and real-world datasets.
In line with \cite{metz2016unrolled,li2021iid}, we simulate two synthetic datasets: (1) Ring
dataset. It consists of a mixture of eight 2-D Gaussians
with mean ${(2\cos(i\pi/4),2\sin(i\pi/4))},1\leq i\leq8$
and standard deviation 0.02. (2) Grid dataset. It consists of
a mixture of twenty-five 2-D isotropic Gaussians with mean ${(2i,2j)}, -2\leq i,j\leq2$
and standard deviation 0.02. 
For real-world datasets, we used MNIST~\cite{lecun1998gradient}, CIFAR10~\cite{krizhevsky2009learning},
and the large-scale scene understanding (LSUN)~\cite{yu2015lsun} and ImageNet~\cite{deng2009imagenet} dataset, to make fair comparison to original CFG method. Note that we for the first time present the ImageNet results of CFG methods.
The CFG method employs a balanced two-class dataset using the same number of training images from the ‘bedroom’ class and the ‘living room’ class (LSUN BR+LR) and a balanced dataset from ‘tower’ and ‘bridge’ (LSUN T+B). Besides, We choose to use the 'church'(LSUN C) and the LSUN tower(LSUN T) to do our experiment because the CFG method does not very well in these datasets.
We choose CIFAR10 and ImageNet to demonstrate the generalization performance of our algorithm on richer categories and larger-scale datasets.
We have included the high-resolution qualitative results of various datasets in the supplementary section.}

\noindent \textbf{Implementation Details.}\label{Implementation Details}
To ensure a fair comparison between our method and the CFG method, we maintain identical implementation settings. 
All the experiments were done using a single NVIDIA Tesla V100 or a single NVIDIA Tesla A100~\footnote{The computations in this research were performed using the CFFF platform of Fudan University}.
The hyper-parameter values for the CFG method were fixed to those in Table.~\ref{table:meta} unless otherwise specified.
CFG method behaves sensitively to the setting of $\delta(\boldsymbol x)$ for generating the image to approximate an appropriate value. 
In our work, we give the same $\delta(\boldsymbol x)$ settings with the CFG method; The hyper-parameter $\gamma$ is set to $0.1$, $1$ or $10$; $\boldsymbol\varepsilon$-centered hyper-parameters $\varepsilon'$ is set to $0.1$, $0.3$, or $1$ in the experiments. 

The explanation and ablation study of $\varepsilon'$ are in the supplementary. 
The base setting is presented in Table.~\ref{table:meta} and Table.~\ref{table:detail settings}. 

\noindent \textbf{Baselines.}
{
As a representative of comparison methods, we tested WGAN with the gradient penalty (WGAN-GP)~\cite{gulrajani2017improved}, the least square GAN~\cite{mao2017least}, the origin GAN~\cite{goodfellow2014generative} and the HingeGAN~\cite{lim2017geometric}. 
All of them always have been the baseline in other GAN models. For a fair comparison, we utilize the same network architecture as the CFG method. We have utilized the backbones of both simple  DCGAN~\cite{radford2015unsupervised}, and complex Resnet~\cite{he2016deep} in each dataset.
To evaluate the generalization of our $\boldsymbol\varepsilon$-centered method, we incorporate SOTA GAN models, such as BigGAN~\cite{brock2018large} and the Denoising Diffusion GAN (DDGAN)~\cite{xiao2021tackling}, with our $\boldsymbol\varepsilon$-centered gradient penalty.
}

\begin{wraptable}{r}{8.5cm} \small
\vspace{-0.3in}
\caption{\centering hyper-parameters.\label{table:meta}}
\begin{tabular}{cccc}
\hline
\multicolumn{1}{c}{\bf NAME}  &\multicolumn{1}{c}{\bf DESCRIPTION}\\
 \hline 
B=64         &training data batch size\\
U=1            &discriminator update per epoch\\
N=640             &examples for updating G per epoch\\
M=15             &number of generate step in CFG method\\
$\gamma$=0.1/1/10             &a hyper-parameters for L constant\\
$\varepsilon'$=0.1/0.3/1 & a hyper-parameters for $\boldsymbol\varepsilon$-centered\\
\hline
\end{tabular}
\vspace{-0.2in}
\end{wraptable}

\noindent \textbf{Evaluation Metrics.} 
{
Generative adversarial models are known to be a challenge to make reliable likelihood estimates.
So we instead evaluated the visual quality of generated images by adopting the inception score~\cite{salimans2016improved}, the Fr\'{e}chet inception distance~\cite{heusel2017gans} and the Precision/Recall~\cite{kynkaanniemi2019improved}. 
The intuition behind the inception score is that high-quality generated images should lead to close to the real image. The Fr\'{e}chet inception distance indicates the similarity between the generated images and the real image. Moreover, by definition, precision is the fraction of the generated images that are
realistic, and recall is the fraction of real images within the manifold covered by the generator.}
\begin{wraptable}{r}{8.5cm} \small
\caption{\centering Hyper-parameters. The $\delta(\boldsymbol x)$ value is the same as the CFG method. $\gamma$ is the parameter for the GP coefficient. \label{table:detail settings}}
\centering  
\begin{tabular}{lccll}  
\hline
\multicolumn{1}{c}{\bf DATASET}  &\multicolumn{1}{c}{\bf $\eta$} &\multicolumn{1}{c}{\bf $\delta(\boldsymbol x)$}&\multicolumn{1}{c}{\bf $\gamma$}&\multicolumn{1}{c}{\bf $\varepsilon'$} \\
\hline 
MNIST &2.5e-4 & 0.5 & 0.1/1/10& 0.1/0.3/1\\
CIFAR             &2.5e-4& 1 & 0.1/1/10& 0.1/0.3/1 \\
LSUN         &2.5e-4& 1& 0.1/1/10&0.1/0.3/1\\
ImageNet         &2.5e-4& 1& 0.1/1/10&0.1/0.3/1\\
\hline 
\end{tabular}
\vspace{-0.2in}
\end{wraptable}
We note that the inception score is limited, as it fails to detect mode collapse or missing modes. Apart from that, we found that it generally corresponds well to human perception.

In addition, we used Fr\'{e}chet inception distance (FID). FID measures the distance between the distribution of $f(x_*)$ for real data $x_*$and the distribution of $f(x)$ for generated data $x$, where function $f$ extract the feature of an image.  One advantage of this metric is that it would be high (poor) if mode collapse occurs, and a disadvantage is that its computation is relatively expensive.
However, FID does not differentiate the fidelity and diversity aspects of the generated images, so we used the Precision/Recall to diagnose these properties. A high precision value indicates a high quality of generated samples, and a high recall value implies that the generator can generate many realistic samples that can be found in the "real" image distribution.
In the results below, we call these metrics the (inception) score, the Fr\'{e}chet distance, and the Precision/Recall.

\subsection{Experimental Results}
In this section, we present a detailed comparison between our Li-CFG experiments and three gradient penalty (GP) methods within the context of CFG and other GAN-based models.
we report the results of our model using the same hyper-parameters against the CFG method. Our comparison shows that our method achieves better results. Furthermore, we demonstrate that the $\boldsymbol\varepsilon$-centered gradient penalty is versatile and can be effectively integrated into various GAN architectures.
\begin{table*}[htb!]
\vspace{-0.1in}
\caption{\centering 
{To assess the generalization of our $\boldsymbol\varepsilon$-centered gradient penalty, we apply it to various GANs baselines on the various datasets. When compared to the original GAN, WGAN, LSGAN, and HingeGAN, each utilizing different gradient penalties, our $\boldsymbol\varepsilon$-centered gradient penalty consistently achieves the best Fréchet Inception Distance (FID) score and the Inception Score (IS). This outcome serves as evidence that our $\boldsymbol\varepsilon$-centered gradient penalty is not only applicable to the CFG mechanism but can also be effectively employed in common GAN models. The red font number indicates a correction made in the previous manuscript.}\label{table:isfid}
}
\centering
\resizebox{\textwidth}{75mm}{
\begin{tabular}{ccccccccc}
\hline \multicolumn{6}{c}{IS $\uparrow$ / FID$\downarrow$}                           \\
             \textbf{MNIST}               &    \multicolumn{1}{c}{origin GAN }       & \multicolumn{1}{c}{WGAN}                      & \multicolumn{1}{c}{LSGAN}                      & \multicolumn{1}{c}{HingeGAN}   & \multicolumn{1}{c}{Li-CFG}\\
\hline
Unconstrained          &  2.18/34.28       &   2.23/31.05    &     2.22/36.83                                       &   2.21/30.72                                &  2.29/4.04                                   \\
ours($\boldsymbol\varepsilon$-centered)           &   \textbf{2.21/25.62}        &    \textbf{2.24/23.79}     &      \textbf{2.24/28.54}                                        &   \textbf{2.23/24.65}           &  \textbf{2.31/3.29}     \\
$0$-centered           &2.21/30.28         & 2.22/24.98   &    2.2/27.32                                        &    2.19/26.89   & 2.31/3.54                           \\

$1$-centered          & 2.19/31.48          & 2.20/31.27  &    2.23/29.23                                     &   2.22/24.72    & 2.3/3.64                                                   \\                            
\hline 
 \textbf{CIFAR10}               &
 \\
\hline
Unconstrained          &   3.48/37.83        &    3.53/36.24    &      3.41/30.94                                       &   3.58/32.31         &    4.02/19.41\\
ours($\boldsymbol\varepsilon$-centered)           &   \textbf{ 3.82/30.52}        &    \textbf{ 3.87/28.45}     &      \textbf{ 3.92/24.73}                                        &   \textbf{ 4.01/20.63}         &   \textbf{4.83/14.96}      \\
$0$-centered           & 3.76/35.72         &  3.85/29.21   &     3.84/29.01                                        &    3.89/23.95       & 4.72/18.5          \\

$1$-centered          &  3.78/31.48          &  3.75/31.27  &     3.80/29.23                                     &     3.82/24.72   & 4.71/19.3       \\        
\hline 
 \textbf{LSUN B}               &
 \\
\hline
Unconstrained          &   2.35/19.28       &   2.38/18.72    &       2.31/19.53                                      &   2.42/17.85         &   3.014/10.79  \\
ours($\boldsymbol\varepsilon$-centered)           &   \textbf{2.58/17.38}        &    \textbf{2.61/16.98}     &      \textbf{2.67/16.21}                                        &   \textbf{2.73/15.43}                                    &    \textbf{8.78}                                         \\
$0$-centered           &2.55/18.72         & 2.59/17.88  &   2.65/17.42                                       &   2.68/15.96   & 3.067/10.54     \\

$1$-centered          & 2.56/18.89         & 2.57/18.29  &    2.66/17.69                                     &   2.65/16.83   & 3.154/11.5   \\                            
 
\hline 
 \textbf{LSUN T}               &
 \\
\hline
Unconstrained          &   3.59/25.87        &   3.67/22.76     &      3.52/28.14                                       &  3.63/23.57 &   4.38/13.54        \\
ours($\boldsymbol\varepsilon$-centered)           &   \textbf{3.93/20.62}        &    \textbf{4.06/17.89}     &      \textbf{3.89/22.67}                                        &   \textbf{4.02/18.72}                                  &    \textbf{4.6/11.41}                       \\
$0$-centered           &3.87/21.65         & 4.02/18.03 &   3.83/23.88                                       &    3.98/20.23   & 4.57/12.33      \\

$1$-centered          & 3.85/22.03          & 3.99/18.49  &   3.81/24.33                                    &   3.94/20.81   &4.47/12.81     \\                            

\hline 
 \textbf{LSUN C}               &
 \\
\hline
Unconstrained          &   2.18/34.69        &   2.25/32.31    &       2.3/31.47                                        &   2.43/30.77   &   3.17/11.49              \\
ours($\boldsymbol\varepsilon$-centered)           &   \textbf{2.51/29.48}        &    \textbf{2.63/26.46}     &      \textbf{2.61/27.82}                                        &   \textbf{2.69/25.58}                           &    \textbf{3.18/9.38}          \\
$0$-centered           &2.49/30.13         & 2.60/27.86   &    2.57/28.76                                       &    2.67/26.63  & 3.08/12.33     \\

$1$-centered          & 2.45/30.79          & 2.58/28.79  &    2.53/29.15                                     &   2.65/ 26.97    & 3.08/12.81   \\                            

\hline 
 \textbf{LSUN B+L}               &
 \\
\hline
Unconstrained          &   3.18/21.92       &    3.01/23.47    &      3.12/{25.93}                                      &   3.29/20.33    &   3.49/{15.76}                           \\
ours($\boldsymbol\varepsilon$-centered)           &   \textbf{3.40/18.39}        &    \textbf{3.38/19.34}     &      \textbf{3.31/21.82}                                        &   \textbf{ 3.42/17.87}          &    \textbf{3.50/14.09}                              \\
$0$-centered           &3.36/19.12         &3.37/19.98   &    3.29/22.51                                        &    3.38/18.63 & 3.47/15.39        \\

$1$-centered          &3.30/19.87         & 3.35/20.65  &    3.26/23.32                                     &    3.31/19.48 & 3.42/15.53     \\                            
\hline  
 \textbf{LSUN T+B}               &
 \\
\hline
Unconstrained          &   4.65/27.61        &   4.77/25.4    &      4.68/25.87                                       &  4.81/24.66 &   5.01/16.9             \\
ours($\boldsymbol\varepsilon$-centered)           &   \textbf{ 4.78/23.72}        &    \textbf{ 4.85/21.84}     &      \textbf{ 4.89/21.9}                                        &   \textbf{ 4.92/19.83}            &    \textbf{5.07/15.72}                                     \\
$0$-centered           &4.73/24.19         & 4.81/22.98   &    4.85/23.4                                        &    4.85/21.31  & 5.05/16.01     \\

$1$-centered          & 4.74/24.55          & 4.76/23.59  &   4.79/24.02                                     &    4.80/22.76   & 5.04/16.32       \\                            
\hline 
 \textbf{ImageNet}               &
 \\
\hline
Unconstrained          &   6.83/54.68        &   6.94/48.54     &      7.00/53.7                                       &  7.13/51.28 &    8.98/29.73          \\
ours($\boldsymbol\varepsilon$-centered)           &   \textbf{ 7.38/45.84}        &    \textbf{7.69/40.72}     &      \textbf{7.52/41.25}                                        &   \textbf{7.66/40.36}           &    \textbf{9.15/29.05}       \\
$0$-centered           &7.25/46.13         & 7.48/42.88   &   7.49/41.93                                        &    7.58/41.65          & 8.79/30.04                                                 \\

$1$-centered          & 7.16/47.28         & 7.21/44.71  &    7.44/42.84                                     &    7.47/42.37   & 8.96/29.55     \\                            
                        
\hline 
\end{tabular}}
\vspace{-0.1in}
\end{table*}

\noindent \textbf{Inception Score Results.}
Table.~\ref{table:isfid} presents the Inception Score (IS) values for different GAN models.
Since the exact codes and models used to compute IS scores in~\citet{johnson2018composite} are not publicly available, we employed the standard PyTorch implementation of the Inception Score function~\footnote{https://github.com/sbarratt/inception-score-pytorch}.
The measured IS scores for the real datasets are as follows: 2.58(MNIST), 9.56(CIFAR10), 4.78(LSUN T), 3.72(LSUN B), 3.72(LSUN B+L), 3.79(LSUN T+B),  5.8(LSUN C), 37.99(ImageNet). 
We can find that the CFG method and Li-CFG scores are very close, and the WGAN and LSGAN performance is not so good in all the datasets.
The relative differences also shows that Li-CFG archives a better image quality than the CFG method in the same dataset.

\begin{table*}[htb!]\small
\vspace{-0.1in}
\caption{\centering 
{To assess the generalization of our $\boldsymbol\varepsilon$-centered gradient penalty, we apply it to various GANs baselines on the various datasets. When compared to the original GAN, WGAN, LSGAN, and HingeGAN baselines, each utilizing different gradient penalties, our $\boldsymbol\varepsilon$-centered gradient penalty consistently achieves the best Recall score. This outcome serves as evidence that our $\boldsymbol\varepsilon$-centered gradient penalty is not only applicable to the CFG mechanism but can also be effectively employed in common GAN models.}\label{table:pr}
}
\centering
\resizebox{\textwidth}{75mm}{
\begin{tabular}{ccccccccc}
\hline
\multicolumn{6}{c}{Precision/Recall$\uparrow$} \\  
 \textbf{CIFAR10}                &    \multicolumn{1}{c}{origin GAN }       & \multicolumn{1}{c}{WGAN}                      & \multicolumn{1}{c}{LSGAN}                      & \multicolumn{1}{c}{HingeGAN}   & \multicolumn{1}{c}{Li-CFG}\\
\hline
Unconstrained          &    0.78/0.45       &    0.78/0.41    &       0.80/0.49                                     &   0.81/0.47    &    0.77/0.59                              \\
ours($\boldsymbol\varepsilon$-centered)           &   \textbf{0.77/0.56}        &   \textbf{0.79/0.56}     &      \textbf{0.79/0.55}                                        &   \textbf{0.78/0.59}   &    \textbf{0.75/0.66}          \\
$0$-centered           &0.77/0.53        & 0.79/0.53   &    0.78/0.53                                        &    0.78/0.58                & 0.76/0.64  &                 \\

$1$-centered          & 0.78/0.54          &  0.78/0.52   &   0.78/0.51                                     &   0.79/0.57                    & 0.77/0.62                         \\        
\hline 
 \textbf{LSUN B}               &
 \\
\hline
Unconstrained          &  0.75/0.37         &    0.75/0.39    &      0.77/0.38                                     &   0.73/0.4   &   0.65/0.49          \\
ours($\boldsymbol\varepsilon$-centered)           &   \textbf{0.71/0.43}        &    \textbf{0.69/0.47}     &      \textbf{0.67/0.46}                                        &   \textbf{0.69/0.47}  &    \textbf{0.61/0.53}                             \\
$0$-centered           &0.73/0.41         &0.71/0.45   &    0.67/0.45                                        &    0.71/0.47                   & 0.62/0.51              \\

$1$-centered          & 0.74/0.41          & 0.71/0.44  &   0.69/0.44                                     &    0.73/0.45                 & 0.64/0.50                \\                            
 
\hline 
 \textbf{LSUN T}               &
 \\
\hline
Unconstrained          &   0.80/0.45        &    0.79/0.49    &       0.81/0.43                                        &   0.79/0.47           &    0.71/0.58                          \\
ours($\boldsymbol\varepsilon$-centered)           &   \textbf{0.76/0.49}        &    \textbf{0.72/0.56}     &      \textbf{0.78/0.45}                                        &   \textbf{0.74/0.53}                                    &    \textbf{0.62/0.65 }      \\
$0$-centered           &0.77/0.48        & 0.73/0.54   &    0.79/0.44                                        &    0.75/0.51                & 0.64/0.63                 \\

$1$-centered          & 0.75/0.48          & 0.76/0.53  &    0.78/0.43                                    &   0.74/0.52                    & 0.69/0.62               \\                            

\hline 
 \textbf{LSUN C}               &
 \\
\hline
Unconstrained          &    0.82/0.36        &     0.84/0.32   &      0.83/0.35                                    &    0.79/0.41          &    0.75/0.51                               \\
ours($\boldsymbol\varepsilon$-centered)           &   \textbf{0.79/0.43}        &    \textbf{0.80/0.45}     &      \textbf{ 0.81/0.44}                                        &   \textbf{0.78/0.47}  &    \textbf{0.72/0.58}                             \\
$0$-centered           &0.78/0.42         & 0.81/0.44  &    0.82/0.43                                        &    0.79/0.45                 &0.76/0.55             \\

$1$-centered          & 0.79/0.41         & 0.82/0.42 &     0.81/0.41                                     &     0.79/0.42                      & 0.77/0.54          \\                            

\hline 
 \textbf{LSUN B+L}               &
 \\
\hline
Unconstrained          &  0.81/0.46       &   0.85/0.43    &       0.86/0.42                                       & 0.81/0.47            &    0.77/0.51                            \\
ours($\boldsymbol\varepsilon$-centered)           &   \textbf{0.79/0.51}        &    \textbf{0.79/0.49}     &      \textbf{0.83/0.46}                                        &   \textbf{0.78/0.51}  &     \textbf{0.72/0.61}                              \\
$0$-centered           &0.79/0.49         & 0.80/0.47   &    0.85/0.45                                       &    0.80/0.48               & 0.74/0.59                \\

$1$-centered          & 0.77/0.48          & 0.82/0.46  &     0.86/0.43                                     &   0.80/0.49                 &  0.74/0.58   
\\                            
\hline  
 \textbf{LSUN T+B}               &
 \\
\hline
Unconstrained          &   0.82/0.38         &   0.82/0.39    &      0.84/0.37                                       &   0.81/0.42         &    0.78/0.46                              \\
ours($\boldsymbol\varepsilon$-centered)           &   \textbf{0.80/0.43}        &    \textbf{0.78/0.45}     &      \textbf{0.79/0.45}                                        &   \textbf{0.78/0.46}  &    \textbf{0.74/0.53}                               \\
$0$-centered           &0.83/0.41         & 0.79/0.43   &    0.81/0.42                                        &    0.79/0.44       & 0.75/0.52             \\

$1$-centered          & 0.83/0.40         & 0.81/0.42  &    0.81/0.41                                      &    0.8/0.44                     & 0.75/0.51              \\                            
\hline 
 \textbf{ImageNet}               &
 \\
\hline
Unconstrained          &    0.79/0.39         &    0.72/0.45     &      0.76/0.41                                       &   0.76/0.42           &     0.62/0.61                                   \\
ours($\boldsymbol\varepsilon$-centered)           &   \textbf{0.73/0.47}        &    \textbf{0.69/0.51}     &      \textbf{0.7/0.49}                                        &   \textbf{0.7/0.51}   &    \textbf{0.61/0.62}                            \\
$0$-centered           &0.75/0.46         & 0.7/0.47   &    0.72/0.47                                        &    0.71/0.50                 & 0.62/0.62                 \\

$1$-centered          & 0.76/0.45         & 0.72/0.46  &    0.72/0.46                                       &    0.71/0.49                      &  0.61/0.61                \\                            
                        
\hline 
\end{tabular}}
\vspace{-0.1in}
\end{table*}

\noindent \textbf{Fr\'{e}chet Distance Results.}\label{FID-explain}
We compute the Fr\'{e}chet Distance with 50k generative images and all real images from datasets with the standard implementation~\footnote{https://github.com/mseitzer/pytorch-fid}. 
However, we observed that the FID scores computed in our environment were significantly higher than those reported for the CFG method, even though we generated the images using the same CFG technique. The difference in FID scores suggest potential differences in environmental factors or implementation details.
The results, presented in Table.~\ref{table:isfid},  show that Li-CFG archives the best in the MNIST, CIFAR10, and the LSUN T, T+B, and B+L datasets. The WGAN and the LSGAN exhibit consistently weaker performance. 
The reason is that, for a fair comparison with other methods, we do not use tuning tricks, and these methods are also sensitive to varying hyper-parameters.
\begin{table*}[htb!]\small
\vspace{-0.1in}
\caption{\centering Different $\varepsilon'$ settings of our Li-CFG trained in MNIST. We use the FID and IS scores to compare the generated effect. The other two penalties do not have the parameter $\varepsilon'$ so that all the cells fill the same value. Untrained means the loss function does not converge.\label{table:ablation}
}
\centering
\setlength{\tabcolsep}{0.75mm}{
\begin{tabular}{lcccccccc}
\hline
{\textbf{}} & \textbf{} & \multicolumn{2}{c}{FID } & \multicolumn{1}{l}{\textbf{}}
& \multicolumn{4}{c}{IS}                                          \\
             \textbf{MNIST}               &    \multicolumn{1}{l}{\textbf{$\varepsilon'=0.1$} }       & \multicolumn{1}{l}{\textbf{$\varepsilon'=0.3$}}                      & \multicolumn{1}{l}{\textbf{$\varepsilon'=1$}}                      & \multicolumn{1}{l}{\textbf{$\varepsilon'=5$}} & \multicolumn{1}{l}{\textbf{$\varepsilon'=0.1$}}& \multicolumn{1}{l}{\textbf{$\varepsilon'=0.3$}}                      & \multicolumn{1}{l}{\textbf{$\varepsilon'=1$}}                      & \multicolumn{1}{l}{\textbf{$\varepsilon'=5$}}\\
\hline
ours($\boldsymbol\varepsilon$-centered)           &   \textbf{2.99}        &    \textbf{2.88}     &      \textbf{2.85}                                        &   untrained                            &                  2.28             &     \textbf{2.32} 
 &                  2.29               &     untrained \\
$0$-centered           &3.54          & 3.54   &    3.54                                        &    3.54                             & \textbf{2.31}                                &                2.31
 & \textbf{2.31}                                 &                 \textbf{2.31} \\

$1$-centered          & 3.64          & 3.64   &    3.64                                        &    3.64                              & 2.3                               &               2.3 & 2.3                                &                2.3 \\
\hline
  \textbf{LSUN Bedroom}               &   \\
\hline
ours($\boldsymbol\varepsilon$-centered)           &            \textbf{9.94} &   \textbf{8.78}     &     \textbf{ 9.73}                                        &   untrained                              &                 2.97                &     2.94
 &                  2.97                  &     untrained \\
$0$-centered       &10.54            &10.54  &    10.54                                        &    10.54                             & 3.067                                &                3.067
 & 3.067                                  &                3.067\\

$1$-centered         & 11.5           & 11.5   &    11.5                                        &    11.5                              & \textbf{3.154  }                                &              \textbf{3.154  }& \textbf{3.154  }                                 &                \textbf{3.154  }\\                             
\hline
{\textbf{}} & \textbf{} & \multicolumn{2}{c}{FID } & \multicolumn{1}{l}{\textbf{}}
& \multicolumn{4}{c}{IS}                                           \\
             \textbf{LSUN T+B}               &    \multicolumn{1}{l}{\textbf{$\gamma=0.1$}}                      & \multicolumn{1}{l}{\textbf{$\gamma=1$}}                      & \multicolumn{1}{l}{\textbf{$\gamma=10$}} &
             \multicolumn{1}{l}{\textbf{}}                      &\multicolumn{1}{l}{\textbf{$\gamma=0.1$}}                      & \multicolumn{1}{l}{\textbf{$\gamma=1$}}                      & \multicolumn{1}{l}{\textbf{$\gamma=10$}}
             &\multicolumn{1}{l}{\textbf{}}                      \\
\hline
\textbf{ours($\boldsymbol\varepsilon$-centered)}                      &    \textbf{15.72}      &      \textbf{16.85}                                         &  \textbf{16.67}                              &   &                             \textbf{5.07}       &     5.06
 &  \textbf{5.08} &                  \textbf{} \\
\textbf{$0$-centered}          & 16.01   &    17.4                                        &   19.79  &   \textbf{}                           & 5.05                                &                5.08
 &5.05                        &   \textbf{}    \\

\textbf{$1$-centered}           & 16.32    &    18.73                                      &  34.28  &   \textbf{}                            &5.04                                &                 \textbf{ 5.18} & 4.71    &   \textbf{}                \\
\hline      
\end{tabular}}
\vspace{-0.1in}
\end{table*} 
As we maintained the original network configurations for both models without applying any additional training optimizations, our results provide a fair comparison.

By maintaining the same network architecture as the CFG method, we were able to achieve the best scores in six out of seven datasets. The FID results demonstrate that our Li-CFG approach is effective and capable of generating high-quality images.

\noindent\textbf{Precision and Recall Results.}\label{pr}
{
We generate synthetic  10,000  samples compared with 10,000 real samples to compute precision and recall, utilizing the codes of Precision and Recall functions.\footnote{https://github.com/blandocs/improved-precision-and-recall-metric-pytorch}}. 
{
Except for the five GAN variants mentioned above, we additionally utilized BigGAN in both CIFAR10 and ImageNet and DDGAN on the CIFAR10 dataset.  We present these results in Table.~\ref{table:pr}.}

\noindent \textbf{Ablation Study about $\varepsilon'$ And $\delta(\boldsymbol x)$}.
\label{ablation study}
Our method introduces a controllable parameter that adjusts the gradient penalty within a specified range. As this parameter varies, the latent N-size adjusts accordingly. 
Through experiments, we demonstrate that a reasonable latent N-size is crucial, as shown in Table.~\ref{table:ablation}. A too small latent N-size, resulting from a large value of our parameter, causes the loss function to fail to converge. This reflects a trade-off between the diversity of synthetic samples and the model's training stability.
For the values of $\delta(\boldsymbol x)$ where the CFG method performs well, $\gamma=0.1$ consistently yields better results.
On the other hand, for cases where the CFG method performs poorly, $\gamma=10$ tends to improve performance.

\noindent \textbf{Generalization of $\boldsymbol\varepsilon$-centered Gradient Penalty.}
\label{generalization study}
{
The conception of our $\boldsymbol\varepsilon$-centered gradient penalty emerged from the CFG mechanism. The Column 'Li-CFG' in Table.~\ref{table:isfid} and Table.~\ref{table:pr} present the effectiveness of the CFG method. However, a fundamental question arises: How well does the generalization of our $\boldsymbol\varepsilon$-centered gradient penalty extend beyond the CFG framework?
To address this, we investigate the applicability of our mechanism to various GAN models. While the CFG method employs a distinct formulation, it shares equivalence with common GAN in dynamic theory. This prompts us to assess the performance of our $\boldsymbol\varepsilon$-centered gradient penalty across various models. %
The results in Table.~\ref{table:isfid} and Table.~\ref{table:pr}  reveal that our $\boldsymbol\varepsilon$-centered gradient penalty consistently achieves the best FID score across all these GAN models.}

{
To demonstrate the effectiveness and generalization of our $\boldsymbol\varepsilon$-centered gradient penalty, we compare various methods, focusing on the diversity of synthesized samples. The results of the compared methods are tabulated in Table.~\ref{table:comparemodels}.}

{
We have discovered an interesting phenomenon where spectral normalization, weight gradient, and gradient penalty can effectively work together to improve the diversity of synthesized samples. However, it can be argued that all these methods rely on Lipschitz constraints and, therefore may compete with each other. For instance, BigGAN and denoising diffusion GAN employ spectral normalization as a strong Lipschitz constraint for the varying weight of the neural networks. Given such a strong Lipschitz constraint, our gradient penalty should not affect the synthesis samples. However, when both models are applied with the gradient penalty, there is still a significant improvement in the diversity of the synthesis samples. 
This situation emphasizes that our theory is valid. It suggests that interpreting the gradient penalty only as a Lipschitz constraint may not be sufficient. Our neighborhood theory provides a new perspective to understand how the gradient penalty can improve the diversity of the model.}

{
This evidence highlights the strong generalization capability of our proposed method. It showcases its broad applicability by seamlessly integrating with standard GAN models, offering advantages that extend beyond the CFG mechanism.}

\begin{figure*} [htb]
	\centering
       \subfloat[\label{fig:ddgan-cifar10}]{
		\includegraphics[scale=0.615]{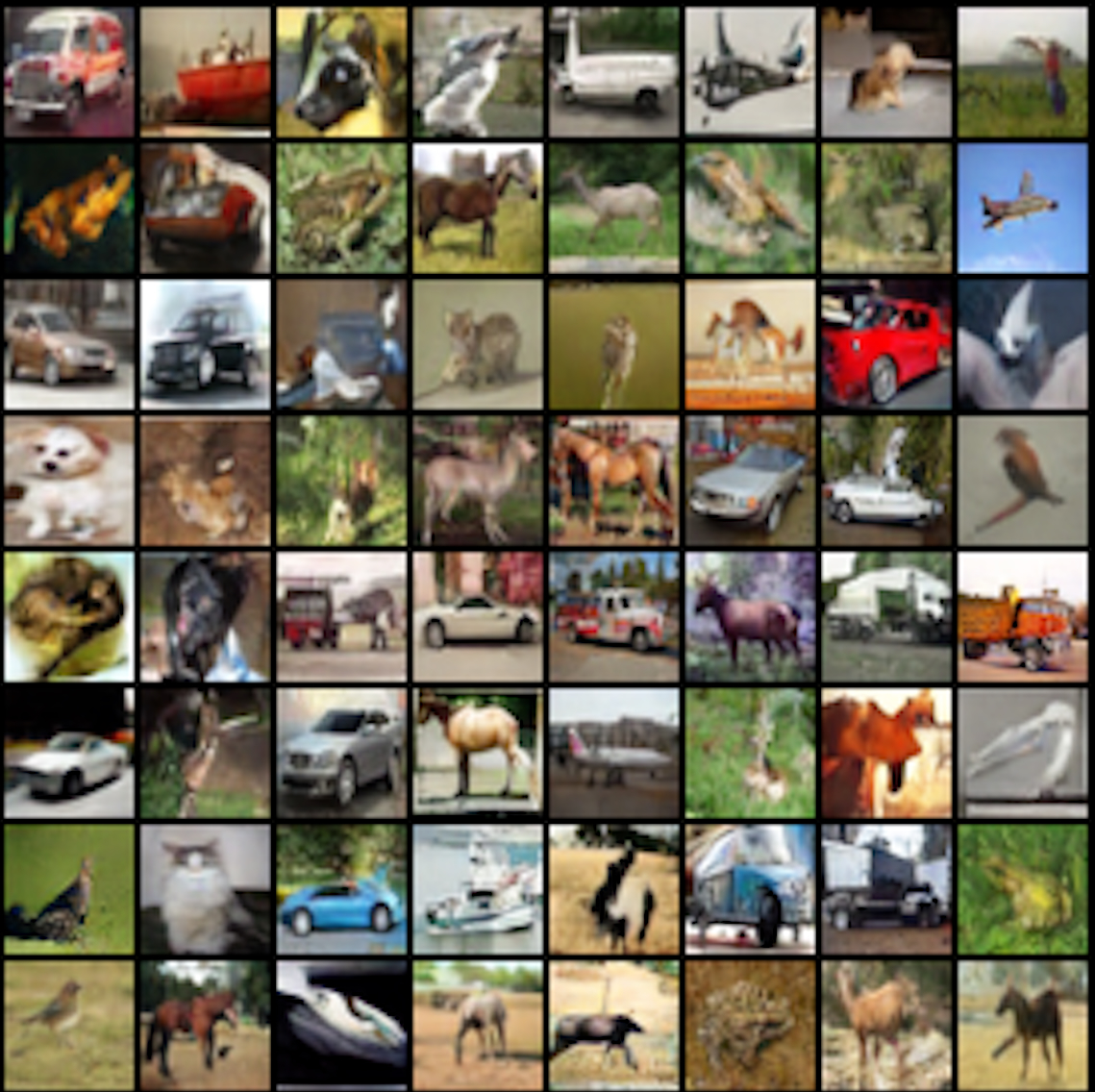}}
	\subfloat[\label{fig:biggan-cifar10}]{
		\includegraphics[scale=0.7]{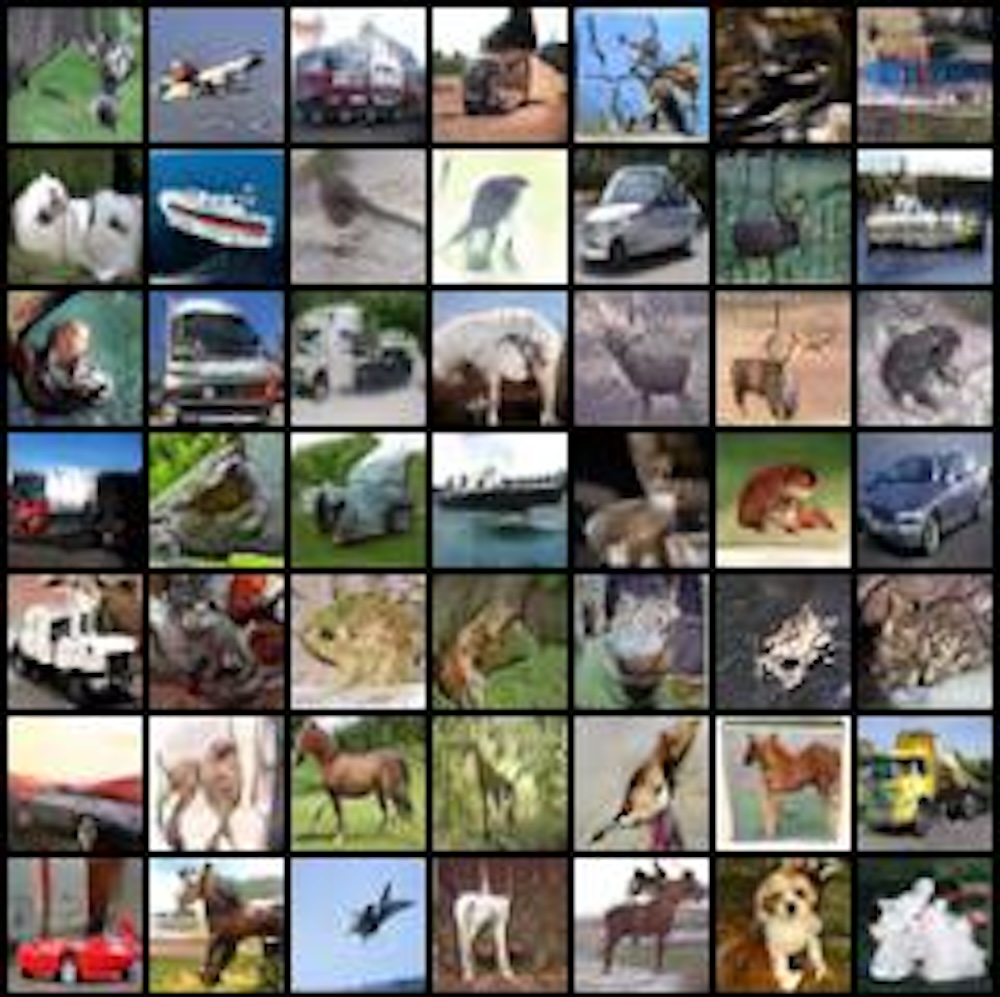}}
	\caption{{Column (a): Result for CIFAR10 from DDGAN with our $\boldsymbol\varepsilon$-centered gradient penalty.  Column (b): Result for CIFAR10 from BigGAN with our $\boldsymbol\varepsilon$-centered gradient penalty.}\label{fig:sota-cifar10}}
 \vspace{-0.2in}
\end{figure*}

\begin{table*}[htb!]\ra{1.3}
\vspace{-0.6in}
\caption{\centering 
{To assess the effeteness of our $\boldsymbol\varepsilon$-centered gradient penalty to the diversity target, we compare it to various GANs baselines focused on the diversity of synthesis samples. The improved FID refers to the difference in FID results between a method and a baseline. The symbol '-' indicates that we do not recalculate the experimental results in local computing environments. The 'Baseline' refers to the method that other methods are compared to. In order to make a fair comparison, all the methods being compared are based on the same baseline.
Our method is the one that uses the bold ($\boldsymbol\varepsilon$-centered).}
\label{table:comparemodels} 
}
\centering
\begin{tabular}{@{}p{0.38\textwidth}<{\raggedright}
p{0.12\textwidth}<{\centering}p{0.15\textwidth}<{\centering}@{}p{0.12\textwidth}<{\centering}@{}p{0.12\textwidth}<{\centering}@{}}
\toprule 
             \textbf{CIFAR10}               &    FID$\downarrow$      &  Improved $\uparrow$  FID                     & Precision                      & Recall$\uparrow$ \\
\midrule
Baseline: WGAN         &   36.24        &   -   &     0.78                                       &   0.41  \\
AdvLatGAN-qua~\cite{li2022improving}          & 30.21        &  6.03   &    0.69                                       &    0.45                                         \\
AdvLatGAN-qua+~\cite{li2022improving}           &  29.73        & 6.51 &  0.69   &     0.46                                         \\
WGAN-Unroll~\cite{metz2016unrolled}       & 30.28        &  5.98   &    0.7                                       &   0.45                                           \\
IID-GAN~\cite{li2021iid}         & 28.63         &  7.61   &     -                                        &    -         \\
\textbf{WGAN($\boldsymbol\varepsilon$-centered)}&   28.45       &     7.79    &     0.67                                        &   0.46                                                \\
\midrule
{Baseline: Origin GAN}           &  37.83        & - &  0.78  &     0.45                                         \\
{MSGAN~\cite{mao2019mode}}           &  32.38       & 5.45 &  0.77   &     0.51                                        \\
{AdvLatGAN-div+~\cite{li2022improving}}           &  30.92       & 6.91 &  0.78   &     0.54                                        \\
{\textbf{Origin GAN($\boldsymbol\varepsilon$-centered)} }           &  30.52        & 7.31 &  0.77   &     0.56                                         \\
\midrule           
Baseline: SNGAN-RES~\cite{miyato2018spectral}        &  15.93         &  - &     0.8                                   &     0.75                                       \\  
AdvLatGAN-qua~\cite{li2022improving}            & 20.75        &  -4.62   &     0.82                                       &    0.68                                                   \\
AdvLatGAN-qua+~\cite{li2022improving}            & 15.87        &  0.06   &     0.79                                      &    0.76        \\
GN-GAN~\cite{wu2021gradient}     &    15.31        &  0.63 &    0.77                                     &     0.75                                         \\
GraNC-GAN~\cite{bhaskara2022gran}         &  14.82          &  1.11  &    -                                   &     -    \\ 
aw-SN-GAN~\cite{zadorozhnyy2021adaptive}      &  8.9         &  7.03  &     -                                     &     -   \\ 
\textbf{SNGAN-RES($\boldsymbol\varepsilon$-centered)}           &   13.4       &   2.53     &     0.74                                       &  0.79                                     \\
\midrule           
Baseline: SNGAN-CNN~\cite{miyato2018spectral}       &  18.95         &  -  &     0.785                                    &     0.63    \\  
GN-GAN~\cite{wu2021gradient}        &  19.31          &  -0.36  &    0.81                                    &     0.59   \\ 
\textbf{SNGAN-CNN($\boldsymbol\varepsilon$-centered)}           &   
  17.92        &   1.03     &      0.79                                        &   0.64       \\

\midrule           
Baseline: BigGAN        & 8.25       &  - &     0.76                                    &     0.62   \\  
GN-BigGAN~\cite{wu2021gradient}        &  7.89         & 0.36  &    0.77                                     &     0.62                                          \\  
aw-BigGAN~\cite{zadorozhnyy2021adaptive}        &       7.03   & 1.22  &    -                                     &     -   \\
\textbf{BigGAN($\boldsymbol\varepsilon$-centered)}           &   5.18       &    3.07     &       0.75                                       &   0.67                                              \\ 
\textbf{DDGAN($\boldsymbol\varepsilon$-centered)}       &  2.38       &  5.87  & 0.74 &    0.69                                                  \\

\midrule 
 \textbf{ImageNet}               &
 \\
\midrule
Baseline: BigGAN         &   17.33       &  -     &      0.53                                      &  0.71    \\
\textbf{BigGAN($\boldsymbol\varepsilon$-centered)}           &   12.68      &   4.65    &      0.45                                       &   0.76                         \\                            
                        
\bottomrule 
\end{tabular}
\vspace{-0.1in}
\end{table*}

\begin{figure*}
\centering 
\includegraphics[width=0.9\textwidth]{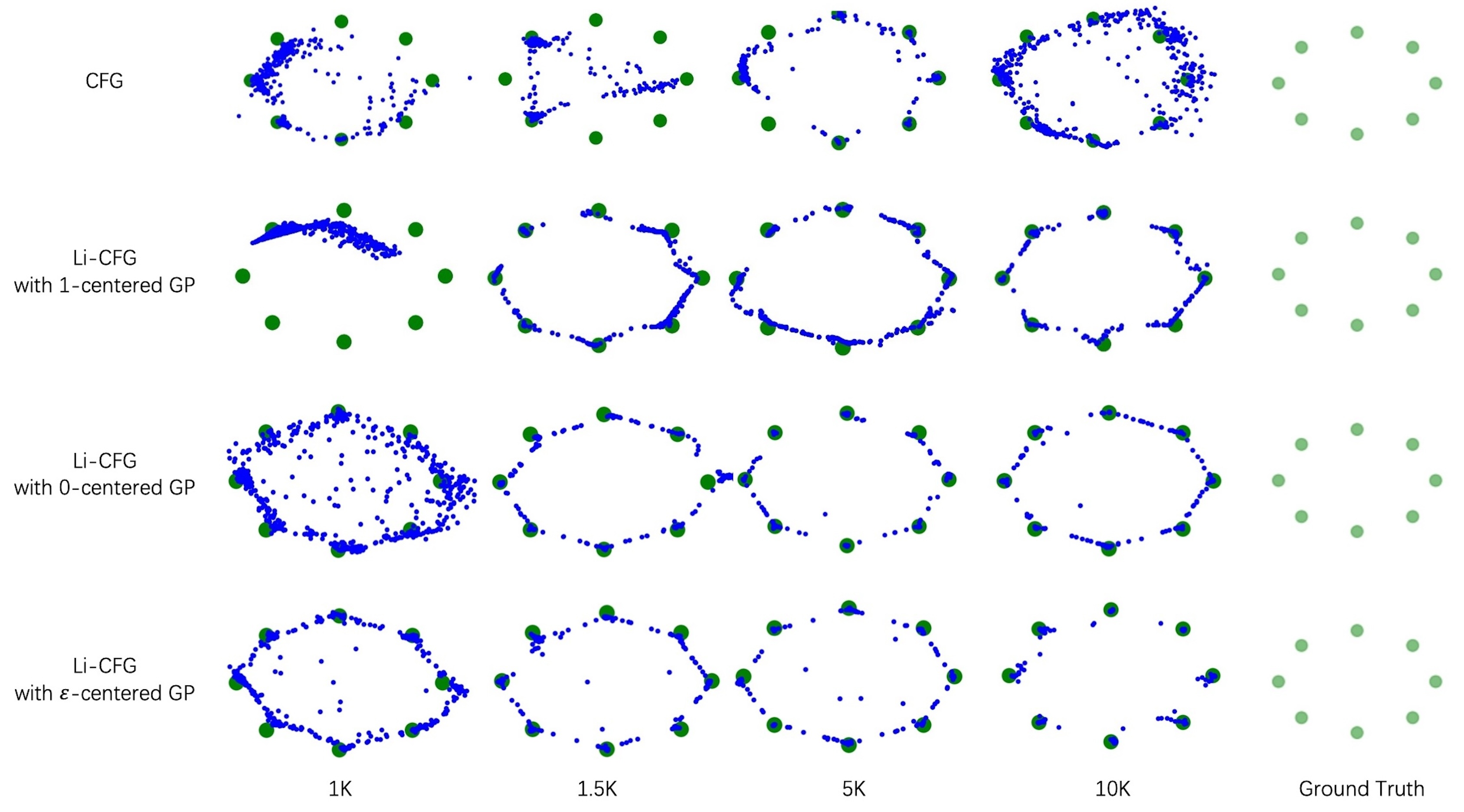} 
\caption{{Results for the CFG with varying gradient penalties on the Ring dataset are displayed as follows: From top to bottom, the sequence includes the  CFG, the Li-CFG with $1$-centered gradient penalty, the Li-CFG with $0$-centered gradient penalty, and Li-CFG with $\boldsymbol\varepsilon$-centered gradient penalty. Progressing from left to right, each column represents outcomes from different stages of training. The far-right column displays the ground truth data for comparison.}\label{Fig.licfg-ring}}
\vspace{-0.1in}
\end{figure*}

\begin{figure*}
\centering 
\includegraphics[width=0.9\textwidth]{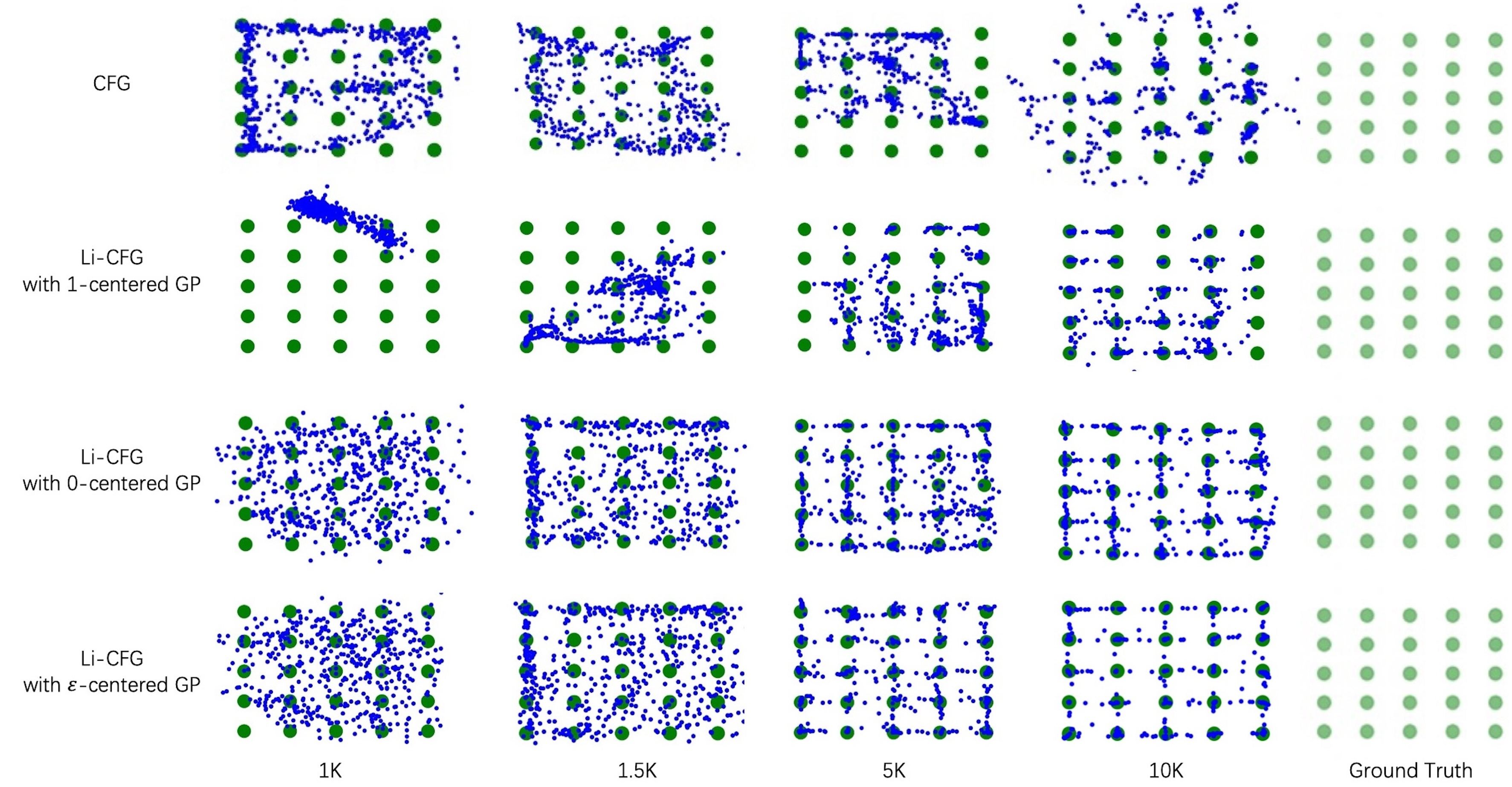} 
\caption{{Results for the CFG with varying gradient penalties on the Grid dataset are displayed as follows: From top to bottom, the sequence includes the  CFG, the Li-CFG with $1$-centered gradient penalty, the Li-CFG with $0$-centered gradient penalty, and Li-CFG with $\boldsymbol\varepsilon$-centered gradient penalty. Progressing from left to right, each column represents outcomes from different stages of training. The far-right column displays the ground truth data for comparison.} \label{Fig.licfg-square}}
\vspace{-0.1in}
\end{figure*}

\noindent \textbf{Visualization of synthesized images.}
The results of the experiment using Li-CFG and CFG methods can be seen in Fig.~\ref{Fig.main6} and Fig.~\ref{Fig.main7}. 
Our Li-CFG method can achieve the same or better results than the CFG method. In some datasets, the image generated by the CFG method has already collapsed, while the image generated by Li-CFG still performs well. 
{Furthermore, we also present synthesis samples generated with the state-of-the-art GAN model in CIFAR10, as shown in Fig.~\ref{fig:sota-cifar10}. 
Additional synthesis samples from various datasets can be found in Section~\ref{appedix.figures} of the supplementary materials.}

{The following results were obtained for the synthetic datasets presented in Fig.~\ref{Fig.licfg-ring},~\ref{Fig.licfg-square}. Our observation is that the unconstrained CFG method has difficulty in converging to all modes of the ring or grid datasets. However, when supplemented with gradient penalty, these methods show an enhanced ability to converge to a mixture of Gaussians. Out of the three types of gradient penalties tested, the $1$-centered gradient penalty showed inferior convergence compared to the $0$-centered gradient penalty and our proposed $\boldsymbol\varepsilon$-centered gradient penalty. Notably, our $\boldsymbol\varepsilon$-centered gradient penalty demonstrated a higher efficacy in driving more sample points to converge to the Gaussian points compared to the $0$-centered gradient penalty.}

\section{Conclusion}\label{conclusion}
In this paper, We provide a novel perspective to analyze the relationship between constraint and the diversity of synthetic samples. 
We assert that the constraint can effectively influence the latent N-size that is strongly associated with the mode collapse phenomenon. To modify the latent N-size efficiently, we propose a new form of Gradient Penalty called $\boldsymbol\varepsilon$-centered GP.
The experiments demonstrate that our method is more stable and achieves more diverse synthetic samples compared with the CFG method. Additionally, our method can be applied not only in the CFG method but also in common GAN models. Inception score, Fr\'{e}chet Distance, Precision/Recall, and  visual quality of generated images
show that our method is more powerful. 
In future work, we plan
to investigate the metric function of the generator from KL diversity to the Wasserstein distance to achieve a more stable and efficient GAN architecture.

\noindent \textbf{Limitations and future work.}
In this study, we achieve good FID results in the considered data sets. However, if given higher resolution data sets, it might lead to a different choice of our neural network architecture and hyper-parameter. On the other hand, although the hyper-parameter $\delta(\boldsymbol x)$ has an important impact on the algorithm, we do not discuss the relationship between hyper-parameter $\delta(\boldsymbol x)$ in CFG and our constraint hyper-parameter $\boldsymbol\varepsilon$ from a theoretical perspective.
These effects should be systematically studied in future work.

\newpage
\section{Declarations}
declaration - statement
\begin{itemize}
\item Conflicts of interest/Competing interests

Not applicable

\item Funding

This work was funded by the National Natural Science Foundation of China U22A20102, 62272419. Natural Science Foundation of Zhejiang Province ZJNSFLZ22F020010.

\item Ethics approval

Not applicable

\item Consent to participate

Not applicable

\item Consent for publication

Not applicable

\item Availability of data and material

All the data used in our work are sourced from publicly available datasets that have been established by prior research. References for all the data are provided in our main document.

\item Code availability

Our code is some kind of custom code and is available for use.

\item Authors' contributions

We list the Authors' contributions as follows:

Conceptualization: [Chang Wan], [Yanwei Fu], [Xinwei Sun]; 

Methodology: [Chang Wan], [Xinwei Sun], [Ke Fan]; 

Experiments: [Chang Wan], [Ke Fan], [Yunliang Jiang]; 

Writing  ‐  original draft preparation: [Chang Wan], [Yanwei Fu]; 

Writing  ‐  review and editing: [Xinwei Sun], [Zhonglong Zheng], [Minglu Li]; 

Funding acquisition: [Yunliang Jiang], [Zhonglong Zheng];

Resources: [Zhonglong Zheng]; 

Supervision: [Minglu Li], [Yunliang Jiang];

All authors read and approved the final manuscript.

\end{itemize}

\bibliography{sn-bibliography}

\newpage
\addcontentsline{toc}{section}{Appendix} 
\part{Appendix} 
\parttoc 

\newpage
\begin{appendices}

\section{Overview}
In this section, we outline the contents of our supplementary material, which is divided into four main sections. Firstly, we introduce the supplementary material.
Secondly, we delve into the analysis of the dynamic theory for the CFG method. This section covers essential concepts related to CFG, the Lipschitz constraint, and the analysis of the dynamic theory of CFG. Thirdly, the theoretical analysis of our theory section contains the proof of our definition, Lemma and Theorem. Finally, we showcase the results of our experiments in the last section.

In the analysis of dynamic theory for the CFG method section, we provide foundational knowledge about CFG and Lipschitz constraints. Furthermore, we provide theoretical proofs that establish certain equivalences between the CFG method and conventional GAN theory, specifically focusing on dynamic theory principles.

{In the theoretical analysis of our theory section, we initiate our exploration by presenting the proofs for both Definition~\ref{definition-radius} and the Proposition~\ref{relation n d}. The Definition~\ref{definition-radius} is a base definition for Proposition~\ref{relation n d}, so we prove them together. The Proposition~\ref{relation n d} gives the formulation for the latent N-size with gradient penalty.
Moving forward, we provide the proof of the corollary that $\nabla_{\boldsymbol{x}} D(\boldsymbol{x})\leq 0 $, which is an important corollary that deviates our $\boldsymbol\varepsilon$-centered gradient penalty. 
Subsequently, we provide the proof of the Lemma~\ref{lemma compare} which deviates the relationship between the value of the discriminator norm under three different gradient penalties. 
Lastly, we provide the proof of our main Theorem~\ref{theorem main} which summarizes the relationship between the latent N-size and three gradient penalties under the above definitions and Lemmas.}

{In the Experiments section, we provide additional experiments and present more results of our Li-CFG and $\boldsymbol\varepsilon$-centered method with common GAN models.
When working with synthetic datasets, we include visual inspection figures compared our $\boldsymbol\varepsilon$-centered method to other GAN models that use different gradient penalties.
In our work with real-world datasets, we will create visual comparison figures to evaluate the performance of the CFG method and Li-CFG on different databases, including CeleBA, LSUN Bedroom, and ImageNet. We will compare these methods at resolutions of 128x128 and 256x256 pixels. Moreover, we'll also present visual comparison figures of our $\boldsymbol\varepsilon$-centered method with DDGAN and BigGAN in LSUN Church (256x256) and ImageNet (64x64).}

\section{Analysis of the dynamic theory for the CFG}\label{adtftcm}

\noindent \textbf{Dynamic Theory for the CFG.}
This section encompasses the motivation behind and proofs for the relationship between the CFG method and the common GAN method.
In terms of the dynamic theory, the generator and discriminator in the CFG method function equivalently to the corresponding components in common GAN.
Based on this analysis, the CFG method still surfers unstable and not locally convergent problems at the Nash-equilibrium point. We visualize the gradient vector of the discriminator for CFG and Li-CFG in Fig.~\ref{Fig.convergence}. This serves as the impetus for us to introduce the Lipschitz constraint to the CFG method, aiming to enforce convergence. The dynamic theory is imported from~\citet{mescheder2018training} and~\citet{nagarajan2017gradient}.

\begin{figure}
\centering
\includegraphics[width=1\textwidth]{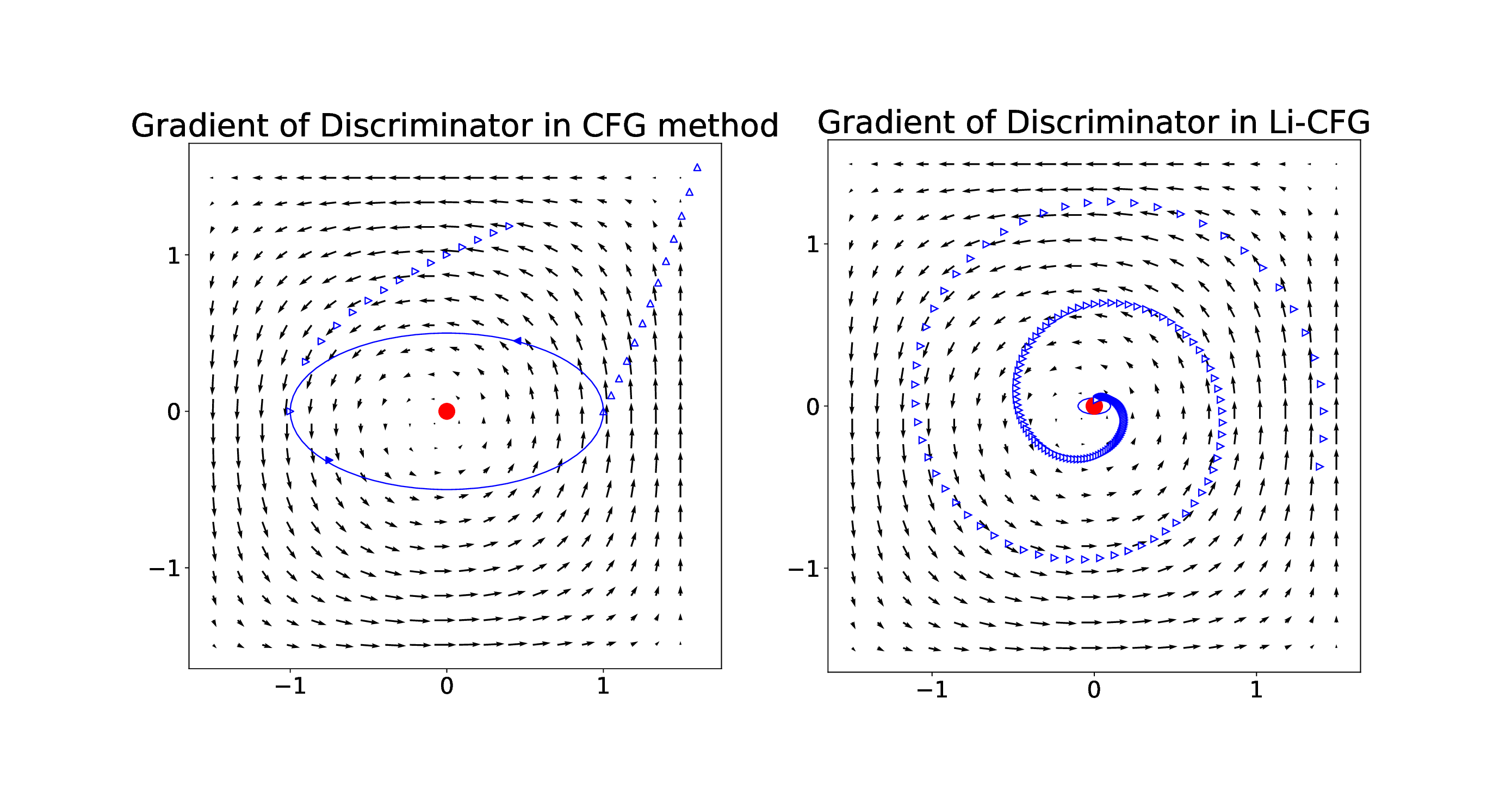}
\setlength{\abovecaptionskip}{-1.2cm}
\caption{ \label{Fig.convergence} Left and right images compare CFG and our Li-CFG, individually. 
From the conclusions of Lemma B.1 and Lemma B.2, CFG suffers the non-locally convergent near the Nash-equilibrium problems as common GAN in the dynamic theory. In contrast, our Li-CFG method can 
converges to the small neighborhood around the Nash equilibrium. }
\end{figure}
\begin{lemma}\label{lemma:discriminator}
The \textbf{loss functions of the discriminators} of CFG 
and common GAN learned by minimax games exhibit the same form and optimization objective. The form of the discriminator loss function is:
$$
\mathbb{E}_{\boldsymbol z\sim  p_{z}}\left[f\left(-D_{\psi}\left(G_{\theta}(\boldsymbol z)\right)\right)\right]
+ 
\mathbb{E}_{\boldsymbol x\sim p_{*}}\left[f\left(D_{\psi}(\boldsymbol x)\right)\right]
$$
\end{lemma}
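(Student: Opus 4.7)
The plan is to derive the stated common form from both sides and verify they coincide once we adopt the logit (score) parameterization for the discriminator.

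First, I would start from the CFG side. Since CFG trains $D$ as a logistic regression distinguishing real samples from synthetic ones, with real examples labeled $+1$ and synthetic ones labeled $-1$, the standard (binary) logistic loss on a score function $D_\psi$ is
\[
\mathbb{E}_{\boldsymbol x \sim p_*}\!\bigl[\ln(1+e^{-D_\psi(\boldsymbol x)})\bigr] + \mathbb{E}_{\boldsymbol z\sim p_z}\!\bigl[\ln(1+e^{D_\psi(G_\theta(\boldsymbol z))})\bigr].
\]
Setting $f(u):=\ln(1+e^{-u})$ (the softplus), this rewrites as $\mathbb{E}_{\boldsymbol x\sim p_*}[f(D_\psi(\boldsymbol x))]+\mathbb{E}_{\boldsymbol z\sim p_z}[f(-D_\psi(G_\theta(\boldsymbol z)))]$, which is exactly the claimed form. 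I would also recall the link established earlier in the preliminaries that the minimizer of this objective satisfies $D(\boldsymbol x)\approx \ln(p_*(\boldsymbol x)/p_m(\boldsymbol x))$, confirming the consistency between the CFG discriminator and a logistic-regression score.

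Next, I would handle the common-GAN side. The original minimax game of \citet{goodfellow2014generative} writes $D$ as a probability $\sigma(S(\boldsymbol x))$ and maximizes $\mathbb{E}_{p_*}[\ln D]+\mathbb{E}_{p_z}[\ln(1-D(G))]$. Substituting the logit parameterization $D=\sigma(D_\psi)$ and using the identities $-\ln\sigma(u)=\ln(1+e^{-u})=f(u)$ and $-\ln(1-\sigma(u))=\ln(1+e^{u})=f(-u)$, the \emph{minimization} version of the discriminator objective becomes
\[
\mathbb{E}_{\boldsymbol x\sim p_*}[f(D_\psi(\boldsymbol x))] + \mathbb{E}_{\boldsymbol z\sim p_z}[f(-D_\psi(G_\theta(\boldsymbol z)))],
\]
i.e.\ the same form as the CFG loss. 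Up to the standard sign flip that turns the ``max over $D$'' game into a ``min over $D$'' loss, the two objectives coincide. Finally, I would note that optimizing this common form yields the same first-order stationarity condition, $D_\psi^\star(\boldsymbol x)=\ln(p_*(\boldsymbol x)/p_{G_\theta}(\boldsymbol x))$, so the two discriminators are not merely formally identical but share the same optimization objective.

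The only subtle step, and arguably the main obstacle, is making the sign conventions fully unambiguous: the lemma is stated as a single displayed expression with arguments $-D_\psi(G_\theta(\boldsymbol z))$ on the fake term and $D_\psi(\boldsymbol x)$ on the real term, so I need to be careful that the softplus identification is performed \emph{after} converting the common-GAN ``maximize $\ln D$'' formulation into the minimization form, and that the CFG logistic-regression loss is written with the convention that $D$ is a real-valued score rather than a probability. Once these conventions are aligned, the equality of the two forms—and hence the equality of the optimal discriminators—follows from the identities above.
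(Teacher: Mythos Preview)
Your proposal is correct and follows essentially the same approach as the paper: both arguments reduce the CFG logistic loss and the original GAN discriminator objective to a common form by substituting the sigmoid parameterization $d(\boldsymbol x)=1/(1+e^{-D_\psi(\boldsymbol x)})$ and invoking the softplus identities $-\ln\sigma(u)=\ln(1+e^{-u})$ and $-\ln(1-\sigma(u))=\ln(1+e^{u})$. The only cosmetic difference is that the paper records the choice as $f(t)=-\ln(1+e^{-t})$ (so that the general form is maximized), whereas you take $f(u)=\ln(1+e^{-u})$ and work with the minimization version; you already flag this sign convention explicitly, so there is no gap.
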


\begin{proof} 
We can establish the equivalence through two main perspectives. Firstly, we analyze the CFG method. The loss function of the discriminator can be described as follows:
\begin{gather}
\begin{aligned}
L_{CFG}=\mathbb{E}_{\boldsymbol x\sim p_{*}}\mathrm{ln}\left(1+\exp(-D(\boldsymbol x))\right)+\mathbb{E}_{\boldsymbol x\sim p_{z}}\mathrm{ln}\left(1+\exp(D(\boldsymbol x))\right).
\end{aligned}
\end{gather}
To unify the symbolic expression, we use $\exp(D(\boldsymbol x))$ instead of $e^{D(\boldsymbol x)}$ from the CFG article in our paper.
The common GAN loss function is
\begin{gather}
L_{GAN}=\mathbb{E}_{\boldsymbol x\sim p_{*}}[\ln d(\boldsymbol x)]+\mathbb{E}_{\boldsymbol z\sim p_{\boldsymbol z}} \ln [1-d(G(\boldsymbol z))].
\end{gather}
If we set the $d(\boldsymbol x) = \frac{1}{1+\exp(-D(\boldsymbol x))}$
and substitute it into the common GAN loss function,
we can get the Loss function of the CFG method. This demonstrates the equivalence between the two loss functions.
From the second part, we can use the notation of the loss function proposed by~\citet{nagarajan2017gradient} to prove the lemma. The loss function is
\begin{gather}
\begin{aligned}
L_{COMM}(\theta, \psi)=\mathbb{E}_{\boldsymbol z\sim  p_{z}}\left[f\left(D_{\psi}\left(G_{\theta}(\boldsymbol z)\right)\right)\right]+ 
\mathbb{E}_{\boldsymbol x\sim p_{*}}\left[f\left(-D_{\psi}(\boldsymbol x)\right)\right].
\end{aligned}
\end{gather}
If we choose the f function to be $f(t)=-\ln \left(1+\exp(-t)\right)$, it leads to the loss function of the CFG method.
\end{proof}
We demonstrate the process with more detailed proof. First, we list the loss functions of the three discriminators for comparison. The discriminator's loss function proposed by ~\citet{nagarajan2017gradient} is
\begin{gather}\label{eq:common_gans}
L_{COMM}(\theta, \psi)=\mathbb{E}_{\boldsymbol z\sim  p_{z}}\left[f\left(-D_{\psi}\left(G_{\theta}(\boldsymbol z)\right)\right)\right]
+ 
\mathbb{E}_{\boldsymbol x\sim p_{*}}\left[f\left(D_{\psi}(\boldsymbol x)\right)\right].
\end{gather}
But in the notation of~\citet{mescheder2018training}, the loss function is 
\begin{gather}\label{eq:common_gans2}
L_{COMM}(\theta, \psi)=\mathbb{E}_{\boldsymbol z\sim  p_{z}}\left[f\left(D_{\psi}\left(G_{\theta}(\boldsymbol z)\right)\right)\right]
+
\mathbb{E}_{x\sim p_{*}}\left[f\left(-D_{\psi}(\boldsymbol x)\right)\right].
\end{gather}
We think both the equation are correct, and choose the Eq.~(\ref{eq:common_gans}) in our proof process. As for simplification, we omit the symbol $(\theta, \psi)$. Furthermore, we use the symbols $D$ and $G$ to denote the discriminator and generator, respectively.
The loss function of the discriminator of the CFG method is
\begin{gather}
L_{CFG}= \mathbb{E}_{\boldsymbol x\sim p_{*}}\mathrm{ln}\left(1+\exp(-D(\boldsymbol x))\right)
+\mathbb{E}_{\boldsymbol z\sim p_{z}}\mathrm{ln}\left(1+\exp(D(G(\boldsymbol z)))\right).
\label{eq:discriminator}
\end{gather}
The discriminator loss function  of common GAN is
\begin{gather}\label{eq:gans}
L_{GAN}=\mathbb{E}_{x\sim p_{*}}[\ln d(\boldsymbol x)]+\mathbb{E}_{\boldsymbol z\sim p_{z}} \ln [1-d(G(\boldsymbol z))].
\end{gather}
Our goal is to prove that the three loss functions are equivalent. One loss function can be transformed into others under certain conditions. 

The proof of Eq.~(\ref{eq:gans})
convert to Eq.~(\ref{eq:discriminator}).

We set the $d(\boldsymbol x) = \frac{1}{1+\exp(-D(\boldsymbol x))}$
and substitute it into the Eq.~(\ref{eq:gans}). Then we have
\begin{gather}
\begin{aligned}
L_{CFG} &= \mathbb{E}_{\boldsymbol x\sim p_{*}}\ln \left[\frac{1}{1+\exp(-D(\boldsymbol x))}\right] 
+\mathbb{E}_{\boldsymbol z\sim p_{z}} \ln\left[1-\frac{1}{1+\exp(-D(G(\boldsymbol z)))}\right]\nonumber\\
&=\mathbb{E}_{\boldsymbol x\sim p_{*}}\ln\left[1+\exp(-D(\boldsymbol x))\right]^{-1}
+
\mathbb{E}_{\boldsymbol z\sim p_{z}} \ln\left[\frac{\exp(-D(G(\boldsymbol z)))}{1+\exp(-D(G(\boldsymbol z)))}\right]\nonumber\\
&=\mathbb{E}_{\boldsymbol x\sim p_{*}}\ln \left[1+\exp(-D(\boldsymbol x))\right]^{-1} 
+
\mathbb{E}_{\boldsymbol z\sim p_{z}} \ln \left[\frac{\frac{1}{\exp(D(G(\boldsymbol z)))}}{\frac{1+\exp(D(G(\boldsymbol z)))}{\exp(D(G(\boldsymbol z)))}}\right]\\\nonumber
&=\mathbb{E}_{\boldsymbol x\sim p_{*}}\ln\left[1+\exp(-D(\boldsymbol x))\right]^{-1}
+
\mathbb{E}_{\boldsymbol z\sim p_{z}} \ln \left[\frac{1}{1+\exp(D(G(\boldsymbol z)))}\right]\\\nonumber
&=\mathbb{E}_{\boldsymbol x\sim p_{*}}\ln\left[1+\exp(-D(\boldsymbol x))\right]^{-1}
+
\mathbb{E}_{\boldsymbol z\sim p_{z}} \ln \left[1+\exp(D(G(\boldsymbol z)))\right]^{-1}\\\nonumber
&=-\bigg[\mathbb{E}_{\boldsymbol x\sim p_{*}}\ln \left[1+\exp(-D(\boldsymbol x))\right]
+
\mathbb{E}_{\boldsymbol z\sim p_{z}} \ln [1+\exp(D(G(\boldsymbol z)))]\bigg]\\\nonumber
&=-\bigg[\mathbb{E}_{\boldsymbol x\sim p_{*}}\ln\left(1+\exp(-D(\boldsymbol x))\right)
+
\mathbb{E}_{\boldsymbol z\sim p_{z}} \ln[1+\exp(D(G(\boldsymbol z)))]\bigg].\nonumber
\end{aligned}
\end{gather}
The equation in the bracket is the loss function of the CFG method. $-\min L_{CFG}$ has the same optimization objectives as the $\max L_{GAN}$. So both loss functions are equivalent. It is easy to transform the Eq.~(\ref{eq:discriminator}) to Eq.~(\ref{eq:gans}) as we set the $\frac{1}{1+\exp(-D(\boldsymbol x))} = d(\boldsymbol x)$.
 
The proof of Eq.~(\ref{eq:common_gans})
convert to Eq.~(\ref{eq:discriminator}).

We choice the f function to be $f(t)=-\ln \left(1+\exp(-t)\right)$. The equation is similar to the $\ln d(\boldsymbol x) = \ln\left[\frac{1}{1+\exp(-D(\boldsymbol x))}\right] = -\ln\left[1+\exp(-D(\boldsymbol x))\right]$ if we substitute the $f$ into Eq.~(\ref{eq:common_gans}), it leads to the loss function of CFG method. Both equations are equivalent. 
So the Lemma~\ref{lemma:discriminator} is proved.
\begin{lemma}\label{appendix:gvf}
The \textbf{gradient vector field of generators} of CFG and common GAN learned by minimax games exhibit the same form in the dynamic theory. The form of the gradient vector field is:
$$
\left(\begin{array}{cc}
\left[-\eta \delta(G(\boldsymbol z))\left[\nabla_{\boldsymbol x} D(G(\boldsymbol z))\right]\frac{\partial G(\boldsymbol z)}{\partial \boldsymbol \theta}\right]
&\\ 
&\\
0 &\end{array}\right)
$$ 
\end{lemma}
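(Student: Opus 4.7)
The plan is to derive the generator-side component of the gradient vector field from the CFG update rule and match it term-by-term with the one obtained from the common-GAN loss. Starting from the discrete CFG update in Eq.~(\ref{eq:generator}) together with the empirical choice $g(x)=\delta(x)\nabla_{x}D(x)$ from Eq.~(\ref{eq:empirical}), one CFG step perturbs the generator output at each latent sample $z$ by $\eta\,\delta(G(z))\nabla_{x}D(G(z))$. To translate this \emph{functional} perturbation into a \emph{parametric} one, I would invoke the reduction used in the original CFG construction: by first-order linearization, the induced parameter increment obeys $\frac{\partial G(z)}{\partial \theta}\Delta\theta\approx\eta\,\delta(G(z))\nabla_{x}D(G(z))$, and projecting along $\partial G/\partial\theta$ yields $\Delta\theta=\eta\,\delta(G(z))\nabla_{x}D(G(z))\,\frac{\partial G(z)}{\partial \theta}$. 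Taking expectation over $z\sim p_{z}$ and flipping the sign to match the convention in which the vector field stores the negative gradient of the generator loss gives exactly the top entry displayed in the lemma.

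Next I would derive the common-GAN vector field for comparison. Reading the generator's objective off Eq.~(\ref{eq:common_gans}) as $L_{\text{gen}}(\theta)=\mathbb{E}_{z\sim p_{z}}\!\left[f(-D_{\psi}(G_{\theta}(z)))\right]$ and applying the chain rule gives
\begin{equation*}
-\nabla_{\theta}L_{\text{gen}}=\mathbb{E}_{z}\!\left[f'(-D(G(z)))\,\nabla_{x}D(G(z))\,\tfrac{\partial G(z)}{\partial \theta}\right].
\end{equation*}
With the choice $f(t)=-\ln(1+\exp(-t))$ singled out in Lemma~\ref{lemma:discriminator}, a direct calculation yields $f'(-D(G(z)))=\exp(D(G(z)))/(1+\exp(D(G(z))))$, which, via the definition of $\delta(x)$ in Eq.~(\ref{eq:delta_x}) and the fact that any positive factor can be absorbed into the arbitrary scaling $s_{m}(x)$, is precisely the CFG scalar $\delta(G(z))$. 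Hence the generator components of the two vector fields agree.

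The zero in the second slot simply records that we are writing the joint $(\theta,\psi)$ field and restricting to the generator's update, in line with the block decomposition used in the dynamic-systems analyses of Mescheder and Nagarajan. Combined with Lemma~\ref{lemma:discriminator}, this gives that CFG and common GAN share the same stationary set and Jacobian structure at the Nash equilibrium, which grounds the non-local-convergence diagnosis illustrated in Fig.~\ref{Fig.convergence} and motivates the Lipschitz regularisation introduced in Section~\ref{gp}.

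The main obstacle will be the functional-to-parametric step, because CFG is formally stated on function space while the gradient vector field lives on parameter space. I would handle this through the least-squares regression construction of the original CFG paper: the residual $g_{m}$ is fitted inside the parametric family, so its image under the pullback $\partial G/\partial \theta$ coincides, up to projection error, with the functional update. An alternative first-order Taylor expansion under mild over-parameterisation would work as well, but I would prefer the regression route to avoid importing assumptions not already present in the CFG framework.
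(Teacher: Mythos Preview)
Your proposal reaches the right conclusion and, in its final paragraph, actually converges on the paper's own argument. The paper does \emph{not} go through the linearisation step you sketch first (``$\frac{\partial G}{\partial\theta}\Delta\theta\approx\eta\,\delta\nabla_x D$ then project along $\partial G/\partial\theta$''); that projection step is the loose point in your write-up, since a genuine projection would involve a pseudo-inverse rather than a bare Jacobian factor. Instead, the paper writes the CFG generator objective explicitly as the least-squares regression loss $L(\boldsymbol\theta)=\tfrac12\bigl[G_{m+1}(\boldsymbol z)-G_{1}(\boldsymbol z)\bigr]^{2}$ and differentiates it directly: with $G_{2}-G_{1}=\eta_{1}g_{1}(G(\boldsymbol z))$ and $g(\boldsymbol x)=\delta(\boldsymbol x)\nabla_{\boldsymbol x}D(\boldsymbol x)$, the gradient drops out immediately as $-\eta_{1}\delta(G(\boldsymbol z))[\nabla_{\boldsymbol x}D(G(\boldsymbol z))]\,\partial G/\partial\boldsymbol\theta$. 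This is precisely the ``regression route'' you flag as your preferred option, so you should lead with it rather than the linearisation.

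Two further points of difference. First, on the common-GAN side the paper differentiates $\ln(1-d(G(\boldsymbol z)))$ with $d(\boldsymbol x)=1/(1+\exp(-D(\boldsymbol x)))$ (i.e.\ Eq.~(\ref{eq:gans})) rather than your $f(-D(G(\boldsymbol z)))$ from Eq.~(\ref{eq:common_gans}); the two are equivalent by Lemma~\ref{lemma:discriminator}, and both give the scalar $1/(1+\exp(-D))$ which is then identified with $\eta\,\delta$. Second, the paper treats the case $M>1$ explicitly by telescoping $G_{M+1}-G_{1}=\sum_{m}\eta_{m}g_{m}(G_{m}(\boldsymbol z))$ and showing the resulting vector field is a sum of terms each of the same form as the $M=1$ case; you should add this, since the lemma is stated for general $M$.
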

We want to prove that
$
\widetilde v_{G}(\boldsymbol \theta)=\overline v_{G}(\boldsymbol \theta)
$. $\widetilde v_{G}$ denotes the gradient vector field of the common GAN generator, while $\overline v_{G}$ represents the generator gradient vector field of the CFG method. $\boldsymbol \theta$ denotes the weight vector of the generator instead of $\theta$. If the two gradient vector fields are equal to each other, we can say that proof of Lemma~\ref{appendix:gvf} is done.

\begin{proof} we expand the $\widetilde v_{G}(\boldsymbol\theta)$ , the gradient vector field of generator for common GAN  
\begin{gather}
\begin{aligned}
\widetilde v_{G}(\boldsymbol\theta)=\left(\begin{array}{cc}\nabla_{\boldsymbol 
\theta} \ln (1-d(G(\boldsymbol z)) &\\ 
&\\
0 & \end{array}\right).
\end{aligned}
\end{gather}
The value of $d$ is $d(\boldsymbol x)=\frac{1}{1+\exp(-D(\boldsymbol x)})$
. We substitute  the $d(\boldsymbol x)$ into $\widetilde v_{G}(\boldsymbol \theta)$, then 
\begin{gather}
\begin{aligned}
\widetilde v_{G}(\boldsymbol \theta)=\left(
\begin{array}{cc}
\left[\nabla_{\boldsymbol\theta} \ln \frac{\exp(-D(G(\boldsymbol z)))}{1+\exp(-D(G(\boldsymbol  z)))}\right]
&\\ 
&\\
0 &
\end{array}\right). \\
\end{aligned}
\end{gather}

We extract the $[\nabla_{\boldsymbol\theta} \ln \frac{\exp(-D(G(\boldsymbol z)))}{1+\exp(-D(G(\boldsymbol z)))}]$ out and derive it as follows:
\begin{gather}\label{eq:derivate}
\begin{aligned}
& \nabla_{\boldsymbol \theta} \ln  \frac{\exp(-D(G(\boldsymbol z)))}{1+\exp(-D(G(\boldsymbol z)))}
\\
&=\nabla_{\boldsymbol\theta} \ln \frac{\frac{1}{\exp(D(G(\boldsymbol z)))}}{\frac{1+\exp(D(G(\boldsymbol z)))}{\exp(D(G(\boldsymbol z)))}} \\
&= \nabla_{\boldsymbol\theta} \ln \frac{1}{1+\exp(D(G(\boldsymbol z)))}\\
&= -\frac{\exp(D(G(\boldsymbol z)))}{1+\exp(D(G(\boldsymbol z)))}\left[\nabla_{\boldsymbol x} D(G(\boldsymbol z))\right]\frac{\partial G(\boldsymbol z)}{\partial \boldsymbol\theta}\\
&= -\frac{1}{1+\exp(-D(G(\boldsymbol z)))}\left[\nabla_{\boldsymbol x} D(G(\boldsymbol z))\right]\frac{\partial G(\boldsymbol z)}{\partial \boldsymbol\theta}.\\
\end{aligned}
\end{gather}
We bring it back into the $\widetilde v_{G}(\boldsymbol\theta)$, we will get
\begin{gather}
\begin{aligned}\label{eq:dynamic result}
\widetilde v_{G}(\boldsymbol\theta)=\left(\begin{array}{cc}
\left[-\frac{1}{1+\exp(-D(G(\boldsymbol z)))}\left[\nabla_{\boldsymbol x} D(G(\boldsymbol z))\right]\frac{\partial G(\boldsymbol z)}{\partial \boldsymbol\theta}\right]
&\\&\\  0 &\end{array}\right). \\
\end{aligned}
\end{gather}
This is the gradient vector field of common GAN. 

Now we check the gradient vector field of the  CFG method. For the CFG method, a hyper-parameter $M$ has been used to control the varying steps of generators
\begin{gather}
G_{m+1}(\boldsymbol z)=G_{m}(\boldsymbol 
 z)+\eta_{m}g_{m}\left(G_{m}(\boldsymbol 
 z)\right).
\label{eq:G}
\end{gather}
The variables $M$ and $m$ have the same meaning.
we will discuss the M in two cases to analyze the gradient vector field of the generator for the CFG method.
The loss function of the generator for the CFG method is 
\begin{gather}
L(\boldsymbol \theta)=\frac{1}{2}\left[G_{m+1}(\boldsymbol 
 z)-G_{1}(\boldsymbol z)\right]^{2}.
 \end{gather}

First, when we consider $M$=1. In this case, the loss function can be written as 
\begin{gather}
L(\boldsymbol \theta)=\frac{1}{2}\left[G_{2}(\boldsymbol z)-G_{1}(\boldsymbol z)\right]^{2}.
\end{gather}
. Then the gradient vector field can be written as below:
\begin{gather}
\begin{aligned}
\overline v_{G}(\boldsymbol \theta)&=\left(\begin{array}{cc}
\left[\left[G_{2}(\boldsymbol z)-G_{1}(\boldsymbol z)\right] \frac{\partial G(\boldsymbol z)}{\partial \boldsymbol \theta} \right]
&\\ 
&\\
0 &\end{array}\right) \\
&=\left(\begin{array}{cc}
\left[\left[\eta_1 g_1(G(\boldsymbol z))\right] \frac{\partial G(\boldsymbol z)}{\partial \boldsymbol\theta}\right]
&\\ 
&\\
0 &\end{array}\right) \\
&=\left(\begin{array}{cc}
\left[-\eta_1 \delta(G(\boldsymbol z))\left[\nabla_{\boldsymbol x} D(G(\boldsymbol z))\right]\frac{\partial G(\boldsymbol z)}{\partial \boldsymbol \theta}\right]
&\\ 
&\\
0 &\end{array}\right).
\end{aligned}
\end{gather}


For $g(\boldsymbol x)=\delta(\boldsymbol x) \cdot \nabla_{\boldsymbol x} D(\boldsymbol x)$, we archive the above result. Now we compare the form of
$
\widetilde v_{G}(\theta)$ and $\overline v_{G}(\theta)
$. If we set the $-\frac{1}{1+\exp(-D(G(\boldsymbol z)))}$ = $-\eta_1 \delta(G(\boldsymbol z)$, the two function are both scaling factor.
We find two equations get exactly the same form.
 So the proof is done.

Next, When we consider the second case, $M>1$ and $M \rightarrow \infty$. The expansion of Eq.~(\ref{eq:G}) can be written as below:
\begin{gather}
\begin{aligned}
G_{m+1}(\boldsymbol z)=&G_{m}(\boldsymbol z)+\eta_{m} g_{m}\left(G_{m-1}(\boldsymbol z)\right) \\
&\vdots\\
G_{3}(\boldsymbol z)=&G_{2}(\boldsymbol z)+\eta_2 g_2\left(G_{2}(\boldsymbol z)\right) \\
G_{2}(\boldsymbol z)=&G_1(\boldsymbol z)+\eta_{1} g_{1}\left(G_{1}(\boldsymbol z)\right).
\end{aligned}
\end{gather}
Let us sum the right side of the equation list, and we get the form of $G_{m+1}(z)$ by $G_{1}$ as:
\begin{gather}
G_{m+1}(\boldsymbol z)=G_{1}(\boldsymbol z)+\sum_{m=1}^{M}\eta_{m}g_{m}\left(G_{m}(\boldsymbol z)\right).
\end{gather}
Now we consider the $\overline v_{G}(\boldsymbol \theta)$. When $M>1$, the gradient vector field can be written as
\begin{gather}
\begin{aligned}
\overline v_{G}(\boldsymbol \theta)&=\left(\begin{array}{cc}
\left[\nabla_{\boldsymbol\theta}\frac{1}{2}\left[G_{m+1}(\boldsymbol z)-G_{1}(\boldsymbol z)\right]^{2} \right]
&\\ 
&\\
0 &\end{array}\right) \\
&=\left(\begin{array}{cc}
\left[\nabla_{\boldsymbol \theta}\frac{1}{2}\left[\sum\limits_{m=1}^{M}\eta_{m}g_{m}\left(G_{m}(\boldsymbol z)\right)\right]^2\right]
&\\ 
&\\
0 &\end{array}\right) \\
&=\left(\begin{array}{cc}
\left[\sum\limits_{m=1}^{M}\eta_{m}g_{m}\left(G_{m}(\boldsymbol z)\right)\right]\nabla_{\boldsymbol \theta}\left[\sum\limits_{j=1}^{M}G_{j}(\boldsymbol z)\right]
&\\ 
&\\
0 &\end{array}\right) \\
&=\left(\begin{array}{cc}
\left[\sum\limits_{m=1}^{M}\eta_{m}g_{m}\left(G_{m}(\boldsymbol z)\right)\right]\left[\sum\limits_{j=1}^{M}\frac{\partial G_{j}(\boldsymbol z)}{\partial \boldsymbol \theta}\right]
&\\ 
&\\
0 &\end{array}\right) \\
&=\left(\begin{array}{cc}
\left[\sum\limits_{m=1}^{M}\sum\limits_{j=1}^{M}\eta_{m}g_{m}\left(G_{m}(\boldsymbol z)\right)\frac{\partial G_{j}(\boldsymbol z)}{\partial \boldsymbol \theta}\right]
&\\ 
&\\
0 &\end{array}\right) \\
&=\left(\begin{array}{cc}
\left[\sum\limits_{m=1}^{M}\sum\limits_{j=1}^{M}-\eta_m \delta(G_{m}(\boldsymbol z))\left[\nabla_{\boldsymbol x} D(G_{m}(\boldsymbol z))\right]
\frac{\partial G_{j}(\boldsymbol z)}{\partial \boldsymbol \theta}\right]
&\\ 
&\\
0 &\end{array}\right).\\
\end{aligned}
\end{gather}

So we get the gradient vector field of the CFG method when $T>1$. Every sub-item in the equation is a gradient vector that has the same form as the gradient vector when $T=1$, which is equivalent to common GAN. The Sum of these gradient vectors is CFG method $T>1$ which is also equivalent to common GAN. So Lemma 3.2 is proved.
 \end{proof}

\section{Theoretical analysis of our theory}
\label{appendix proof}
\subsection{Latent N-size with gradient penalty}
In this section, we will provide a detailed proof procedure for Definition~\ref{definition-radius} and Proposition~\ref{relation n d}. Since Definition~\ref{definition-radius} is the foundational definition for Proposition~\ref{relation n d}, the detailed proof procedure for Definition~\ref{definition-radius} will be presented in the proof of Proposition~\ref{relation n d}.

\noindent \textbf{Latent N-size.}
We illustrate the concept of latent N-size, which serves as the foundational definition for the subsequent theory.
\begingroup
\def\thetheorem{\ref{definition-radius}}
\begin{definition}
Let $\boldsymbol{z}_1$, $\boldsymbol{z}_2$ be two samples in the latent space. Suppose $\boldsymbol z_1$ is attracted to the mode $\mathcal{M}_{i}$ by $\hat{\epsilon}$, then there exists a neighborhood $\mathcal{N}_r\left(\boldsymbol{z}_1\right)$ of $\boldsymbol{z}_1$ such that $\boldsymbol{z}_2$ is distracted to $\mathcal{M}_{i}$ by $(\hat\epsilon/ 2-2\alpha)$, for all $\boldsymbol{z}_2 \in \mathcal{N}_r\left(\boldsymbol{z}_1\right)$. The size of $\mathcal{N}_r\left(\boldsymbol{z}_1\right)$ can be arbitrarily large but is bounded by an open ball of radius $r$. The $r$ is defined as
$$
  r=\hat\epsilon \cdot\left(2 \inf _{\boldsymbol{z}}\left\{\left(\frac{\left\|G_{\theta_t}\left(\boldsymbol z_1\right)-G_{\theta_t}(\boldsymbol z)\right\|}{\left\|\boldsymbol z_1-\boldsymbol z\right\|}+ \frac{\left\|G_{\theta_{t+1}}\left(\boldsymbol z_1\right)-G_{\theta_{t+1}}(\boldsymbol{z})\right\|}{\left\|\boldsymbol z_1-\boldsymbol{z}\right\|}\right)\right\}\right)^{-1},$$
\end{definition}
\addtocounter{theorem}{-1}
\endgroup
where the meaningful of mode $\mathcal{M}$, $attracted$ and $distracted$ are present in definition~\ref{definition1},~\ref{definition2},~\ref{definition3}, respectively.

According to this definition, the radius $r$ is inversely proportional to the discrepancy between the preceding and subsequent outputs of the generator, given a similar latent vector $\boldsymbol z$. 
A large value of $r$ results in a small difference between the previous and subsequent generator outputs, leading to mode collapse. 
Conversely, a small value of $r$ leads to a large difference, resulting in diverse synthesis.

\noindent \textbf{Latent N-size and the diversity.}
We offer three fundamental definitions of the latent N-size and demonstrate its relationship to the diversity of synthetic samples.

\begingroup
\def\thetheorem{\ref{definition1}}
\begin{definition}[Modes in Image Space]
There exist some modes $\mathcal{M}$ cover the image space $\mathcal{Y}$.
Mode $\mathcal{M}_{i} $ is a subset of $\mathcal{Y}$ satisfying $\max _{\boldsymbol{y_{k,j}} \in \mathcal{M}_{i}}\left\|\boldsymbol{y_{k}}-\boldsymbol{y_{j}}\right\|<\alpha$ and  
$\min_{\boldsymbol{y_{k}} \in \mathcal{M}_{i},\boldsymbol{y_{m}} \not\in \mathcal{M}_{i}}\alpha<\left\|\boldsymbol{y_{m}}-\boldsymbol{y_{k}}\right\|<2\alpha$, where $\boldsymbol{y_{k}}$ and $\boldsymbol{y_{j}}$ belong to the same mode $\mathcal{M}_{i}$,  $\boldsymbol{y_{m}}$ and $\boldsymbol{y_{k}}$ belong to different modes $\mathcal{M}_{i}$, and $\alpha>0$.
\end{definition}
\addtocounter{theorem}{-1}
\endgroup

\begingroup
\def\thetheorem{\ref{definition2}}
\begin{definition}[Modes Attracted]
  Let $\boldsymbol{z}_1$ be a sample in latent space, we say $\boldsymbol{z}_1$ is attracted to a mode $\mathcal{M}_{i}$ by $\hat{\epsilon}$ from a gradient step if $\left\|\boldsymbol{y_{k}}-G_{\theta_{t+1}}\left(\boldsymbol{z}_1\right)\right\|+\hat{\epsilon}<\left\|\boldsymbol{y_{k}}-G_{\theta_t}\left(\boldsymbol{z}_1\right)\right\|$, where $\boldsymbol{y_{k}} \in \mathcal{M}_{i}$ is an image in a mode $\mathcal{M}_{i}$, $\hat{\epsilon}$ denotes a small quantity, $\theta_{t}$ and $\theta_{t+1}$ are the generator parameters before and after the gradient updates respectively.  
\end{definition}
\addtocounter{theorem}{-1}
\endgroup

\begingroup
\def\thetheorem{\ref{definition3}}
\begin{definition}[Modes Distracted]
Let $\boldsymbol{z}_2$ be a sample in latent space, we say $\boldsymbol{z}_2$ is distracted to a mode $\mathcal{M}_{i}$ by $(\alpha+\hat{\epsilon})/2$ from a gradient step if $\left\|\boldsymbol{y_{m}}-G_{\theta_{t+1}}\left(\boldsymbol{z}_2\right)\right\|+(\hat\epsilon/ 2-2\alpha)<\left\|\boldsymbol{y_{k}}-G_{\theta_t}\left(\boldsymbol{z}_2\right)\right\|$, where $\boldsymbol{y_{k}} \in \mathcal{M}_{i}$ is an image in a mode $\mathcal{M}_{i}$, $\boldsymbol{y_{m}} \not\in \mathcal{M}_{i}$ is an image from other modes,  $\alpha$ keeps the same meaning as in Definition~\ref{definition1}, $\theta_{t}$ and $\theta_{t+1}$ are the generator parameters before and after the gradient updates respectively.     
\end{definition}
\addtocounter{theorem}{-1}
\endgroup

\noindent \textbf{Latent N-size with gradient penalty.}
We demonstrate the relationship between the latent N-size and the gradient penalty. 
According to Proposition~\ref{relation n d}, as $\|\nabla_{\boldsymbol x} D(\boldsymbol{x})\|$ increases, the latent N-size decreases, and vice versa.

\begingroup
\def\thetheorem{\ref{relation n d}}
\begin{proposition}
$\mathcal{N}_r\left(\boldsymbol{z}_1\right)$ can be defined with discriminator gradient penalty as follows: 
\begin{small}
$$
r=\hat{\epsilon}\cdot\left(2 \inf _{\boldsymbol{z}}\left\{ \left(\frac{2\left\|G_{\theta_t}\left(\boldsymbol{z}_1\right)-G_{\theta_t}(\boldsymbol{z})\right\|+\eta_m \delta(\boldsymbol x) \sum\limits_{m=1}^{N}\left(\|\nabla_{\boldsymbol x} D_{m}(\mathcal{Y}_2)\|+\|\nabla_{\boldsymbol x} D_{m}(\mathcal{Y})\|\right)}{\left\|\boldsymbol{z}_1-\boldsymbol{z}\right\|}\right)\right\}\right)^{-1}
$$
\end{small}
, where $\|\nabla_{\boldsymbol x} D_{m}(\mathcal{Y}_2)\|=$
$\|\nabla_{\boldsymbol x} D_{m}(G_{\theta_t}(\boldsymbol{z}_2))+R\|$ and   $\|\nabla_{\boldsymbol x} D_{m}(\mathcal{Y})\|=$
$\|\nabla_{\boldsymbol x} D_{m}(G_{\theta_t}(\boldsymbol{z}))+R\|$. $R$ stands for the discriminator gradient penalty.
\end{proposition}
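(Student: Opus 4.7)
The plan is to start from the formula for $r$ given in Definition~\ref{definition-radius} and rewrite the generator at parameter $\theta_{t+1}$ in terms of the generator at $\theta_t$ together with the $N$ CFG residual corrections, each modified by the gradient penalty term $R$. Concretely, iterating the composite update Eq.~(\ref{eq:generator}) with $g_m$ taken from Eq.~(\ref{eq:empirical}) and the discriminator gradient replaced by its regularized counterpart $\nabla_{\boldsymbol{x}}D_m(G_{\theta_t}(\boldsymbol{z})) + R$, one obtains the telescoped expression
\begin{equation*}
G_{\theta_{t+1}}(\boldsymbol{z}) \;=\; G_{\theta_t}(\boldsymbol{z}) \;+\; \eta_m \delta(\boldsymbol{x}) \sum_{m=1}^{N} \bigl[\nabla_{\boldsymbol{x}} D_m(G_{\theta_t}(\boldsymbol{z})) + R\bigr].
\end{equation*}

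Next, I would subtract the two expansions evaluated at $\boldsymbol{z}_1$ and at a probe $\boldsymbol{z}$, and apply the triangle inequality term by term. This yields
\begin{equation*}
\bigl\|G_{\theta_{t+1}}(\boldsymbol{z}_1) - G_{\theta_{t+1}}(\boldsymbol{z})\bigr\| \;\leq\; \bigl\|G_{\theta_t}(\boldsymbol{z}_1) - G_{\theta_t}(\boldsymbol{z})\bigr\| + \eta_m \delta(\boldsymbol{x}) \sum_{m=1}^{N}\bigl(\|\nabla_{\boldsymbol{x}} D_m(\mathcal{Y}_2)\| + \|\nabla_{\boldsymbol{x}} D_m(\mathcal{Y})\|\bigr),
\end{equation*}
where $\mathcal{Y}_2$ and $\mathcal{Y}$ are as defined in the proposition. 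Dividing by $\|\boldsymbol{z}_1 - \boldsymbol{z}\|$ and inserting this bound into the second summand of the bracket in Definition~\ref{definition-radius} (while the first summand contributes the $\|G_{\theta_t}(\boldsymbol{z}_1) - G_{\theta_t}(\boldsymbol{z})\|/\|\boldsymbol{z}_1 - \boldsymbol{z}\|$ term unchanged) combines the two ratios into a single fraction with numerator $2\|G_{\theta_t}(\boldsymbol{z}_1) - G_{\theta_t}(\boldsymbol{z})\| + \eta_m \delta(\boldsymbol{x}) \sum_m(\cdots)$, which matches the proposition verbatim. Taking the reciprocal (and the factor of $2$ outside) then reproduces the stated formula for $r$.

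The main obstacle is the careful bookkeeping of the index $m$ across the $N$ composite CFG steps: the step size $\eta_m$ and the per-step discriminator $D_m$ both vary with $m$, yet the proposition writes $\eta_m \delta(\boldsymbol{x})$ outside the sum, which implicitly assumes a uniform step (or else $\eta_m$ must be absorbed into $\delta(\boldsymbol{x})$ via the scaling-factor freedom noted in the discussion after Eq.~(\ref{eq:delta_x})). A secondary subtlety is the notational asymmetry between the $\boldsymbol{z}_1$ in Definition~\ref{definition-radius} and the $\boldsymbol{z}_2$ appearing in $\mathcal{Y}_2 = G_{\theta_t}(\boldsymbol{z}_2)$ in Proposition~\ref{relation n d}: this must be read as the statement being applied at the specific probe vector $\boldsymbol{z}_2 \in \mathcal{N}_r(\boldsymbol{z}_1)$ whose distractedness is being controlled, so the bound is instantiated at $\boldsymbol{z}_2$ rather than at $\boldsymbol{z}_1$ when we plug into the neighborhood definition. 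Once both conventions are fixed, the proposition follows directly from the triangle inequality and the CFG update rule, with no further analytic ingredient required.
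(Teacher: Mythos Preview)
Your proposal is correct and follows essentially the same route as the paper: expand $G_{\theta_{t+1}}$ via the CFG update with the regularized gradient, apply the triangle inequality to split off the $G_{\theta_t}$ difference and the two gradient-penalty norms, and substitute into the bracket of Definition~\ref{definition-radius}. The only notable difference is that the paper bundles the derivation of the radius formula in Definition~\ref{definition-radius} itself (from the attracted/distracted inequalities via repeated triangle inequalities on $\|\boldsymbol{y}_m - G_{\theta_{t+1}}(\boldsymbol{z}_2)\|$) into the same proof block, whereas you take that formula as given; since it is stated as a definition, your choice is defensible, and the remaining argument matches the paper's Eq.~(\ref{eq:stand}) line by line.
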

\addtocounter{theorem}{-1}
\endgroup

\begin{proof}
    With the gradient penalty of CFG formulation
\begin{gather}\label{eq:cfg}
\begin{aligned}
        G_{\theta_{t+1}}\left(\boldsymbol{z}\right) &= G_{\theta_{t}}\left(\boldsymbol{z}\right) + \sum\limits_{m=1}^{N}\eta_m \delta(\boldsymbol{x})\left(\nabla_{\boldsymbol{x}} D_{m}(G_{\theta_t}(\boldsymbol{z}))\right) \\
&=G_{\theta_{t}}\left(\boldsymbol{z}\right) + \sum\limits_{m=1}^{N}\eta_m \delta(\boldsymbol{x})\left(\nabla_{\boldsymbol{x}} D_{m}(G_{\theta_t}(\boldsymbol{z}))+ R  \right), 
\end{aligned}
\end{gather}
we are now ready to prove the Proposition~\ref{relation n d}.
\begin{gather}
\begin{aligned}
\left\|\boldsymbol{y_m}-G_{\theta_{t+1}}\left(\boldsymbol{z}_2\right)\right\| & 
\leq\left\|\boldsymbol{y_m}-\boldsymbol{y_k}\right\|+\left\|\boldsymbol{y_k}-G_{\theta_{t+1}}\left(\boldsymbol{z}_2\right)\right\| \\
&\leq\left\|\boldsymbol{y_m}-\boldsymbol{y_k}\right\|+\left\|\boldsymbol{y_k}-G_{\theta_{t+1}}\left(\boldsymbol{z}_1\right)\right\|+\left\|G_{\theta_{t+1}}\left(\boldsymbol{z}_1\right)-G_{\theta_{t+1}}\left(\boldsymbol{z}_2\right)\right\| \\
& <\left\|\boldsymbol{y_m}-\boldsymbol{y_k}\right\|+ \left\|\boldsymbol{y_k}-G_{\theta_t}\left(\boldsymbol{z}_1\right)\right\|+\left\|G_{\theta_{t+1}}\left(\boldsymbol{z}_1\right)-G_{\theta_{t+1}}\left(\boldsymbol{z}_2\right)\right\|-\hat\epsilon \quad \\
& \leq \left\|\boldsymbol{y_m}-\boldsymbol{y_k}\right\|+ \left\|\boldsymbol{y_k}-G_{\theta_t}\left(\boldsymbol{z}_2\right)\right\|+\left\|G_{\theta_t}\left(\boldsymbol{z}_2\right)-G_{\theta_t}\left(\boldsymbol{z}_1\right)\right\|
\\
&+\left\|G_{\theta_{t+1}}\left(\boldsymbol{z}_1\right)-G_{\theta_{t+1}}\left(\boldsymbol{z}_2\right)\right\|-\hat\epsilon \\
& =\left(\frac{\left\|G_{\theta_t}\left(\boldsymbol{z}_1\right)-G_{\theta_t}\left(\boldsymbol{z}_2\right)\right\|}{\left\|\boldsymbol{z}_1-\boldsymbol{z}_2\right\|}+\frac{\left\|G_{\theta_{t+1}}\left(\boldsymbol{z}_1\right)-G_{\theta_{t+1}}\left(\boldsymbol{z}_2\right)\right\|}{\left\|\boldsymbol{z}_1-\boldsymbol{z}_2\right\|}\right)\left\|\boldsymbol{z}_1-\boldsymbol{z}_2\right\| \\
&+\left\|\boldsymbol{y_m}-\boldsymbol{y_k}\right\|+ \left\|\boldsymbol{y_k}-G_{\theta_t}\left(\boldsymbol{z}_2\right)\right\|-\hat\epsilon .
\end{aligned}
\end{gather}
This implies
$$
\left\|\boldsymbol{y_m}-G_{\theta_{t+1}}\left(\boldsymbol{z}_2\right)\right\|+(\frac{\hat\epsilon}{2}-2\alpha )<\left\|\boldsymbol{y_k}-G_{\theta_t}\left(\boldsymbol{z}_2\right)\right\|,
$$ which is the content of definition \ref{definition3}
and
$$
\left(\frac{\left\|G_{\theta_t}\left(\boldsymbol{z}_1\right)-G_{\theta_t}\left(\boldsymbol{z}_2\right)\right\|}{\left\|\boldsymbol{z}_1-\boldsymbol{z}_2\right\|}+\frac{\left\|G_{\theta_{t+1}}\left(\boldsymbol{z}_1\right)-G_{\theta_{t+1}}\left(\boldsymbol{z}_2\right)\right\|}{\left\|\boldsymbol{z}_1-\boldsymbol{z}_2\right\|}\right)\left\|\boldsymbol{z}_1-\boldsymbol{z}_2\right\| \leq \frac{\hat\epsilon}{2} .
$$
We define the latent N-size of $\boldsymbol{z}_1$ is
$$
\mathcal{N}_\tau\left(\boldsymbol{z}_1\right)=\left\{\boldsymbol{z}:\left(\frac{\left\|G_{\theta_t}\left(\boldsymbol z_1\right)-G_{\theta_t}(\boldsymbol z)\right\|}{\left\|\boldsymbol z_1- \boldsymbol z\right\|}+ \frac{\left\|G_{\theta_{t+1}}\left(\boldsymbol z_1\right)-G_{\theta_{t+1}}(\boldsymbol{z})\right\|}{\left\|\boldsymbol z_1-\boldsymbol{z}\right\|}\right) \leq \tau\right\}.
$$
Then, the maximum latent N-size 
\begin{align}\label{radius}
  r=\hat\epsilon \cdot\left(2 \inf _{\boldsymbol{z}}\left\{\left(\frac{\left\|G_{\theta_t}\left(\boldsymbol z_1\right)-G_{\theta_t}(\boldsymbol z)\right\|}{\left\|\boldsymbol z_1-\boldsymbol z\right\|}+ \frac{\left\|G_{\theta_{t+1}}\left(\boldsymbol z_1\right)-G_{\theta_{t+1}}(\boldsymbol{z})\right\|}{\left\|\boldsymbol z_1-\boldsymbol{z}\right\|}\right)\right\}\right)^{-1} . 
\end{align}
Here, we finish the proof of the definition~\ref{definition-radius}.

Then, we bring the CFG definition of generator Eq.~(\ref{eq:cfg})
into the Eq.~(\ref{radius}), and expand the sum of the additions in parentheses as below:
\begin{gather}\label{eq:stand}
\begin{aligned}
&\frac{\left\|G_{\theta_t}\left(\boldsymbol z_1\right)-G_{\theta_t}(\boldsymbol z)\right\|}{\left\|\boldsymbol z_1-\boldsymbol z\right\|}+ \frac{\left\|G_{\theta_{t+1}}\left(\boldsymbol z_1\right)-G_{\theta_{t+1}}(\boldsymbol z)\right\|}{\left\|\boldsymbol z_1-\boldsymbol z\right\|}\\
&=\frac{\left\|G_{\theta_t}\left(\boldsymbol z_1\right)-G_{\theta_t}(\boldsymbol z)\right\|}{\left\|\boldsymbol z_1-\boldsymbol z\right\|}\\
&+ \frac{\left\|G_{\theta_t}\left(\boldsymbol z_1\right)-G_{\theta_t}(\boldsymbol z)+\sum\limits_{m=1}^{N}\eta_m \delta(\boldsymbol x)[(\nabla_{\boldsymbol x}  D_{m}(G_{\theta_t}(\boldsymbol z_1))+R)-(\nabla_{\boldsymbol x} D_{m}(G_{\theta_t}(\boldsymbol z))+R)]\right\|}{\left\|\boldsymbol z_1-\boldsymbol z\right\|}\\
&\leq \frac{2\left\|G_{\theta_t}\left(\boldsymbol z_1\right)-G_{\theta_t}(\boldsymbol z)\right\|}{\left\|\boldsymbol z_1-\boldsymbol z\right\|}\\
&+\frac{\left\|\sum\limits_{m=1}^{N}\eta_m \delta(\boldsymbol x)[(\nabla_{\boldsymbol x} D_{m}(G_{\theta_t}(\boldsymbol z_1))+R)-(\nabla_{\boldsymbol x} D_{m}(G_{\theta_t}(\boldsymbol z))+R)]\right\|}{\left\|\boldsymbol z_1-\boldsymbol z\right\|}\\
&\leq \frac{2\left\|G_{\theta_t}\left(\boldsymbol z_1\right)-G_{\theta_t}(\boldsymbol z)\right\|}{\left\|\boldsymbol z_1-\boldsymbol z\right\|}\\
&+\frac{\left\|\sum\limits_{m=1}^{N}\eta_m \delta(\boldsymbol x)(\nabla_{\boldsymbol x} D_{m}(G_{\theta_t}(\boldsymbol z_1))+R)\right\|+\left\|\sum\limits_{m=1}^{N}\eta_m \delta(\boldsymbol x)(\nabla_{\boldsymbol x} D_{m}(G_{\theta_t}(\boldsymbol z))+R)\right\|}{\left\|\boldsymbol z_1-\boldsymbol z\right\|}.\\
\end{aligned}
\end{gather}
As for simple, we define 
$\|\nabla_{\boldsymbol x} D_{m}(\mathcal{Y}_2)\|=$
$\|\nabla_{\boldsymbol x} D_{m}(G_{\theta_t}(\boldsymbol{z}_2))+R\|$ and   $\|\nabla_{\boldsymbol x} D_{m}(\mathcal{Y})\|=$
$\|\nabla_{\boldsymbol x} D_{m}(G_{\theta_t}(\boldsymbol{z}))+R\|$, and extract the symbol $\eta_m\delta(\boldsymbol x) \sum\limits_{m=1}^{N}$ out of the norm. So we finish the proof of Proposition~\ref{relation n d}.
\end{proof}

\subsection{\texorpdfstring{$\boldsymbol\varepsilon$}{-}-centered GP}

Observe from  Eq.~(\ref{eq:theorem 2.1}) and Eq.~(\ref{eq:empirical}) in CFG method, we can derive the following conclusion: $\nabla_{\boldsymbol{x}} D(\boldsymbol{x})\leq 0 $.
This is an important corollary derived from the CFG method. With this corollary, we can prove that the latent N-size corresponding to our $\boldsymbol{\varepsilon}$-centered GP is the smallest among all three gradient penalties in the following section.
\begin{proof}
The Eq.~(\ref{eq:theorem 2.1}) from CFG implies that for $\frac{d L\left(p_m\right)}{d m}$ to be negative so that the distance $L$ 
decreases, we should choose $g_m(\boldsymbol x)$ to be:
$$g_m(\boldsymbol x)=-s_m(\boldsymbol x) \phi_0\left(\nabla_{\boldsymbol x} \ell_2^{\prime}\left(p_*(\boldsymbol x), p_m(\boldsymbol x)\right)\right),$$
where $s_m(\boldsymbol x)>0$ is an arbitrary scaling factor. $\phi_0(u)$ is a vector function such that $\phi(u)=u \cdot \phi_0(u) \geq 0$ and that $\phi(u)=0$ if and only if $u=0$, e.g., $\left(\phi_0(u)=u, \phi(u)=\|u\|_2^2\right)$ or $\left(\phi_0(u)=\operatorname{sign}(u), \phi(u)=\|u\|_1\right)$. With this choice of $g_m(\boldsymbol x)$, we obtain
$$
\frac{d L\left(p_m\right)}{d m}=-\int s_m(\boldsymbol x) p_m(\boldsymbol x) \phi\left(\nabla_{\boldsymbol x} \ell_2^{\prime}\left(p_*(\boldsymbol x), p_m(\boldsymbol x)\right)\right) d \boldsymbol x \leq 0 ,
$$
that is, the distance $L$ is guaranteed to decrease unless the equality holds.
So, we can know that $g_m(\boldsymbol x)\leq 0$.
Then, we scrutiny the Eq.~(\ref{eq:empirical})  from which can deviate the equation:
\begin{gather}
\begin{aligned}
  g_m(\boldsymbol{x})&=-s_m(\boldsymbol{x}) \phi_0\left(\nabla_x \ell_2^{\prime}\left(p_*(\boldsymbol{x}), p_m(\boldsymbol{x})\right)\right) \\
  &=-s_m(\boldsymbol{x})\nabla_x \ell_2^{\prime}\left(p_*(\boldsymbol{x}), p_m(\boldsymbol{x})\right)\\
  &=-s_m(\boldsymbol{x})f^{\prime \prime}\left(r_m(\boldsymbol{x})\right) \nabla r_m(\boldsymbol{x})\\
  &\approx s_m(\boldsymbol{x})f^{\prime \prime}\left(\tilde{r}_m(\boldsymbol{x})\right) \tilde{r}_m(\boldsymbol{x})\nabla_x D(\boldsymbol{x}),\nonumber
\end{aligned}
\end{gather}
where $s_m(x)>0$ is an arbitrary scaling factor, $\ell_2\left(\rho_*, \rho\right)=\rho_* f\left(\rho / \rho_*\right)$, $\nabla_x \ell_2^{\prime}\left(p_*(x), p_m(x)\right)=f^{\prime \prime}\left(r_m(x)\right) \nabla r_m(x)$,  $f^{\prime\prime}=1/x^2$ when $f$ is KL-divergence and $r_m(\boldsymbol{x})=\exp(-D(\boldsymbol{x}))\approx p_m(x)/p_*(x)=\tilde{r}_m(\boldsymbol{x})$ when $D(x)\approx \ln \frac{p_*(x)}{p_m(x)}$, which is the analytic solution of the CFG discriminator. Based on the above formulation, we can find that
$s(\boldsymbol x)$ is a scaling function that is always greater than 0,
$\tilde{r}(\boldsymbol x)$ is a exponential function, and $f_{kl}''(\tilde{r}(\boldsymbol x))$ is a non negative function composed of exponential function $\tilde{r}(\boldsymbol x)$. So if equation $g_m(\boldsymbol x)\leq 0$ holds, the $ \nabla_{\boldsymbol x} D(\boldsymbol x)\leq 0$ also holds. 
\end{proof}

We define our method as the $\boldsymbol\varepsilon$-centered Gradient Penalty. We use notation $\boldsymbol\varepsilon$ in our penalty name and equation to differ from the hyper-parameter $\varepsilon'$.
The $\boldsymbol{\varepsilon}$-centered GP is
\begin{gather}
R(\theta, \psi)=\frac{\gamma}{2} \mathrm{E}_{\hat{\boldsymbol x}}\left(\left\|\nabla_{\boldsymbol x} D_{\psi}(\hat{\boldsymbol x})-\boldsymbol \varepsilon\right\|\right)^{2},
\end{gather}
 where $\boldsymbol{\varepsilon}$ is a vector such that $\Vert \boldsymbol\varepsilon \Vert_{2}= \varepsilon'$
 with $\varepsilon'=\sqrt{C\cdot N^2\cdot \boldsymbol\varepsilon^{2}}$,  $N$ and $C$ are dimensions and channels of the real data respectively. $\hat{\boldsymbol x}$ is sampled uniformly on the line segment between two random points vector $\boldsymbol x_{1} \sim p_{\theta}\left(\boldsymbol x_{1}\right), \boldsymbol x_{2} \sim p_{\mathcal{D}}\left(\boldsymbol x_{2}\right)$.

Combining the corollary $\nabla_{\boldsymbol{x}} D(\boldsymbol{x})\leq 0 $, our $\boldsymbol{\varepsilon}$-centered GP increases the $\|\nabla_{\boldsymbol{x}} D(\boldsymbol{x})\|$ as to achieve a better latent N-size which other two gradient penalty behaviors worse.

When training the GAN model with the loss function  Eq.~(\ref{general-d}) and Eq.~(\ref{epsilon-center-GP}), our approach will control the gradient of the discriminator and result in a diversity of synthesized samples.

\subsection{Latent N-size with different gradient penalty}

In this section, we will derive our final theorem about the latent N-size with different gradient penalties.

First, we establish the relationship among the latent N-size for three gradient penalties in the following Lemma. 
We will substitute different types of gradient penalty into the Proposition~\ref{relation n d}.

\begingroup
\def\thetheorem{\ref{lemma compare}}
\begin{lemma}
The norms of the three Gradient Penalties, which dictate the latent N-size, are defined as follows: $\|R_1\|$ =  $\|\left(\left\|\nabla_{\boldsymbol x} D_{m}(G_{\theta_t}(\boldsymbol z))\right\|-g_0\right)\|$, $\|R_0\|=\|\left(\left\|\nabla_{\boldsymbol x} D_{m}(G_{\theta_t}(\boldsymbol z))\right\|\right)\| $, $ \|R_{\boldsymbol\varepsilon}\|$=$\|\left(\left\|\nabla_{\boldsymbol x} D_{m}(G_{\theta_t}(\boldsymbol z))\right\|+\|\boldsymbol\varepsilon\|\right)\|$, 
respectively. 
The order of magnitude between the norms of three Gradient Penalty is $\|R_1\|<\|R_0\|<\|R_{\boldsymbol\varepsilon}\|$. Consequently, the relationship between the latent N-size of three Gradient Penalty is $r_{R_1}>r_{R_0}>r_{R_{\boldsymbol\varepsilon}}$.
\end{lemma}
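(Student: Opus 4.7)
The plan is to split the argument into two parts: first establish the pointwise ordering of the three gradient-penalty norms by a direct computation that leans on the corollary $\nabla_{\boldsymbol x} D(\boldsymbol x) \leq 0$, and then transport this ordering through Proposition~\ref{relation n d} to obtain the corresponding ordering of latent N-sizes.

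For the norm comparison, I would first observe that $\|R_0\| = \|\nabla_{\boldsymbol x} D_m(G_{\theta_t}(\boldsymbol z))\|$ by definition. For the $1$-centered penalty, the reverse triangle inequality gives $\|R_1\| = \bigl|\,\|\nabla_{\boldsymbol x} D_m\| - g_0\bigr| \leq \|\nabla_{\boldsymbol x} D_m\| = \|R_0\|$ whenever $g_0 > 0$, and the inequality is strict in the generic case $\|\nabla_{\boldsymbol x} D_m\| \neq 0$. For the $\boldsymbol\varepsilon$-centered penalty, I would invoke the corollary derived in the CFG analysis, namely that each component of $\nabla_{\boldsymbol x} D$ is non-positive. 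Choosing $\boldsymbol\varepsilon$ in the opposite direction to $\nabla_{\boldsymbol x} D$ (which is the content of our $\boldsymbol\varepsilon$-centered construction, where $\boldsymbol\varepsilon$ is a positive-oriented vector of prescribed $\ell_2$-norm), the vectors $\nabla_{\boldsymbol x} D$ and $-\boldsymbol\varepsilon$ point in the same half-space, so the triangle inequality saturates:
\begin{equation*}
\|R_{\boldsymbol\varepsilon}\| = \|\nabla_{\boldsymbol x} D_m - \boldsymbol\varepsilon\| = \|\nabla_{\boldsymbol x} D_m\| + \|\boldsymbol\varepsilon\| > \|R_0\|.
\end{equation*}
Chaining the two inequalities yields the stated ordering $\|R_1\| < \|R_0\| < \|R_{\boldsymbol\varepsilon}\|$.

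For the second part, I would appeal directly to the expression for $r$ in Proposition~\ref{relation n d}. The only penalty-dependent quantity in that formula is the sum $\|\nabla_{\boldsymbol x} D_m(\mathcal Y_2)\| + \|\nabla_{\boldsymbol x} D_m(\mathcal Y)\|$ inside the numerator of the infimum, and these norms are exactly $\|\nabla_{\boldsymbol x} D_m + R\|$ for the appropriate $R$. Since the map $\|R\| \mapsto r$ is order-reversing (larger numerator gives a larger infimum, hence a smaller reciprocal), the norm ordering transfers immediately to $r_{R_1} > r_{R_0} > r_{R_{\boldsymbol\varepsilon}}$.

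The main obstacle is the $\boldsymbol\varepsilon$-centered step: the triangle inequality alone only gives the upper bound $\|\nabla_{\boldsymbol x} D - \boldsymbol\varepsilon\| \leq \|\nabla_{\boldsymbol x} D\| + \|\boldsymbol\varepsilon\|$, which is the wrong direction. The lemma therefore truly depends on the sign information from the CFG corollary $\nabla_{\boldsymbol x} D \leq 0$ together with a matched sign convention on $\boldsymbol\varepsilon$, and a careful statement of these conventions is what makes the saturating equality legitimate. A secondary (minor) subtlety is that the $1$-centered inequality uses a generic nondegeneracy assumption $\|\nabla_{\boldsymbol x} D_m\| > 0$; I would state this explicitly rather than treat it as obvious.
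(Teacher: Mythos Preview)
Your proposal is correct and follows essentially the same route as the paper: substitute each penalty into the bound from Proposition~\ref{relation n d}, invoke the CFG corollary $\nabla_{\boldsymbol x} D \leq 0$ to justify the $\boldsymbol\varepsilon$-centered step, and then read off the reversed ordering of radii from the monotonicity of the formula for $r$. If anything you are more scrupulous than the paper, which simply asserts the final scalar inequality without isolating the sign-alignment issue or the nondegeneracy needed for $\|R_1\| < \|R_0\|$; your explicit flagging of both points is a genuine improvement and worth keeping in the write-up.
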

\addtocounter{theorem}{-1}
\endgroup
\begin{proof}
Let us start with the result from the Eq~.\ref{eq:stand}. We will insert three distinct Gradient Penalties into it and determine the latent N-size for each of these penalties. Next, we will focus on the primary component of the three equations and compare the values among them.

The equation of the $1$-centered gradient penalty is 
\begin{gather}\label{eq:gp1}
\begin{aligned}
R_{g_0}&=\frac{\gamma}{2} \mathrm{E}_{\hat{\boldsymbol x}}\left(\left\|\nabla_{\boldsymbol x} D_{m}(\hat{\boldsymbol x})\right\|-g_{0}\right)^{2}. 
\end{aligned}
\end{gather}
The equation of $0$-centered gradient penalty is
\begin{gather}\label{eq:gp0}
\begin{aligned}
R_{0} &=\frac{\gamma}{2} \mathrm{E}_{\hat{\boldsymbol x}}\left\|\nabla_{\hat{\boldsymbol x}} D_{\psi}(\hat{\boldsymbol x})\right\|^{2}.
\end{aligned}
\end{gather}
The equation of $\varepsilon$-centered gradient penalty is
\begin{gather}\label{eq:gp our}
\begin{aligned}
R_{\varepsilon}&=\frac{\gamma}{2} \mathrm{E}_{\hat{\boldsymbol x}}\left\|\nabla_{\boldsymbol x} D_{\psi}(\hat{\boldsymbol x})-\boldsymbol \varepsilon\right\|^{2}.
\end{aligned}
\end{gather}
We bring them back to the Eq~.(\ref{eq:cfg}) and expand the Gradient Penalty expression. We omit the expectation symbol and focus on the gradient. Then,  we plug the above results 
into Eq~.\ref{eq:stand} and we have the different gradient penalty equation for the sum of the additions in latent N-size parentheses. As for simplicity, we just focus on it and omit other symbols. We called it the determined part of the latent N-size.

The determined part of the latent N-size of $1$-centered gradient penalty is
\begin{gather}\label{dpgp1}
\begin{aligned}
&\frac{\left\|G_{\theta_t}\left(\boldsymbol z_1\right)-G_{\theta_t}(\boldsymbol z)\right\|}{\left\|\boldsymbol z_1-\boldsymbol z\right\|}+ \frac{\left\|G_{\theta_{t+1}}\left(\boldsymbol z_1\right)-G_{\theta_{t+1}}(\boldsymbol z)\right\|}{\left\|\boldsymbol z_1-\boldsymbol z\right\|}\\
&\leq \frac{2\left\|G_{\theta_t}\left(\boldsymbol z_1\right)-G_{\theta_t}(\boldsymbol z)\right\|}{\left\|\boldsymbol z_1-\boldsymbol z\right\|}\\
&+\frac{\left\|\sum\limits_{m=1}^{N}\eta_m \delta(\boldsymbol x)(\nabla_{\boldsymbol x} D_{m}(G_{\theta_t}(\boldsymbol z_1))+ R\right\|+\left\|\sum\limits_{m=1}^{N}\eta_m \delta(\boldsymbol x)(\nabla_{\boldsymbol x} D_{m}(G_{\theta_t}(\boldsymbol z))+ R\right\|}{\left\|\boldsymbol z_1-\boldsymbol z\right\|}\\
&=\frac{2\left\|G_{\theta_t}\left(\boldsymbol z_1\right)-G_{\theta_t}(\boldsymbol z)\right\|}{\left\|\boldsymbol z_1-\boldsymbol z\right\|}\\
&+\frac{\left\|\sum\limits_{m=1}^{N}\eta_m \delta(\boldsymbol x)\left(\nabla_{\boldsymbol x} D_{m}(G_{\theta_t}(\boldsymbol z_1))+ \left(\left\|\nabla_{\boldsymbol x} D_{m}(G_{\theta_t}(\boldsymbol z_1))\right\|-g_{0}\right)^2\right)\right\|}{\left\|\boldsymbol z_1-\boldsymbol z\right\|}\\
&+\frac{\left\|\sum\limits_{m=1}^{N}\eta_m \delta(\boldsymbol x)\left(\nabla_{\boldsymbol x} D_{m}(G_{\theta_t}(\boldsymbol z))+ \left(\left\|\nabla_{\boldsymbol x} D_{m}(G_{\theta_t}(\boldsymbol z))\right\|-g_{0}\right)^2\right)\right\|}{\left\|\boldsymbol z_1-\boldsymbol z\right\|}.\\
\end{aligned}
\end{gather}
The determined part of the latent N-size of $0$-centered gradient penalty is
\begin{gather}
\begin{aligned}\label{dpgp0}
&\frac{\left\|G_{\theta_t}\left(\boldsymbol z_1\right)-G_{\theta_t}(\boldsymbol z)\right\|}{\left\|\boldsymbol z_1-\boldsymbol z\right\|}+ \frac{\left\|G_{\theta_{t+1}}\left(\boldsymbol z_1\right)-G_{\theta_{t+1}}(\boldsymbol z)\right\|}{\left\|\boldsymbol z_1-\boldsymbol z\right\|}\\
&\leq \frac{2\left\|G_{\theta_t}\left(\boldsymbol z_1\right)-G_{\theta_t}(\boldsymbol z)\right\|}{\left\|\boldsymbol z_1-\boldsymbol z\right\|}\\
&+\frac{\left\|\sum\limits_{m=1}^{N}\eta_m \delta(\boldsymbol x) (\nabla_{\boldsymbol x} D_{m}(G_{\theta_t}(\boldsymbol z_1))+  R\right\|+\left\|\sum\limits_{m=1}^{N}\eta_m\delta(\boldsymbol x)(\nabla_{\boldsymbol x} D_{m}(G_{\theta_t}(\boldsymbol z))+ R\right\|}{\left\|\boldsymbol z_1-\boldsymbol z\right\|}\\
&=\frac{2\left\|G_{\theta_t}\left(\boldsymbol z_1\right)-G_{\theta_t}(\boldsymbol z)\right\|}{\left\|\boldsymbol z_1-\boldsymbol z\right\|}\\
&+\frac{\left\|\sum\limits_{m=1}^{N}\eta_m \delta(\boldsymbol x)\left(\nabla_{\boldsymbol x} D_{m}(G_{\theta_t}(\boldsymbol z_1))+ \left(\left\|\nabla_{\boldsymbol x} D_{m}(G_{\theta_t}(\boldsymbol z_1))\right\|\right)^2\right)\right\|}{\left\|\boldsymbol z_1-\boldsymbol z\right\|}\\
&+\frac{\left\|\sum\limits_{m=1}^{N}\eta_m \delta(\boldsymbol x)\left(\nabla_{\boldsymbol x} D_{m}(G_{\theta_t}(\boldsymbol z))+ \left(\left\|\nabla_{\boldsymbol x} D_{m}(G_{\theta_t}(\boldsymbol z))\right\|\right)^2\right)\right\|}{\left\|\boldsymbol z_1-\boldsymbol z\right\|}.\\    
\end{aligned}  
\end{gather}
With prior knowledge from CFG that $\nabla_x D_{m}(x)\leq 0$, 
The determined part of the latent N-size of $\boldsymbol\varepsilon$-centered gradient penalty is
\begin{gather}
\begin{aligned}\label{dpgp2}
&\frac{\left\|G_{\theta_t}\left(\boldsymbol z_1\right)-G_{\theta_t}(\boldsymbol z)\right\|}{\left\|\boldsymbol z_1-\boldsymbol z\right\|}+ \frac{\left\|G_{\theta_{t+1}}\left(\boldsymbol z_1\right)-G_{\theta_{t+1}}(\boldsymbol z)\right\|}{\left\|\boldsymbol z_1-\boldsymbol z\right\|}\\
&\leq \frac{2\left\|G_{\theta_t}\left(\boldsymbol z_1\right)-G_{\theta_t}(\boldsymbol z)\right\|}{\left\|\boldsymbol z_1-\boldsymbol z\right\|}\\
&+\frac{\left\|\sum\limits_{m=1}^{N}\eta_m \delta(\boldsymbol x)(\nabla_{\boldsymbol x} D_{m}(G_{\theta_t}(\boldsymbol z_1))+ R\right\|+\left\|\sum\limits_{m=1}^{N}\eta_m \delta(\boldsymbol x)(\nabla_{\boldsymbol x} D_{m}(G_{\theta_t}(\boldsymbol z))+  R\right\|}{\left\|\boldsymbol z_1-\boldsymbol z\right\|}\\
&=\frac{2\left\|G_{\theta_t}\left(\boldsymbol z_1\right)-G_{\theta_t}(\boldsymbol z)\right\|}{\left\|\boldsymbol z_1-\boldsymbol z\right\|}\\
&+\frac{\left\|\sum\limits_{m=1}^{N}\eta_m \delta(\boldsymbol x)\left(\nabla_{\boldsymbol x} D_{m}(G_{\theta_t}(\boldsymbol z_1))+ \left(\left\|\nabla_{\boldsymbol x} D_{m}(G_{\theta_t}(\boldsymbol z_1))-\boldsymbol\varepsilon\right\|\right)^2\right)\right\|  }{\left\|\boldsymbol z_1-\boldsymbol z\right\|}\\
&+\frac{\left\|\sum\limits_{m=1}^{N}\eta_m \delta(\boldsymbol x)\left(\nabla_{\boldsymbol x} D_{m}(G_{\theta_t}(\boldsymbol z))+ \left(\left\|\nabla_{\boldsymbol x} D_{m}(G_{\theta_t}(\boldsymbol z))-\boldsymbol\varepsilon\right\|\right)^2\right)\right\|  }{\left\|\boldsymbol z_1-\boldsymbol z\right\|}\\
&\leq\frac{2\left\|G_{\theta_t}\left(\boldsymbol z_1\right)-G_{\theta_t}(\boldsymbol z)\right\|}{\left\|\boldsymbol z_1-\boldsymbol z\right\|}\\
&+\frac{\left\|\sum\limits_{m=1}^{N}\eta_m \delta(\boldsymbol x)\left(\nabla_{\boldsymbol x} D_{m}(G_{\theta_t}(\boldsymbol z_1))+ \left(\left\|\nabla_{\boldsymbol x} D_{m}(G_{\theta_t}(\boldsymbol z_1))\right\|+\|\boldsymbol\varepsilon\|\right)^2\right)\right\|}{\left\|\boldsymbol z_1-\boldsymbol z\right\|}\\
&+\frac{\left\|\sum\limits_{m=1}^{N}\eta_m \delta(\boldsymbol x)\left(\nabla_{\boldsymbol x} D_{m}(G_{\theta_t}(\boldsymbol z))+ \left(\left\|\nabla_{\boldsymbol x} D_{m}(G_{\theta_t}(\boldsymbol z))\right\|+\|\boldsymbol\varepsilon\|\right)^2\right)\right\|}{\left\|\boldsymbol z_1-\boldsymbol z\right\|}.\\
\end{aligned}   
\end{gather}

For the square items in Eq.~(\ref{dpgp1}), Eq.~(\ref{dpgp0}), Eq.~(\ref{dpgp2}), we show that the 
\begin{gather}
\begin{aligned}
&\|\left(\left\|\nabla_{\boldsymbol x} D_{m}(G_{\theta_t}(\boldsymbol z))\right\|+\|\boldsymbol\varepsilon\|\right) \|> \|\left(\left\|\nabla_{\boldsymbol x} D_{m}(G_{\theta_t}(\boldsymbol z))\right\|\right) \|
>
\|\left(\left\|\nabla_{\boldsymbol x} D_{m}(G_{\theta_t}(\boldsymbol z))\right\|-g_0\right)\|.
\end{aligned}   
\end{gather}
Based on this conclusion, we can observe that the relationship among the latent N-size with different gradient penalties is as follows:  $r_{R_1} > r_{R_0} > r_{R_{\boldsymbol\varepsilon}}$.
\end{proof}

\begingroup
\def\thetheorem{\ref{theorem main}}
\begin{theorem}
Suppose $\boldsymbol z_1$ is attracted to the mode $\mathcal{M}_{i}$ by $\hat{\epsilon}$, then there exists a neighborhood $\mathcal{N}_r\left(\boldsymbol{z}_1\right)$ of $\boldsymbol{z}_1$ such that $\boldsymbol{z}_2$ is distracted to $\mathcal{M}_{i}$ by $(\hat\epsilon/ 2-2\alpha)$, for all $\boldsymbol{z}_2 \in \mathcal{N}_r\left(\boldsymbol{z}_1\right)$. The size of $\mathcal{N}_r\left(\boldsymbol{z}_1\right)$ can be arbitrarily large but is bounded by an open ball of radius $r$ where be controlled by Gradient Penalty terms of the discriminator. The relationship between the radius's size of three Gradient Penalty is $r_{R_1}>r_{R_0}>r_{R_{\boldsymbol\varepsilon}}$. 
\end{theorem}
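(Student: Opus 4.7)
The plan is to assemble the theorem from three ingredients already in place: Definition~\ref{definition-radius} (which sets up the neighborhood structure and the defining formula for $r$), Proposition~\ref{relation n d} (which rewrites $r$ so that the discriminator gradient appears explicitly through the CFG update), and Lemma~\ref{lemma compare} (which orders the norms contributed by the three penalties). The first part of the theorem, namely the existence of $\mathcal{N}_r(\boldsymbol{z}_1)$ of radius $r$ such that every $\boldsymbol{z}_2 \in \mathcal{N}_r(\boldsymbol{z}_1)$ is distracted from $\mathcal{M}_i$ by $(\hat\epsilon/2 - 2\alpha)$, is already the content of Definition~\ref{definition-radius}; I would cite it rather than reprove it.

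The core work is the comparison. First I would invoke Proposition~\ref{relation n d} to write, for a generic penalty $R$,
\begin{equation*}
r_R = \hat\epsilon \cdot \Bigl( 2 \inf_{\boldsymbol z} \mathcal{F}_R(\boldsymbol z_1, \boldsymbol z) \Bigr)^{-1},
\end{equation*}
where $\mathcal{F}_R(\boldsymbol z_1, \boldsymbol z)$ is the bracketed expression in Proposition~\ref{relation n d}, and observe that $r_R$ is a monotonically decreasing function of $\mathcal{F}_R$. Hence the claim $r_{R_1} > r_{R_0} > r_{R_{\boldsymbol\varepsilon}}$ reduces to the opposite ordering on the penalty-dependent term, which by the expansion carried out in the proof of Lemma~\ref{lemma compare} is in turn controlled by the magnitudes $\|R_1\|$, $\|R_0\|$, $\|R_{\boldsymbol\varepsilon}\|$ defined there.

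Next I would substitute the three penalties in turn. For the $1$-centered penalty the contribution is $\bigl\|\|\nabla_{\boldsymbol x} D_m(G_{\theta_t}(\boldsymbol z))\| - g_0 \bigr\|$; for the $0$-centered penalty it is $\bigl\|\|\nabla_{\boldsymbol x} D_m(G_{\theta_t}(\boldsymbol z))\|\bigr\|$; and for our $\boldsymbol\varepsilon$-centered penalty it is $\bigl\|\|\nabla_{\boldsymbol x} D_m(G_{\theta_t}(\boldsymbol z))\| + \|\boldsymbol\varepsilon\|\bigr\|$, where the last step uses the reverse triangle inequality together with the corollary $\nabla_{\boldsymbol x} D(\boldsymbol x) \leq 0$ derived from Eq.~(\ref{eq:theorem 2.1})--(\ref{eq:empirical}) to fix the sign when $\boldsymbol\varepsilon$ is subtracted. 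Because $g_0 > 0$ and $\|\boldsymbol\varepsilon\| > 0$, we immediately get $\|R_1\| < \|R_0\| < \|R_{\boldsymbol\varepsilon}\|$, and composing with the monotone decrease of $r$ in $\mathcal{F}_R$ yields $r_{R_1} > r_{R_0} > r_{R_{\boldsymbol\varepsilon}}$.

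The main obstacle I anticipate is bookkeeping around the sign of $\nabla_{\boldsymbol x} D$ and the triangle inequality: in Eq.~(\ref{dpgp2}) the step from $\|\nabla_{\boldsymbol x} D_m - \boldsymbol\varepsilon\|$ to $\|\nabla_{\boldsymbol x} D_m\| + \|\boldsymbol\varepsilon\|$ looks like it inflates the bound rather than establishes a lower bound, so care is needed to verify that each of the three comparisons is a comparison of the same quantity (the contribution actually appearing inside $\mathcal{F}_R$) rather than of upper estimates that could destroy the ordering. I would handle this by writing $\nabla_{\boldsymbol x} D_m = -|\nabla_{\boldsymbol x} D_m|$ explicitly using the CFG corollary and expanding each penalty term with this sign in place, so that the three terms compared differ only by the additive constants $-g_0$, $0$, and $+\|\boldsymbol\varepsilon\|$, making the ordering unambiguous and closing the proof.
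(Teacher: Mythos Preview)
Your proposal is correct and follows essentially the same approach as the paper: the paper's own proof consists of a single sentence stating that the theorem follows by combining Definition~\ref{definition-radius}, Proposition~\ref{relation n d}, Lemma~\ref{lemma compare}, and the $\boldsymbol\varepsilon$-centered gradient penalty. Your write-up is in fact more careful than the paper's, since you make explicit the monotonicity step linking the ordering of $\|R\|$ to the reverse ordering of $r_R$, and you flag the triangle-inequality bookkeeping in Eq.~(\ref{dpgp2}) that the paper passes over without comment.
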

\addtocounter{theorem}{-1}
\endgroup
\begin{proof}
By combining the Definition~\ref{definition-radius}, Proposition~\ref{relation n d}, Lemma~\ref{lemma compare} and our $\boldsymbol\varepsilon$-centered gradient penalty, we can deviate this conclusion. 
\end{proof}
This theorem encompasses three key implications. Firstly, when the latent vector is attracted to a mode within the image space, the corresponding latent N-size should not be overly large, which can be attributed to two distinct reasons. One, vectors within the neighborhood are more likely to be attracted to the same mode. Two, vectors within this neighborhood face challenges in being attracted to other modes within the image space, with the level of difficulty determined by an upper bound expressed as $(\hat\epsilon/ 2-2\alpha)$. This bound is constructed based on the distances between different modes and within the same mode within the image space.

Secondly, the discriminator gradient penalty in the CFG formulation can regulate the latent N-size. This introduces a trade-off between diversity and training stability. For instance, the $0$-centered gradient penalty ensures stable and convergent training near the Nash equilibrium, but it leads to a minimal discriminator norm, resulting in a larger latent N-size and reduced diversity.

Thirdly, by augmenting the norm of the discriminator, our $\boldsymbol\varepsilon$-centered gradient penalty achieves the smallest latent N-size, consequently leading to the highest level of diversity.

\section{Experiments}

In this section, we describe additional experiments and give a more visual representation.
In addition to the databases mentioned in the paper, we also conducted training on the CeleBA and LSUN Bedroom datasets with resolutions of 128x128 and 256x256. We will provide visual inspections and results for these additional databases.

\subsection{Hyper-parameter \texorpdfstring{$\varepsilon'$}{''}}
In the practice stage, the $\boldsymbol{\varepsilon}$ is a $\varepsilon'$-value constant vector which
has the same dimension of $\nabla_{\boldsymbol x} D(\boldsymbol x)$. The dimension of $\nabla_{\boldsymbol x} D(\boldsymbol x)$ is [C, H, W], the flatten of $\nabla_{\boldsymbol x} D(\boldsymbol x)$ dimension equals $H*W*C$. Our image data has the $N$ pixels height, $N$ pixels weight, and $C$ channels. The  $H*W*C$ will be written as $C N^{2}$. $\|\|$ is the symbol of norm 2.  
So the value of $\|\boldsymbol{\varepsilon}\|$ can be written as
\begin{gather}
\begin{aligned}
\|\boldsymbol{\varepsilon}\| = \sqrt{\sum_{i=1}^{C N^{2}}\varepsilon_{i}^{2}} = \sqrt{C N^{2}\varepsilon^{2}}=\varepsilon',
\end{aligned}
\end{gather}
where $\varepsilon_{i}^{2}=\varepsilon^{2}$, $\varepsilon^{2} =\frac{\varepsilon'^2}{C N^{2}}$. The $\varepsilon'$ is a hyper-parameter that controls the tight bound of $\left\|\nabla_{\boldsymbol x} D(\boldsymbol x)\right\|$. We will set $\varepsilon'= 0.1, 0.3, 1, 5$ in the ablation study to show the effeteness. It is easy to understand the $\varepsilon'= 0.1, 0.3, 1$ because this is a small enhancement of the discriminator norm and thus to a smaller latent N-size. 
If we set the  $\varepsilon'= 5$, the varying range of $\|\nabla_{\boldsymbol x} D(\boldsymbol x)\|$ is too large which leads to a too-small latent N-size, thus leads to a non-convergent result for the neural network. The loss function of the training process will not converge and the synthetic samples will transform to noise. We present the ablation result in Table.~\ref{table:lsun b}
and Table.~\ref{table:mnist}

\begin{table*}[htb!]\small
\caption{\centering Different $\varepsilon'$ settings of our Li-CFG trained in MNIST. We use the FID and IS scores to compare the generated effect. The other two penalties do not have the parameter $\varepsilon'$ so that all the cells fill the same value. Untrained means the loss function does not converge.\label{table:mnist}}
\begin{center}\setlength{\tabcolsep}{0.75mm}{
\begin{tabular}{lcccccccc}
\hline
{\textbf{}} & \textbf{} & \multicolumn{2}{c}{\textbf{FID} } & \multicolumn{1}{l}{\textbf{}}
& \multicolumn{4}{c}{\textbf{IS}}                                             \\
             \textbf{MNIST}               &    \multicolumn{1}{l}{\textbf{$\varepsilon'=0.1$} }       & \multicolumn{1}{l}{\textbf{$\varepsilon'=0.3$}}                      & \multicolumn{1}{l}{\textbf{$\varepsilon'=1$}}                      & \multicolumn{1}{l}{\textbf{$\varepsilon'=5$}} & \multicolumn{1}{l}{\textbf{$\varepsilon'=0.1$}}& \multicolumn{1}{l}{\textbf{$\varepsilon'=0.3$}}                      & \multicolumn{1}{l}{\textbf{$\varepsilon'=1$}}                      & \multicolumn{1}{l}{\textbf{$\varepsilon'=5$}}\\
\hline
ours($\boldsymbol\varepsilon$-centered)            &   \textbf{2.99}          &    \textbf{2.88}      &      \textbf{2.85}                                        &   untrained                               &                 2.28               &   \textbf{2.32}
 &                 2.29                 &     untrained \\
\textbf{$0$-centered}           & 3.54             & 3.54   &   3.54                                       &    3.54                              & \textbf{2.31}                                    &                2.31
 & \textbf{2.31}                                    &                \textbf{2.31}\\

\textbf{$1$-centered}           & 3.64           &3.64   &   3.64                                       &    3.64                             & 2.3                                   &                2.3 & 2.3                                    &               2.3\\
\hline                          
\end{tabular}}
\end{center}
\end{table*}

\begin{table*}[htb!]\small
\caption{\centering Different $\varepsilon'$ settings of our Li-CFG trained in LSUN bedroom. We use the FID and IS scores to compare the generated effect. The other two penalties do not have the parameter $\varepsilon'$ so that all the cells fill the same value. Untrained means the loss function does not converge.\label{table:lsun b}}
\begin{center}
\setlength{\tabcolsep}{0.75mm}{
\begin{tabular}{lcccccccc}
\hline
{\textbf{}} & \textbf{} & \multicolumn{2}{c}{\textbf{FID} } & \multicolumn{1}{l}{\textbf{}}
& \multicolumn{4}{c}{\textbf{IS}}                                             \\
             \textbf{LSUN Bedroom}               &    \multicolumn{1}{l}{\textbf{$\varepsilon'=0.1$} }       & \multicolumn{1}{l}{\textbf{$\varepsilon'=0.3$}}                      & \multicolumn{1}{l}{\textbf{$\varepsilon'=1$}}                      & \multicolumn{1}{l}{\textbf{$\varepsilon'=5$}} & \multicolumn{1}{l}{\textbf{$\varepsilon'=0.1$}}& \multicolumn{1}{l}{\textbf{$\varepsilon'=0.3$}}                      & \multicolumn{1}{l}{\textbf{$\varepsilon'=1$}}                      & \multicolumn{1}{l}{\textbf{$\varepsilon'=5$}}\\
\hline
ours($\boldsymbol\varepsilon$-centered)           &   \textbf{9.94}          &   \textbf{8.78}     &     \textbf{ 9.73}                                        &   untrained                              &                 2.97                &     2.94
 &                  2.97                  &     untrained \\
$0$-centered       &10.54            &10.54  &    10.54                                        &    10.54                             & 3.067                                &                3.067
 & 3.067                                  &                3.067\\

$1$-centered         & 11.5           & 11.5   &    11.5                                        &    11.5                              & \textbf{3.154  }                                &              \textbf{3.154  }& \textbf{3.154  }                                 &                \textbf{3.154  }\\
\hline              
\end{tabular}}
\end{center}
\end{table*}

\begin{table*}[htb!]\small
\caption{\centering Different $\gamma$ and the same $\varepsilon'=0.3$ settings of our Li-CFG trained in LSUN T. We use FID and IS score to compare the generated effect. The other two penalties do not have the parameter $\varepsilon$ so that all the cells fill the same value. Untrained means the loss function does not converge.\label{table:lsun t}}
\begin{center}
\setlength{\tabcolsep}{0.75mm}{
\begin{tabular}{lcccccccc}
\hline
{\textbf{}} & \textbf{} & \multicolumn{2}{c}{\textbf{FID} } & \multicolumn{1}{l}{\textbf{}}
& \multicolumn{4}{c}{\textbf{IS}}                                             \\
             \textbf{LSUN T ($\delta(\boldsymbol x)$=1)}               &    \multicolumn{1}{l}{\textbf{$\gamma'=0.1$} }       & \multicolumn{1}{l}{\textbf{$\gamma'=1$}}                      & \multicolumn{1}{l}{\textbf{$\gamma'=10$}}                      & \multicolumn{1}{l}{\textbf{}} & \multicolumn{1}{l}{\textbf{$\gamma'=0.1$}}& \multicolumn{1}{l}{\textbf{$\gamma'=1$}}                      & \multicolumn{1}{l}{\textbf{$\gamma'=10$}}                      & \multicolumn{1}{l}{\textbf{}}\\
\hline
ours($\boldsymbol\varepsilon$-centered)             &  11.41         &  19.47      &     21.59                                        &                                &                  4.6                 &    4.5 
 &                 4.42                &      \\
$0$-centered          & 12.33             & 19.94    &    21.71                                   &                               &4.57                                   &               4.6
 &4.53                                   &                \\

$1$-centered           & 12.81           & 22.17  &    22.71                                      &                               &4.47                                   &                4.46 & 4.5                                &                \\
CFG method          & 13.54            & 13.54    &    13.54                                         &                               &4.38                                  &                4.38 & 4.38                                  &               \\
\hline 
 \textbf{LSUN T ($\delta(\boldsymbol x)$=5})\\    
ours($\boldsymbol\varepsilon$-centered)             &  19.18        &   21.58      &     21.69                                        &                             &                 4.48                 &    4.49
 &                 4.34                  &    \\
$0$-centered          & 20.29             &24.77   &   23.02                                        &                              &4.47                                   &               4.43
 & 4.43                                   &                \\

$1$-centered           & 24.09             & 29.39   &    untrained                                       &                                &4.37                                   &               4.41 & untrained                                  &                \\
CFG method           &22.3             & 22.3  &    22.3                                        &                                &4.34                                   &                4.34 &                4.34                    &              \\
\hline 
 \textbf{LSUN T ($\delta(\boldsymbol x)$=10})\\   
ours($\boldsymbol\varepsilon$-centered)             &  21.3      &   23     &     20.7                                      &                                &                4.43                  &    4.42
 &                 4.53                  &   \\
$0$-centered          & 23.68            & 19.74   &    20.7                                        &                                 &4.39                                    &                4.46
 & 4.38                                    &                \\

$1$-centered           &23.49            &  untrained   &    untrained                                       &                                &4.3                                  &               untrained & untrained                   &                \\
CFG method           & 25.53             & 25.53    &    25.53                                        &                                &4.2                                  &                4.2 & 4.2                                  &               \\
\hline       
\end{tabular}}
\end{center}
\end{table*}

If our experiment setting does not satisfy these conditions, the synthetic samples make noise or collapse. We show the visual inspection of different settings $\varepsilon'$ for MNIST and LSUN Bedroom in Fig.~\ref{Fig.mnist-problem} and Fig.~\ref{Fig.bd-problem}.

\begin{figure}
\centering
\includegraphics[width=0.6\textwidth]{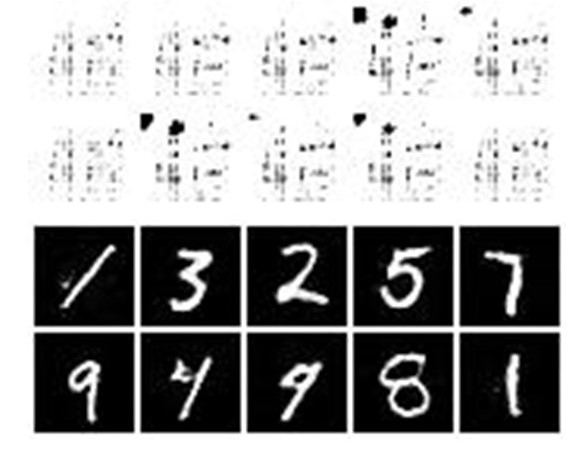} 
\caption{The top two rows are $\varepsilon' = 5$. The bottom two rows are $\varepsilon' = 0.3$. The top two rows collapse, but the bottom two rows are normal. \label{Fig.mnist-problem}}
\vspace{-0.1in}
\end{figure}

\begin{figure}[t]
\centering
\includegraphics[width=0.6\textwidth]{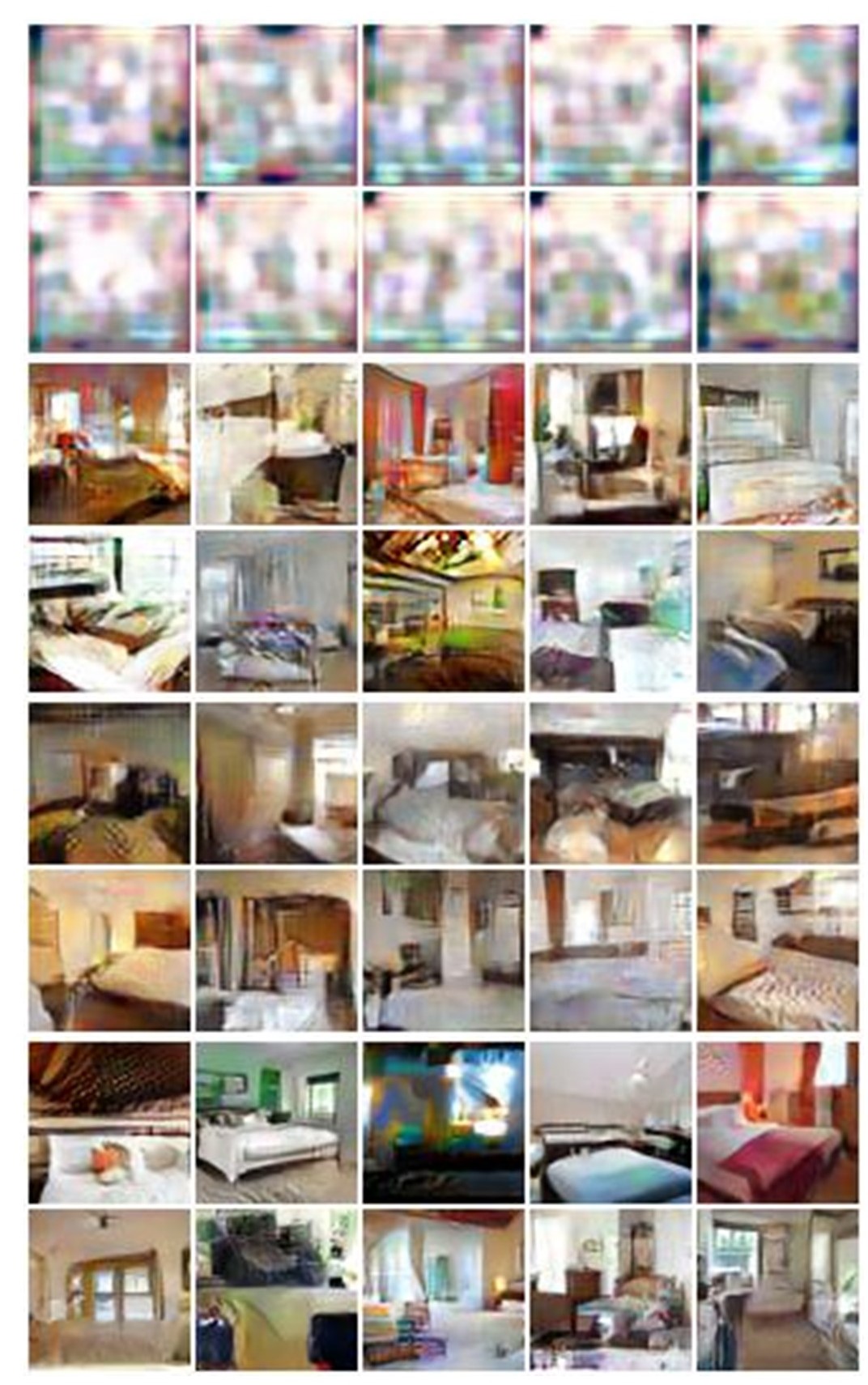} 
\caption{The top two rows are $\varepsilon' = 5$, the second two row are $\varepsilon' = 1$ and the third two rows are $\varepsilon' =0.1$. The bottom two rows are $\varepsilon' = 0.3$. This figure shows that if the hyper-parameter $\varepsilon'$ is too large, it will lead to a 
too-small latent N-size and thus lead to the non-convergent result. The hyper-parameter $\varepsilon'$ should be a reasonable value, neither too large nor too small will lead to the best diversity of synthetic samples. \label{Fig.bd-problem}} 
\vspace{-0.1in}
\end{figure}

\begin{table*}[htb!]\small
\caption{\centering Different $\gamma$ settings of our Li-CFG trained in LSUN T+B. We use FID and IS scores to compare the generated effect. The other two penalties do not have the parameter $\varepsilon$ so that all the cells fill the same value.\label{table:lsun b+t}}
\begin{center}
\begin{tabular}{lcccccccc}
\hline
{\textbf{}} & \textbf{} & \multicolumn{2}{c}{FID } & \multicolumn{1}{l}{\textbf{}}
& \multicolumn{4}{c}{IS}                                              \\
             \textbf{LSUN T+B}               &    \multicolumn{1}{l}{\textbf{$\gamma=0.1$}}                      & \multicolumn{1}{l}{\textbf{$\gamma=1$}}                      & \multicolumn{1}{l}{\textbf{$\gamma=10$}} &
             \multicolumn{1}{l}{\textbf{}}                      &\multicolumn{1}{l}{\textbf{$\gamma=0.1$}}                      & \multicolumn{1}{l}{\textbf{$\gamma=1$}}                      & \multicolumn{1}{l}{\textbf{$\gamma=10$}}
             &\multicolumn{1}{l}{\textbf{}} \\
\hline
\textbf{ours($\boldsymbol\varepsilon$-centered)}                      &    \textbf{15.72}      &      \textbf{16.85}                                         &  \textbf{16.67}                              &   &                             \textbf{5.07}       &     5.06
 &  \textbf{5.08} &                  \textbf{} \\
\textbf{$0$-centered}          & 16.01   &    17.4                                        &   19.79  &   \textbf{}                           & 5.05                                &                5.08
 &5.05                        &   \textbf{}    \\

\textbf{$1$-centered}           & 16.32    &    18.73                                      &  34.28  &   \textbf{}                            &5.04                                &                 \textbf{ 5.18} & 4.71    &   \textbf{}                \\
\hline      
\end{tabular}
\end{center}
\end{table*}
\subsection{Effect of \texorpdfstring{$\gamma$}{-} with different penalty}
In most cases, a gradient penalty with a center within a small interval around 0 tends to yield better results. However, on some datasets, a gradient penalty centered around 1 might perform better. Our method provides a controllable parameter that allows the center of the gradient penalty to vary within the range specified by the parameter, and we experimentally show that our method yields better results. 
For the parameters $\delta(\boldsymbol x)$ where the CFG method performs well, using a value of 0.1 for the parameter $\gamma$ consistently yields improved results. Conversely, for those parameter settings of $\delta(\boldsymbol x)$ where the CFG method shows poor or inadequate training performance, using a value of 10 for the parameter $\gamma$ often leads to better overall performance. We compare the results of different $\gamma$ values for CFG and Li-CFG in Table.~\ref{table:lsun t} and Table.~\ref{table:lsun b+t}.

\subsection{Effect of Gradient Penalty  with different \texorpdfstring{$\delta(\boldsymbol x)$}{-}}
The results of the CFG method can exhibit significant variations in response to minor changes in parameter values. This phenomenon is illustrated in Fig.~\ref{Fig.T_aba_1}, where we observe that different choices of the $\delta(\boldsymbol x)$ parameter lead to diverse outcomes. However, by introducing the gradient penalty, the CFG method consistently demonstrates increased stability across various $\delta(\boldsymbol x)$ values. Notably, in cases where the CFG method struggles to converge, the addition of the gradient penalty leads to improved results.

\begin{figure*}
\centering
\includegraphics[width=1\textwidth]{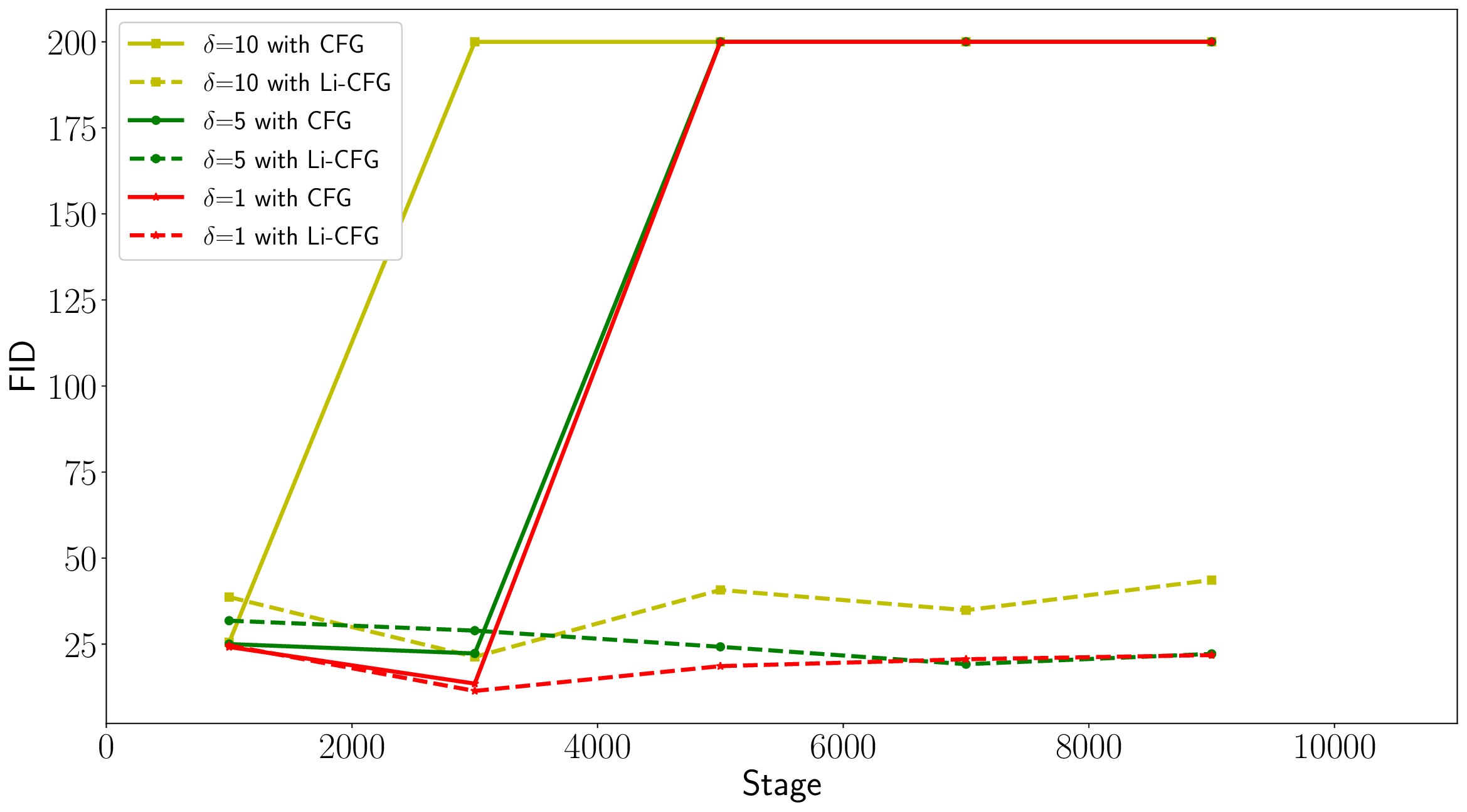} 
\caption{This figure contains FID values with three different $\delta(\boldsymbol x)$ settings in LSUN tower datasets. Even though the FID scores of the CFG method get 200 which means the training of the CFG method does not converge, our Li-CFG always achieves desirable and competitive FID scores.  \label{Fig.T_aba_1}} 
\end{figure*}

\subsection{Visual inspection of synthesized images}\label{appedix.figures}
In this section, we introduce the visual inspection figures of the CFG method, Li-CFG, and other GAN models. The Fig.~\ref{Fig.main0},~\ref{Fig.main1},~\ref{Fig.main2},~\ref{Fig.main3},~\ref{Fig.main5},~\ref{Fig.bed256},~\ref{Fig.main4} are separated into two columns of real images and generated images respectively. The real images are represented in the left column while the generated images are displayed in the right column which contains different GP regularization with Li-CFG. The settings of generated images in the CFG method and Li-CFG are almost the same. We also display the generated images with CeleBA, LSUN Bedroom and ImageNet with resolution of 128 and 256. {We present the results of synthetic datasets in the Fig.~\ref{Fig.gan-ring},~\ref{Fig.wgan-ring},~\ref{Fig.hingegan-ring},~\ref{Fig.lsgan-ring},~\ref{Fig.gan-square},~\ref{Fig.wgan-square},~\ref{Fig.lsgan-square},~\ref{Fig.hingegan-square}. Moreover, we showcase the results of BigGAN and DDGAN from real-world datasets in the Fig.~\ref{Fig.biggan-i64},~\ref{Fig.ddgan-lsun},~\ref{Fig.ddgan-lsun1}.}

\begin{figure*}
\centering 
\includegraphics[width=1\textwidth]{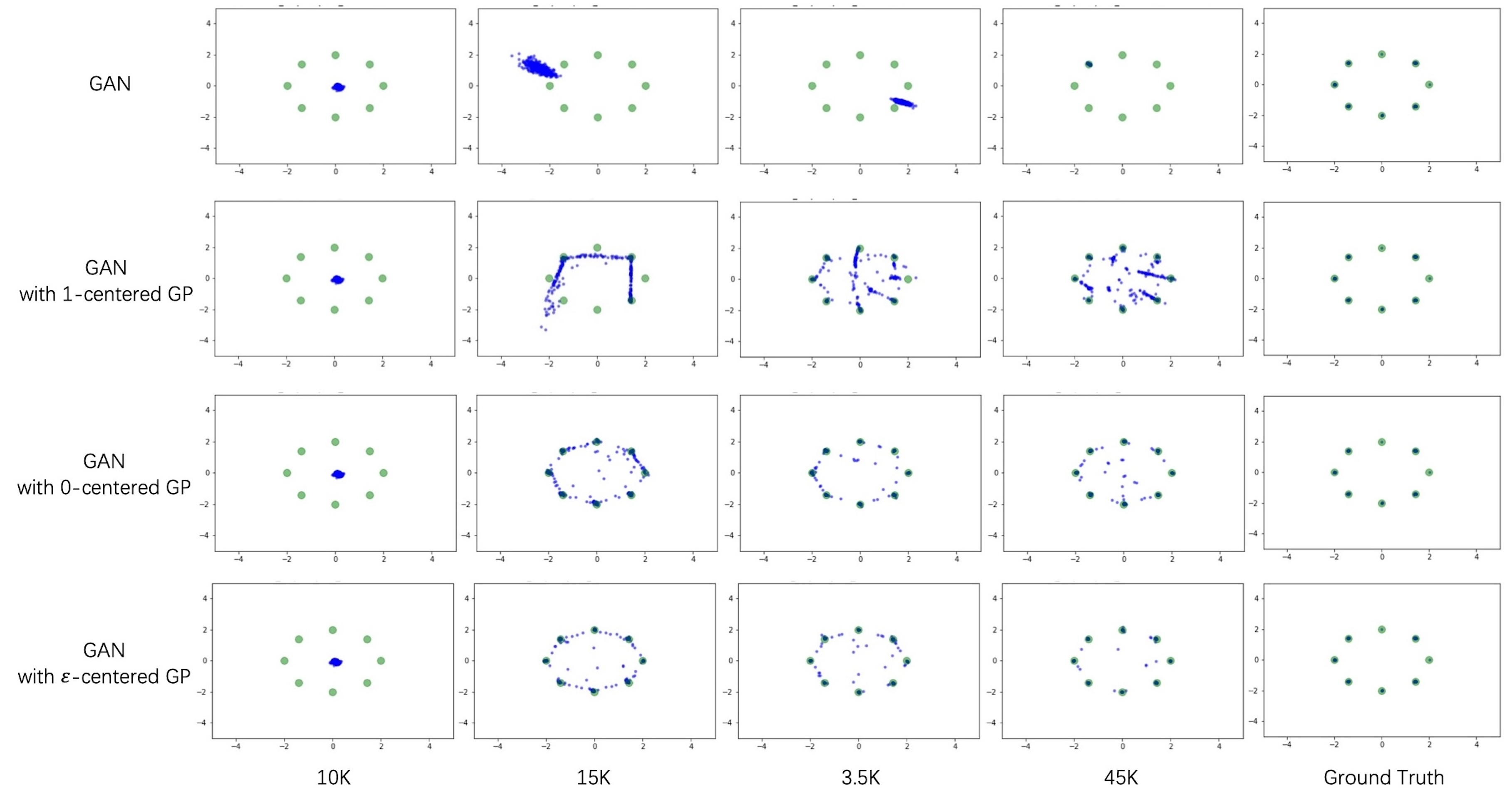} 
\caption{{Results for the original GAN with varying gradient penalties on the Ring dataset are displayed as follows: From top to bottom, the sequence includes the original GAN, the original GAN with $1$-centered gradient penalty, the original GAN with $0$-centered gradient penalty, and the original GAN with $\boldsymbol\varepsilon$-centered gradient penalty. Progressing from left to right, each column represents outcomes from different stages of training. The far-right column displays the ground truth data for comparison.}\label{Fig.gan-ring}}
\vspace{-0.1in}
\end{figure*}

\begin{figure*}
\centering 
\includegraphics[width=1\textwidth]{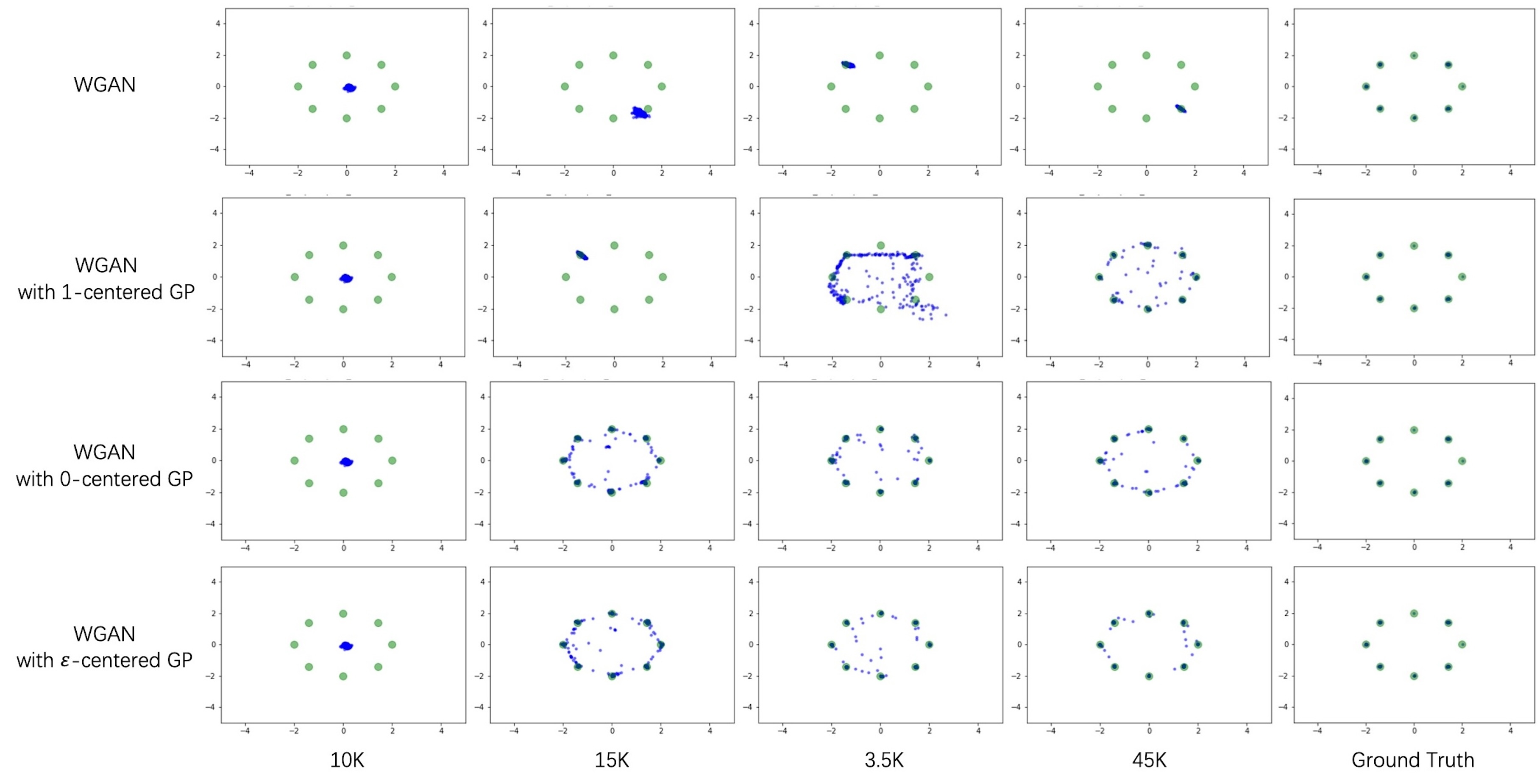} 
\caption{{Results for the WGAN with varying gradient penalties on the Ring dataset are displayed as follows: From top to bottom, the sequence includes the WGAN, the WGAN with $1$-centered gradient penalty, the WGAN with $0$-centered gradient penalty, and the WGAN with $\boldsymbol\varepsilon$-centered gradient penalty. Progressing from left to right, each column represents outcomes from different stages of training. The far-right column displays the ground truth data for comparison.}\label{Fig.wgan-ring}}
\vspace{-0.1in}
\end{figure*}

\begin{figure*}
\centering 
\includegraphics[width=1\textwidth]{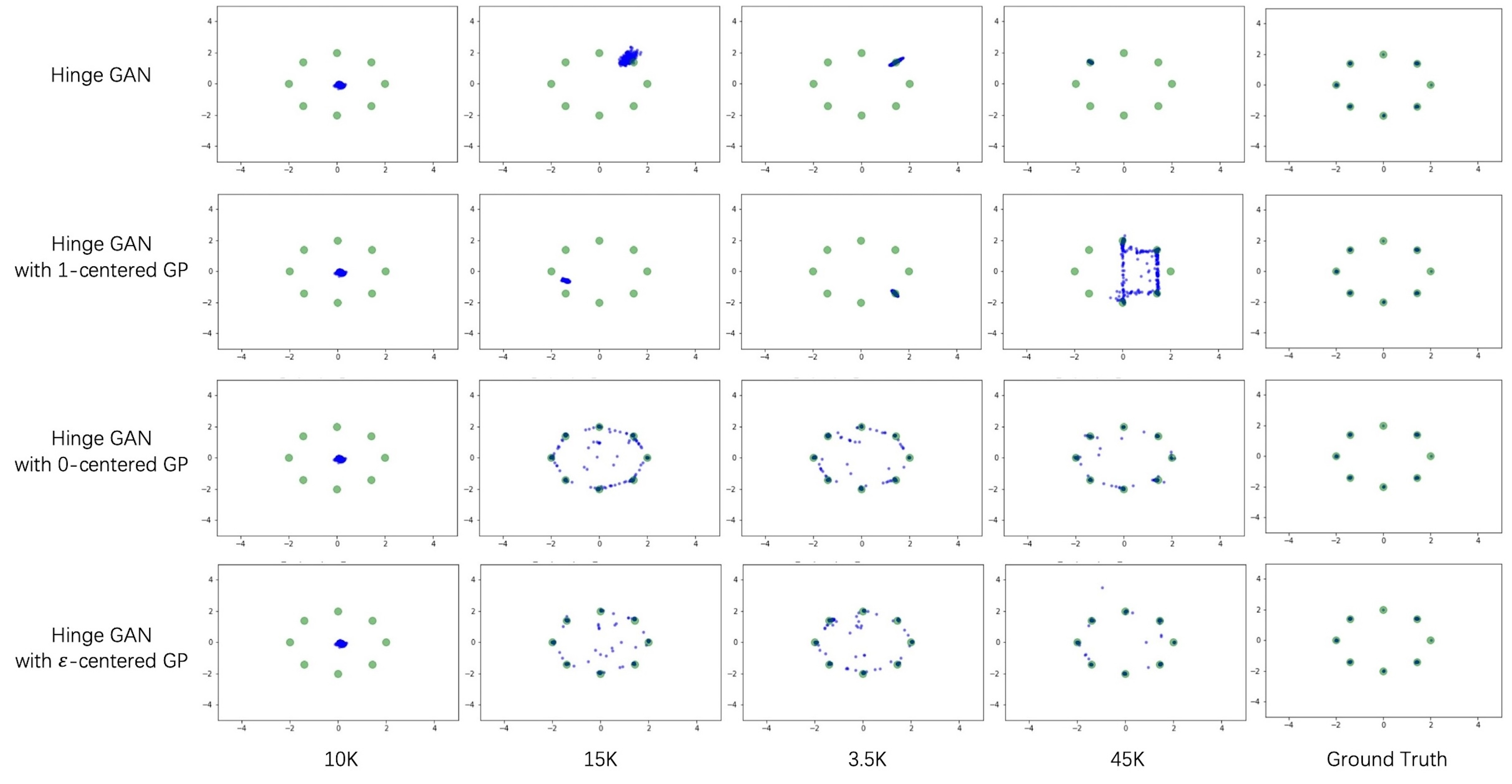} 
\caption{{Results for the Hinge GAN with varying gradient penalties on the Ring dataset are displayed as follows: From top to bottom, the sequence includes the Hinge GAN, the Hinge GAN with $1$-centered gradient penalty, the Hinge GAN with $0$-centered gradient penalty, and the Hinge GAN with $\boldsymbol\varepsilon$-centered gradient penalty. Progressing from left to right, each column represents outcomes from different stages of training. The far-right column displays the ground truth data for comparison.} \label{Fig.hingegan-ring}}
\vspace{-0.1in}
\end{figure*}

\begin{figure*}
\centering 
\includegraphics[width=1\textwidth]{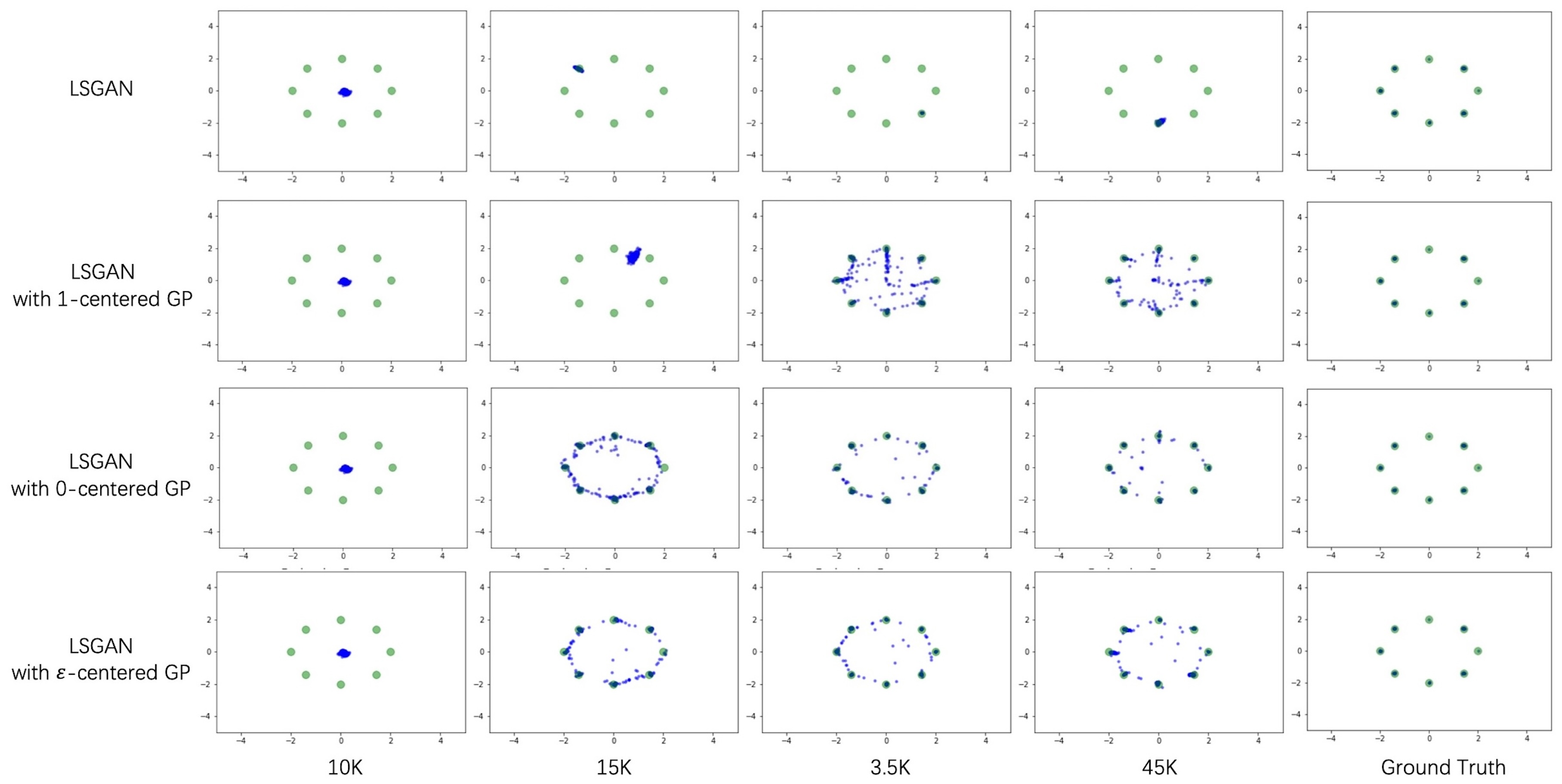} 
\caption{{Results for the LSGAN with varying gradient penalties on the Ring dataset are displayed as follows: From top to bottom, the sequence includes the LSGAN, the LSGAN with $1$-centered gradient penalty, the LSGAN with $0$-centered gradient penalty, and the LSGAN with $\boldsymbol\varepsilon$-centered gradient penalty. Progressing from left to right, each column represents outcomes from different stages of training. The far-right column displays the ground truth data for comparison.} \label{Fig.lsgan-ring}}
\vspace{-0.1in}
\end{figure*}

\begin{figure*}
\centering 
\includegraphics[width=1\textwidth]{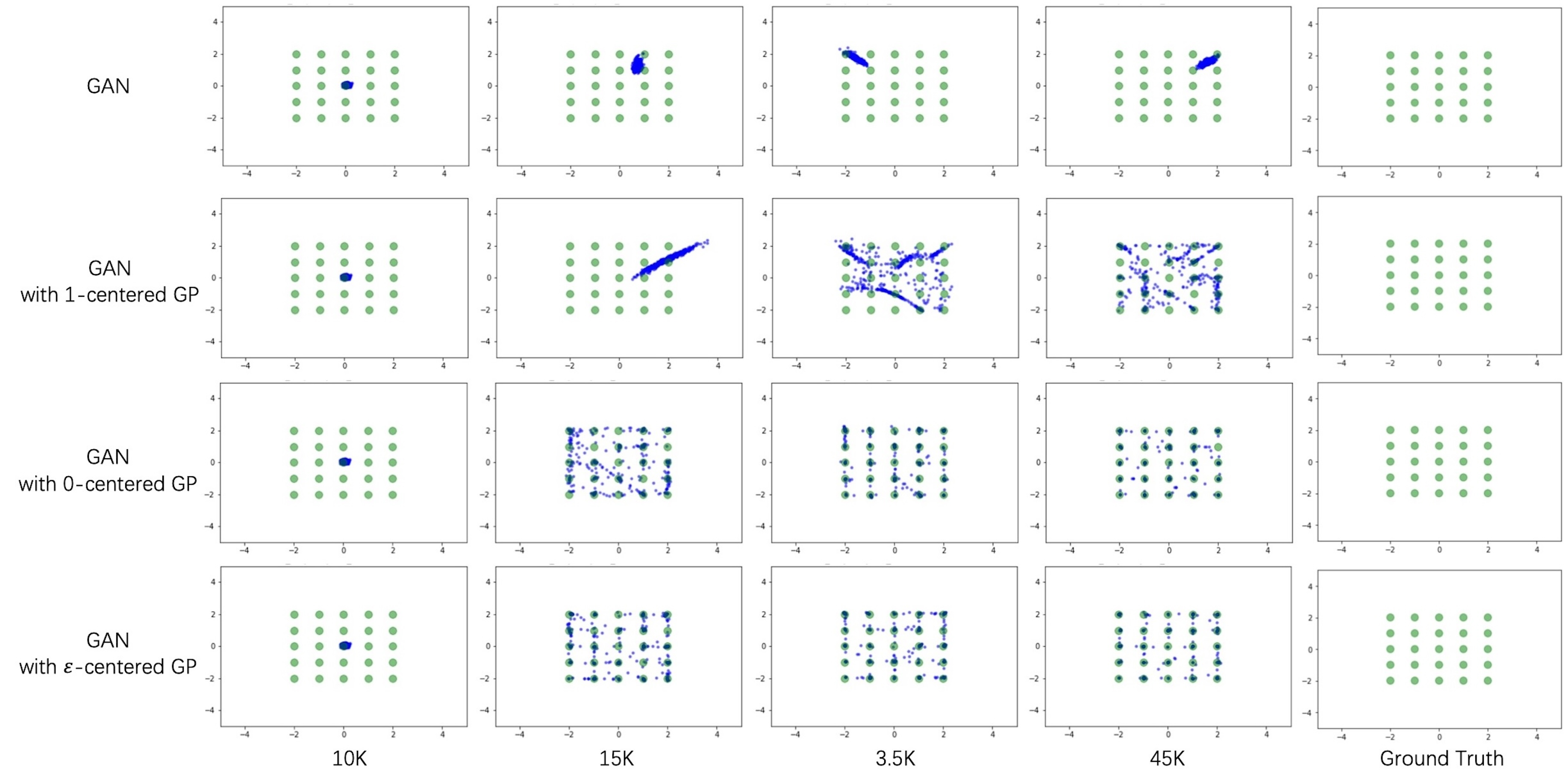} 
\caption{{Results for the origin GAN with varying gradient penalties on the Grid dataset are displayed as follows: From top to bottom, the sequence includes the origin GAN, the origin GAN with $1$-centered gradient penalty, the origin GAN with $0$-centered gradient penalty, and the origin GAN with $\boldsymbol\varepsilon$-centered gradient penalty. Progressing from left to right, each column represents outcomes from different stages of training. The far-right column displays the ground truth data for comparison.} \label{Fig.gan-square}}
\vspace{-0.1in}
\end{figure*}

\begin{figure*}
\centering 
\includegraphics[width=1\textwidth]{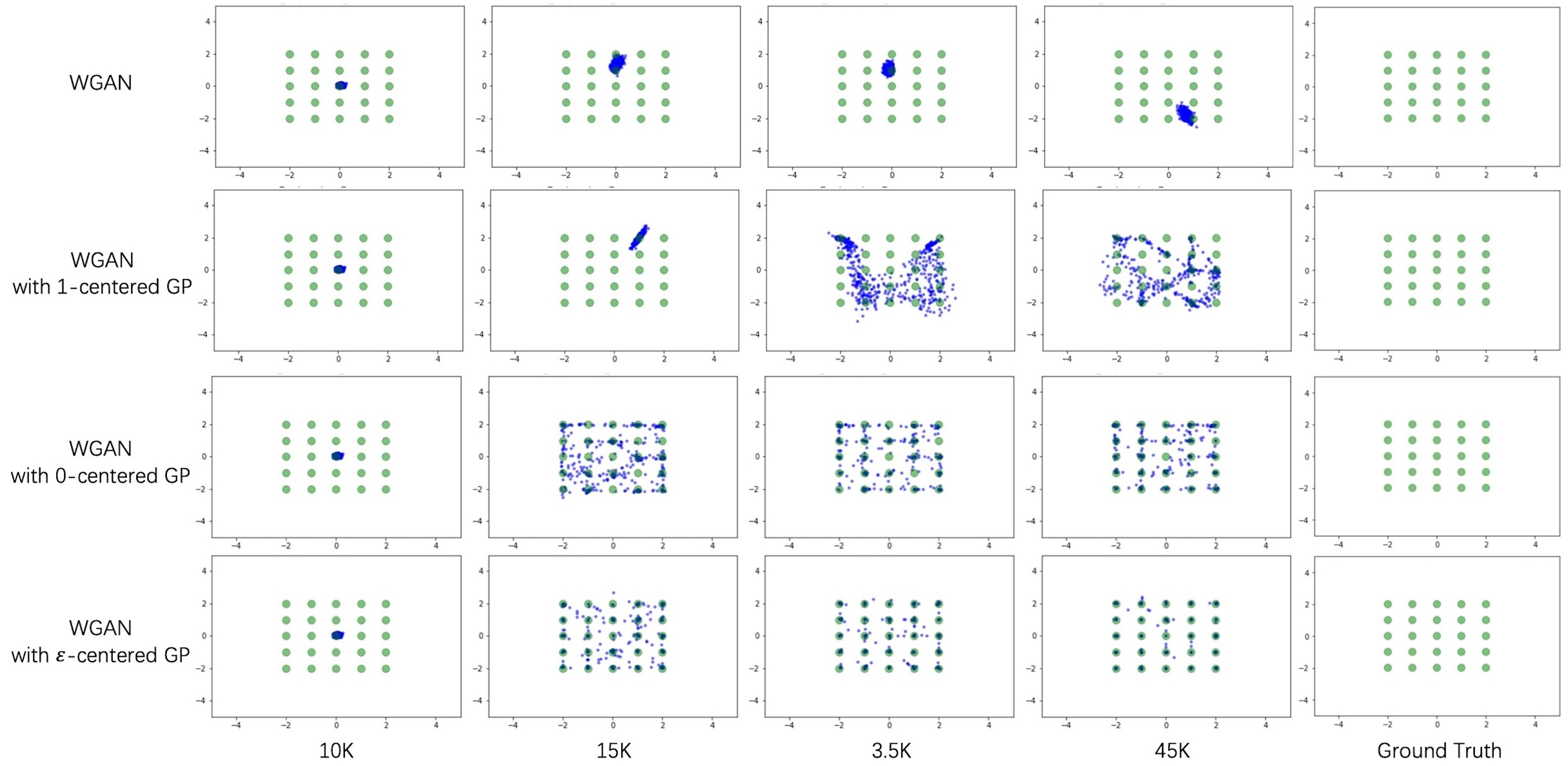} 
\caption{{Results for the WGAN with varying gradient penalties on the Grid dataset are displayed as follows: From top to bottom, the sequence includes the  WGAN, the WGAN with $1$-centered gradient penalty, the WGAN with $0$-centered gradient penalty, and the WGAN with $\boldsymbol\varepsilon$-centered gradient penalty. Progressing from left to right, each column represents outcomes from different stages of training. The far-right column displays the ground truth data for comparison.} \label{Fig.wgan-square}}
\vspace{-0.1in}
\end{figure*}

\begin{figure*}
\centering 
\includegraphics[width=1\textwidth]{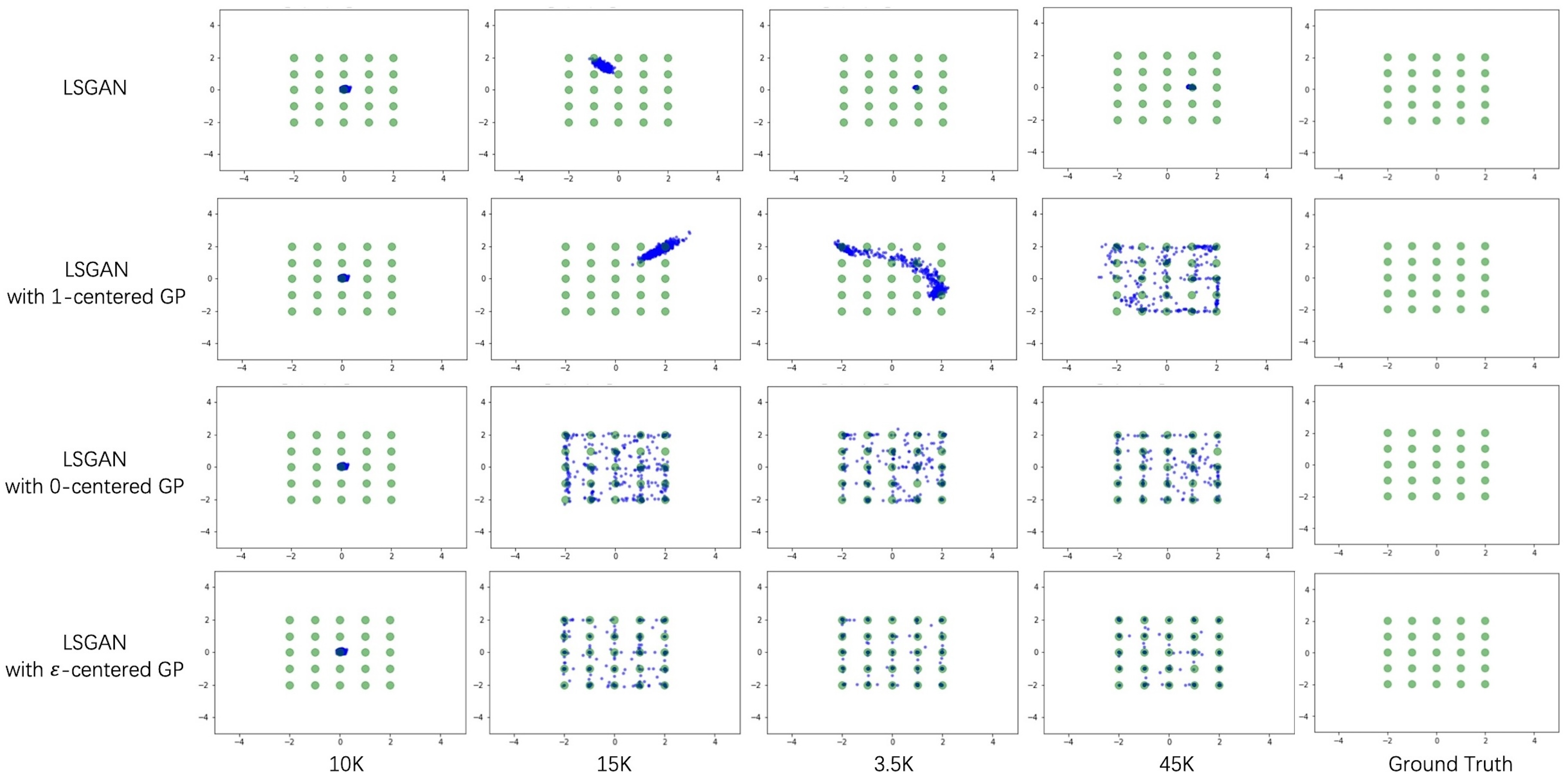} 
\caption{{Results for the LSGAN with varying gradient penalties on the Grid dataset are displayed as follows: From top to bottom, the sequence includes the  LSGAN, the LSGAN with $1$-centered gradient penalty, the LSGAN with $0$-centered gradient penalty, and the LSGAN with $\boldsymbol\varepsilon$-centered gradient penalty. Progressing from left to right, each column represents outcomes from different stages of training. The far-right column displays the ground truth data for comparison.}\label{Fig.lsgan-square}}
\vspace{-0.1in}
\end{figure*}

\begin{figure*}
\centering 
\includegraphics[width=1\textwidth]{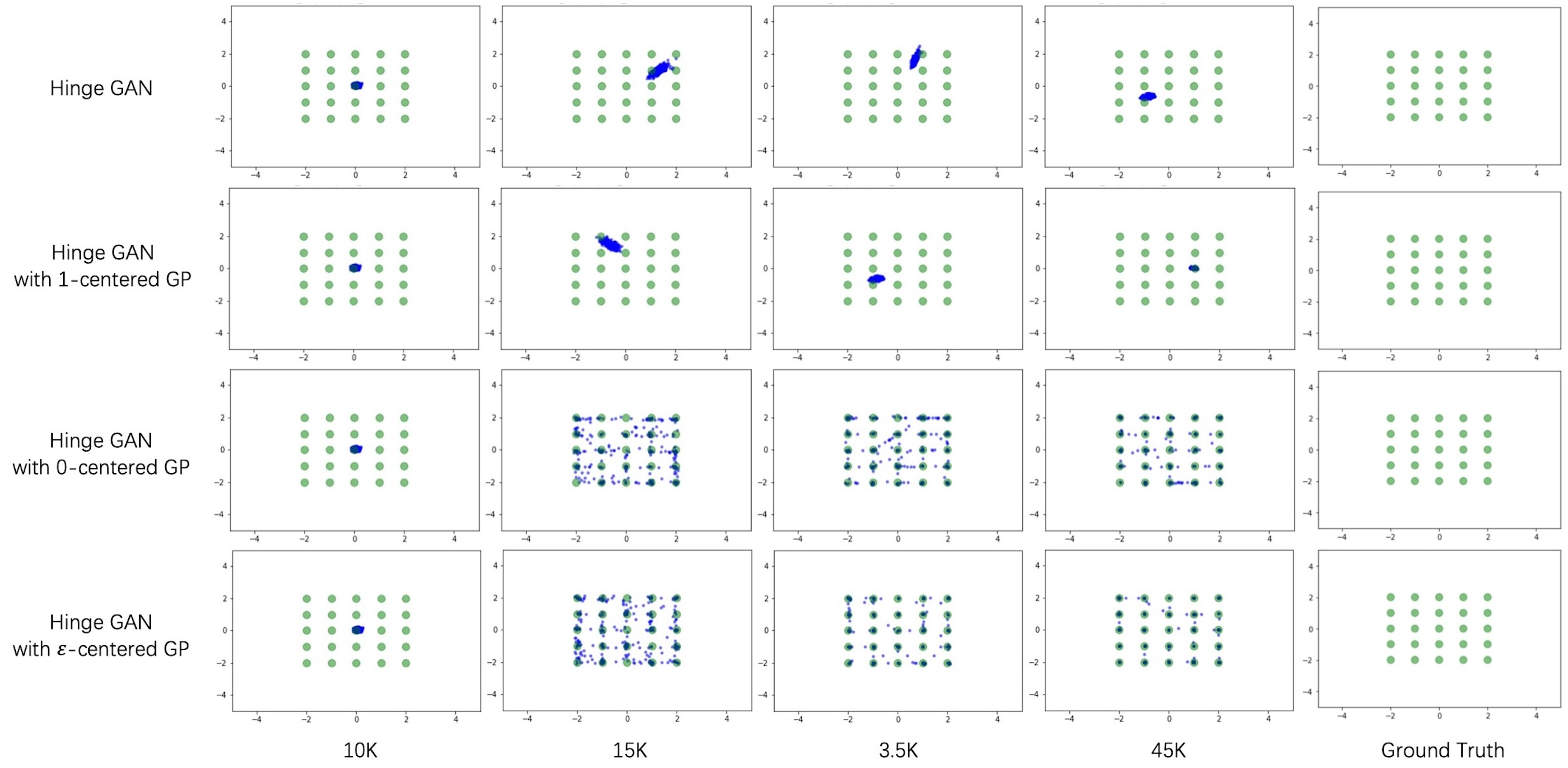} 
\caption{{Results for the Hinge GAN with varying gradient penalties on the Grid dataset are displayed as follows: From top to bottom, the sequence includes the  Hinge GAN, the Hinge GAN with $1$-centered gradient penalty, the Hinge GAN with $0$-centered gradient penalty, and the Hinge GAN with $\boldsymbol\varepsilon$-centered gradient penalty. Progressing from left to right, each column represents outcomes from different stages of training. The far-right column displays the ground truth data for comparison.}\label{Fig.hingegan-square}}
\vspace{-0.1in}
\end{figure*}

\begin{figure*}
\centering 
\includegraphics[width=1\textwidth]{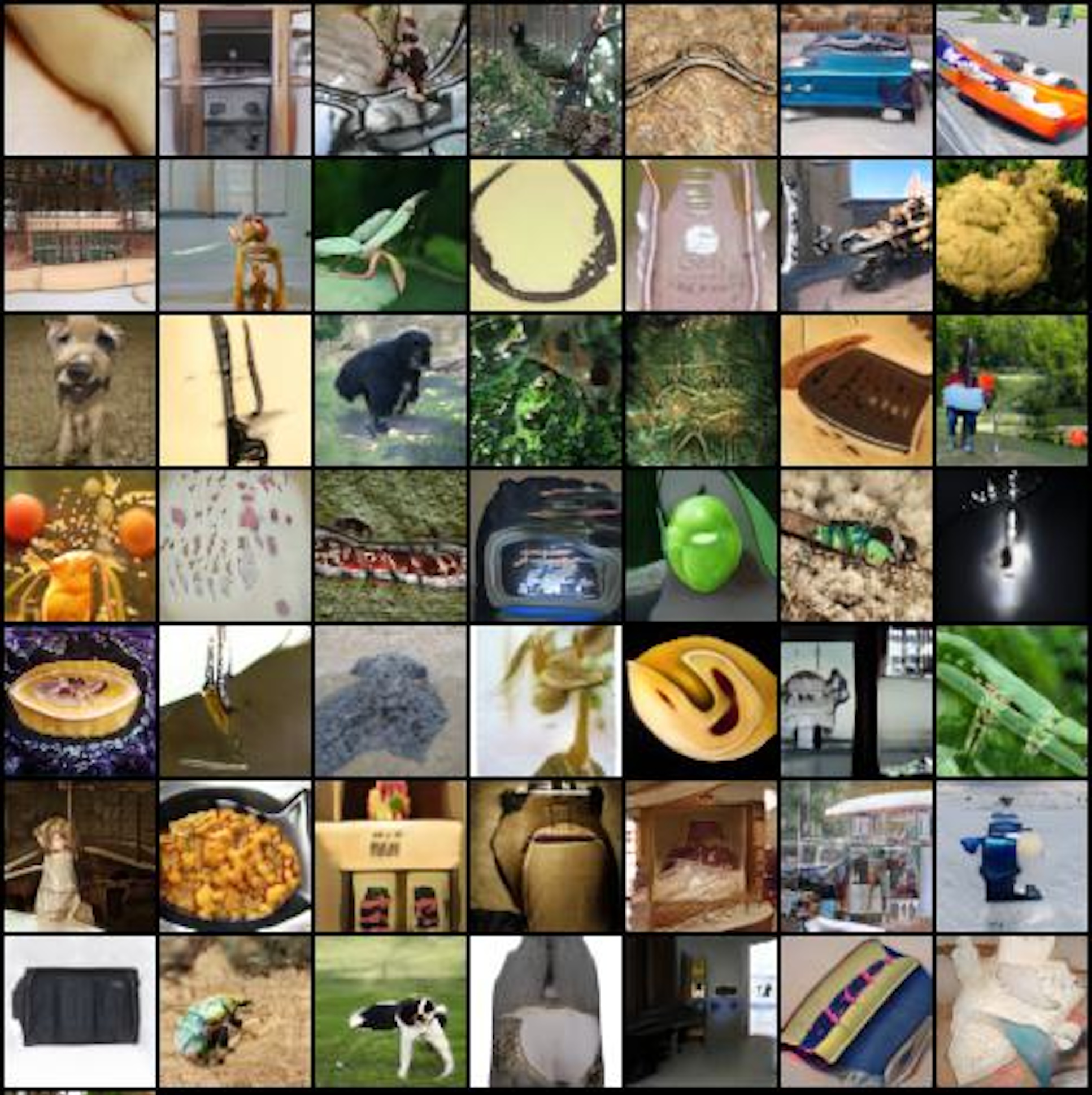} 
\caption{{Result for ImageNet64 from BigGAN with our $\boldsymbol\varepsilon$-centered gradient penalty.}\label{Fig.biggan-i64}}
\vspace{-0.1in}
\end{figure*}

\begin{figure*}
\centering 
\includegraphics[width=1\textwidth]{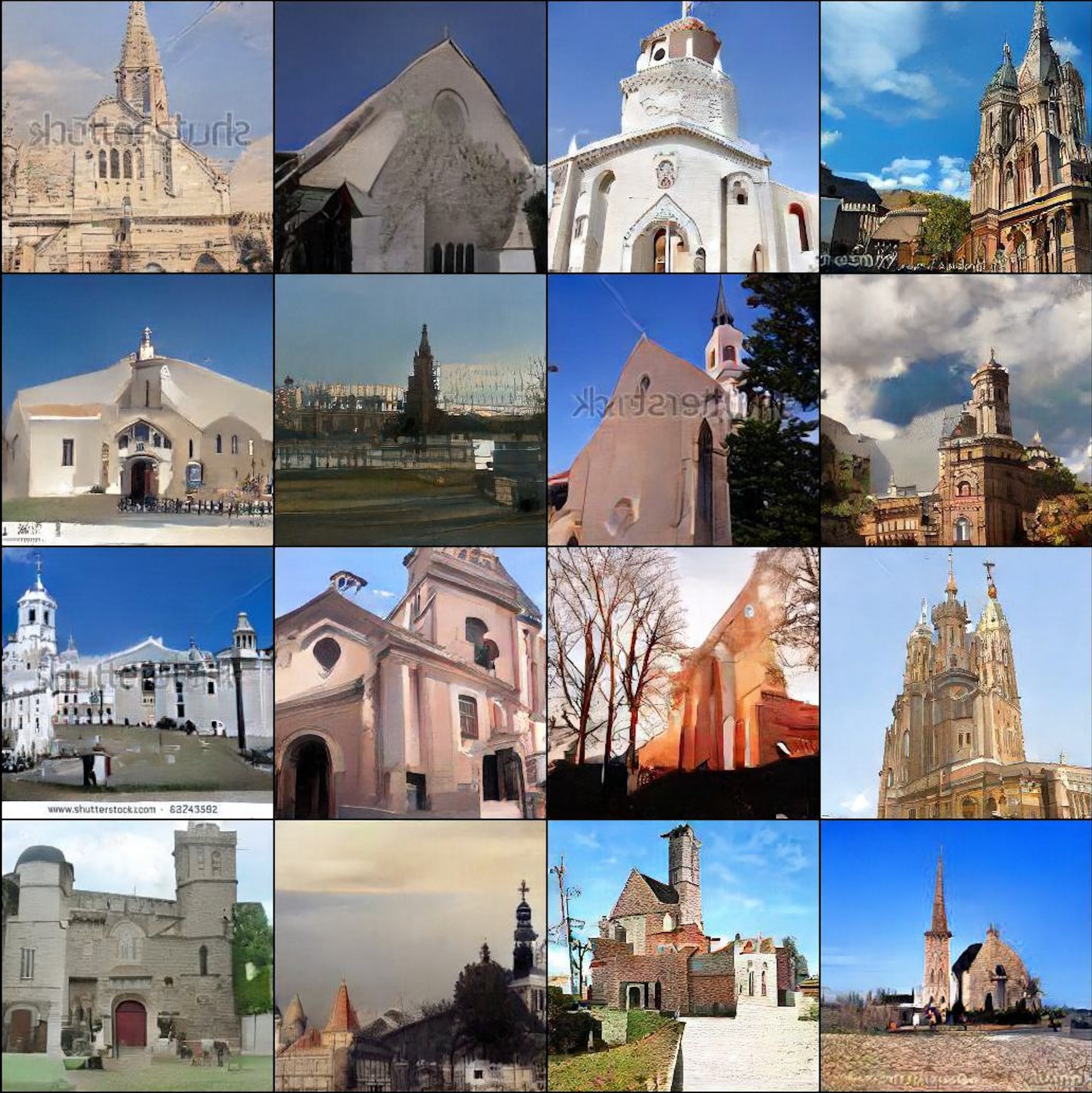} 
\caption{{Result for LSUN Church 256 from DDGAN with our $\boldsymbol\varepsilon$-centered gradient penalty.}\label{Fig.ddgan-lsun}}
\vspace{-0.1in}
\end{figure*}

\begin{figure*}
\centering 
\includegraphics[width=1\textwidth]{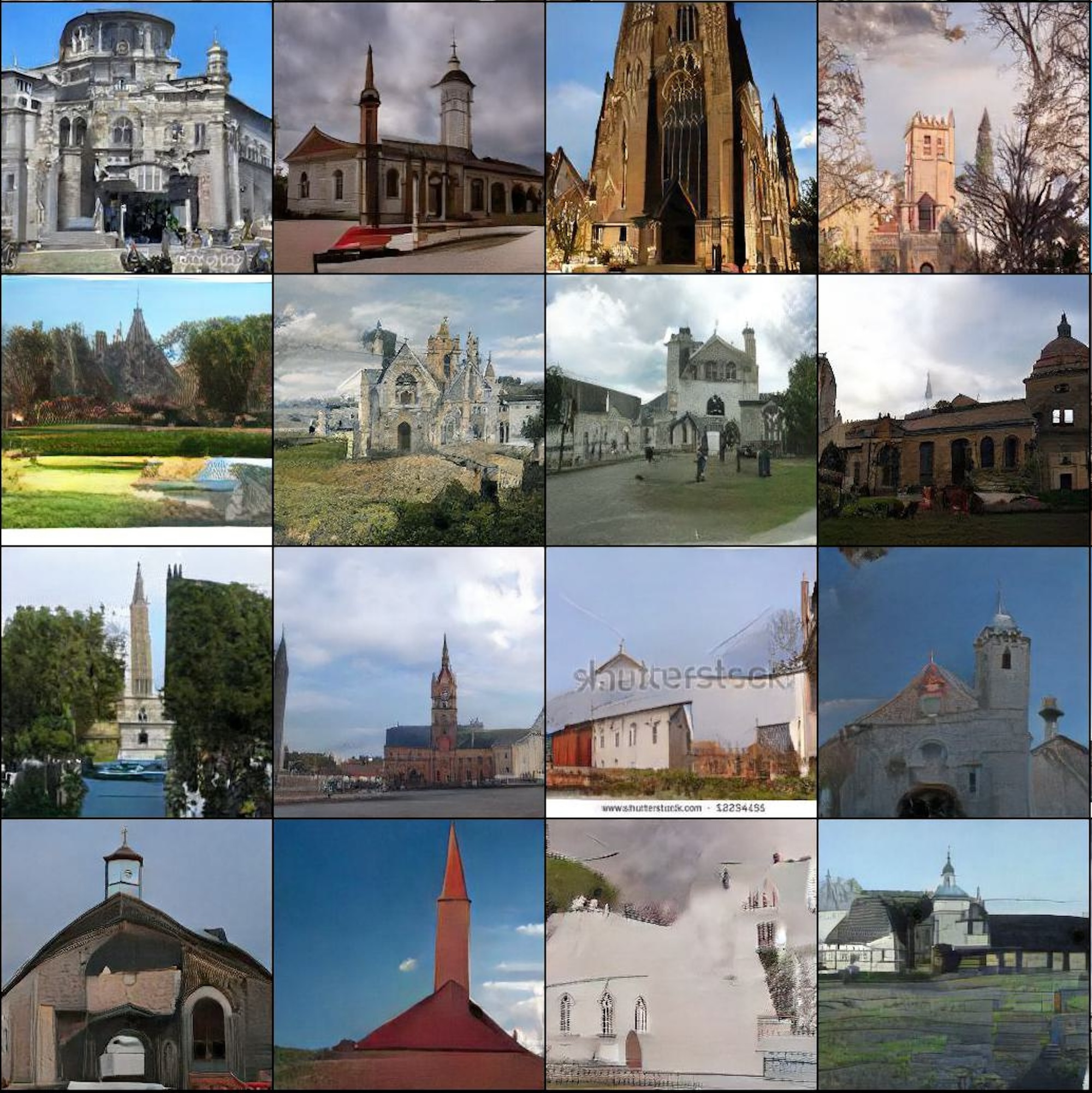} 
\caption{{Result for LSUN Church 256 from DDGAN with our $\boldsymbol\varepsilon$-centered gradient penalty.} \label{Fig.ddgan-lsun1}}
\vspace{-0.1in}
\end{figure*}

\noindent\textbf{Synthetic Datasets.}
{In the present results from synthetic datasets, we observed that unconstrained methods like the original GAN, LSGAN, WGAN and HingeGAN struggle to converge to all modes of the ring or grid datasets. However, these methods, when supplemented with gradient penalty, show an enhanced ability to converge to a mixture of Gaussians. Among the three types of gradient penalties tested, the $1$-centered gradient penalty exhibited inferior convergence compared to the $0$-centered gradient penalty and our proposed $\boldsymbol\varepsilon$-centered gradient penalty. Notably, our $\boldsymbol\varepsilon$-centered gradient penalty demonstrated a higher efficacy in driving more sample points to converge to the Gaussian points compared to the $0$-centered gradient penalty.}

\noindent \textbf{ImageNet.}
We present the experiment results of Li-CFG on the ImageNet datasets here. We compare the FID result of 
Li-CFG and CFG methods which are the same Neural architecture with parameters of different magnitudes. The Neural architecture simply uses the DCGAN and the res-block. Fig.~\ref{Fig.image0} shows the FID scores of the CFG method, the CFG method with two times the number of parameters, and Li-CFG with twice the number of parameters. Fig.~\ref{Fig.image1} presents the visualization effect of the above three models. Fig.~\ref{Fig.image2} manifest our Li-CFG results on the ImageNet datasets.

\begin{figure*}[t]
\vspace{-0.1in}
\centering
\includegraphics[width=1\textwidth]{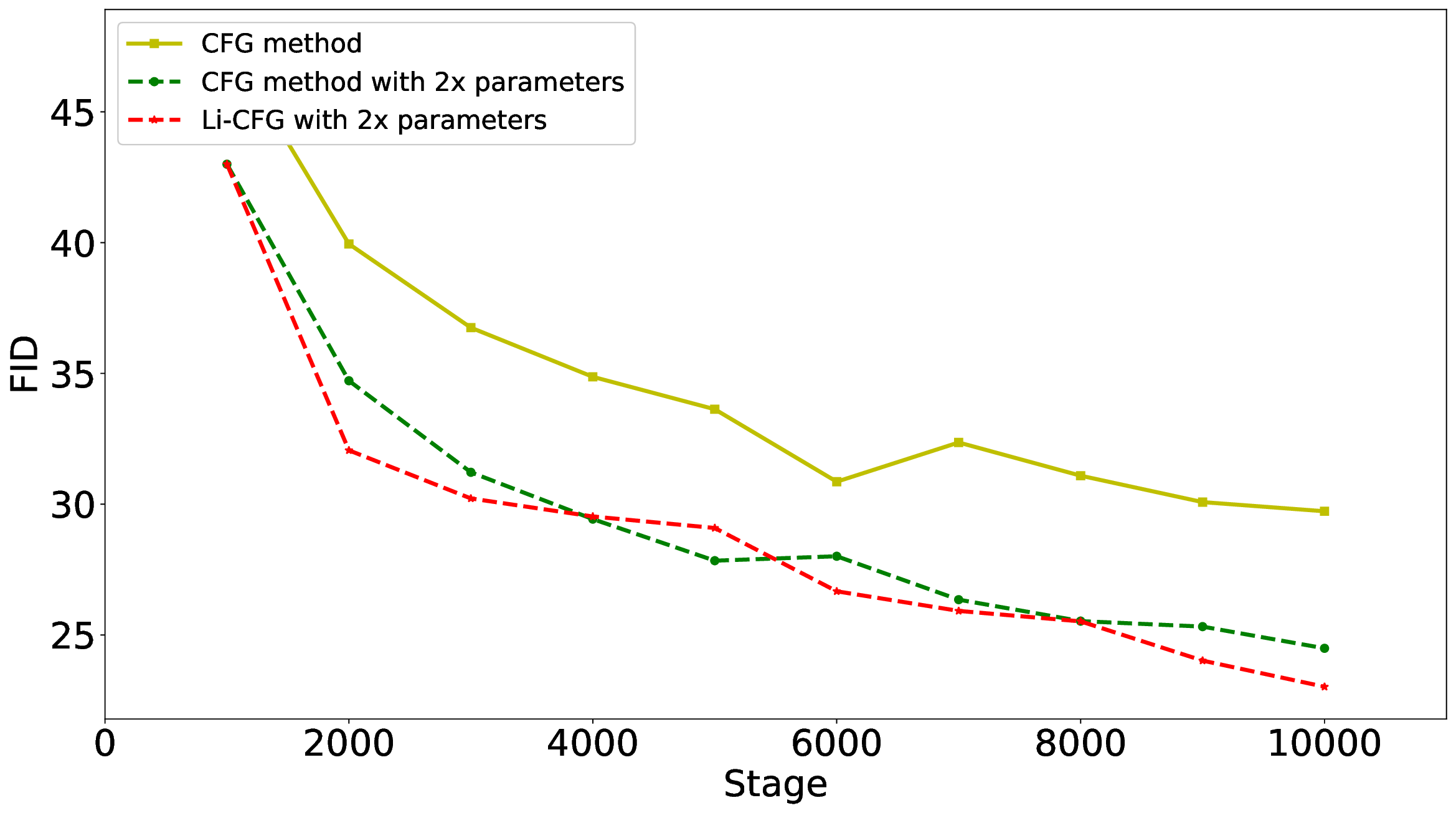}
\caption{Results for ImageNet: The yellow line is the FID results of the CFG method at different stages on the ImageNet datasets. The green and red lines represent the FID results of the CFG method and Li-CFG, respectively, with 2x parameters at different stages on the ImageNet datasets. The figure expresses such a conclusion that larger magnitude parameters of the CFG method behavior are much better than smaller magnitude parameters of the CFG method on the ImageNet datasets.
.\label{Fig.image0}} 
\vspace{-0.1in}
\end{figure*}

\begin{figure*}[t]
\centering
\includegraphics[width=0.8\textwidth]{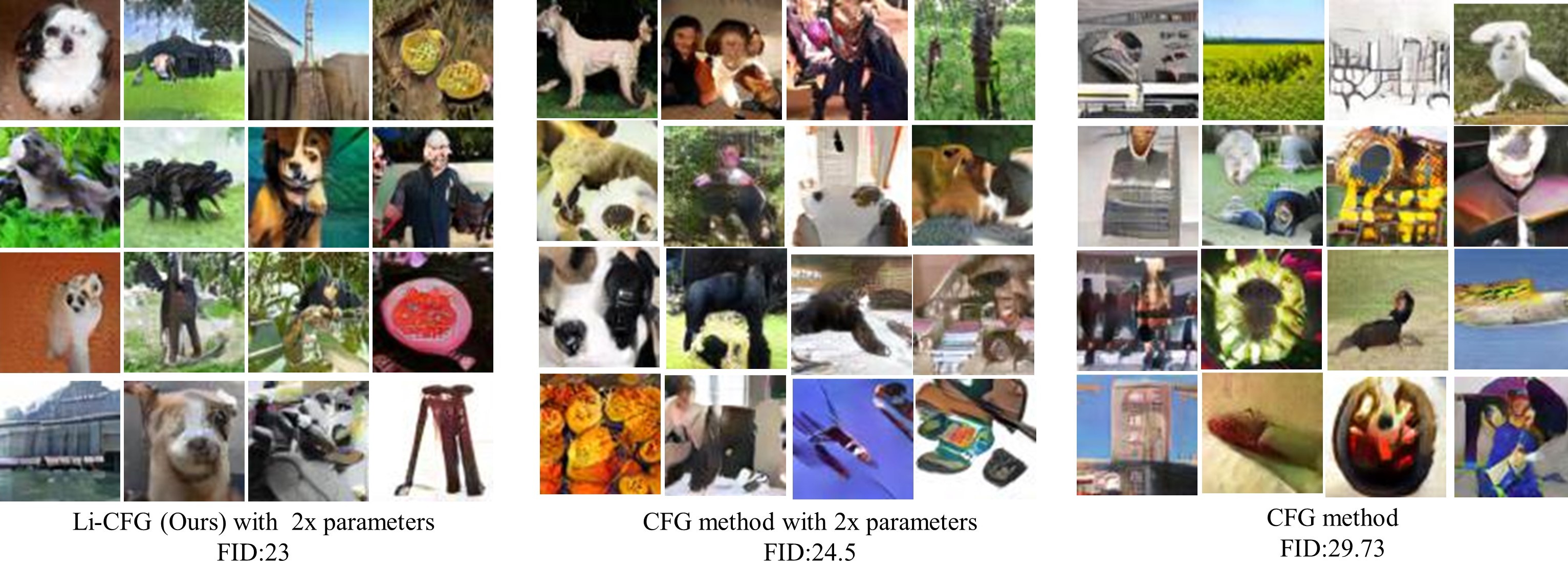}
\caption{Visual results for ImageNet: The left and middle columns display the visual results of the CFG method and Li-CFG with 2x parameters on the ImageNet datasets, respectively. The right column showcases the visual results of the CFG method on the ImageNet datasets.
.\label{Fig.image1}} 
\vspace{-0.1in}
\end{figure*}

\begin{figure*}[t]
\centering
\includegraphics[width=1\textwidth]{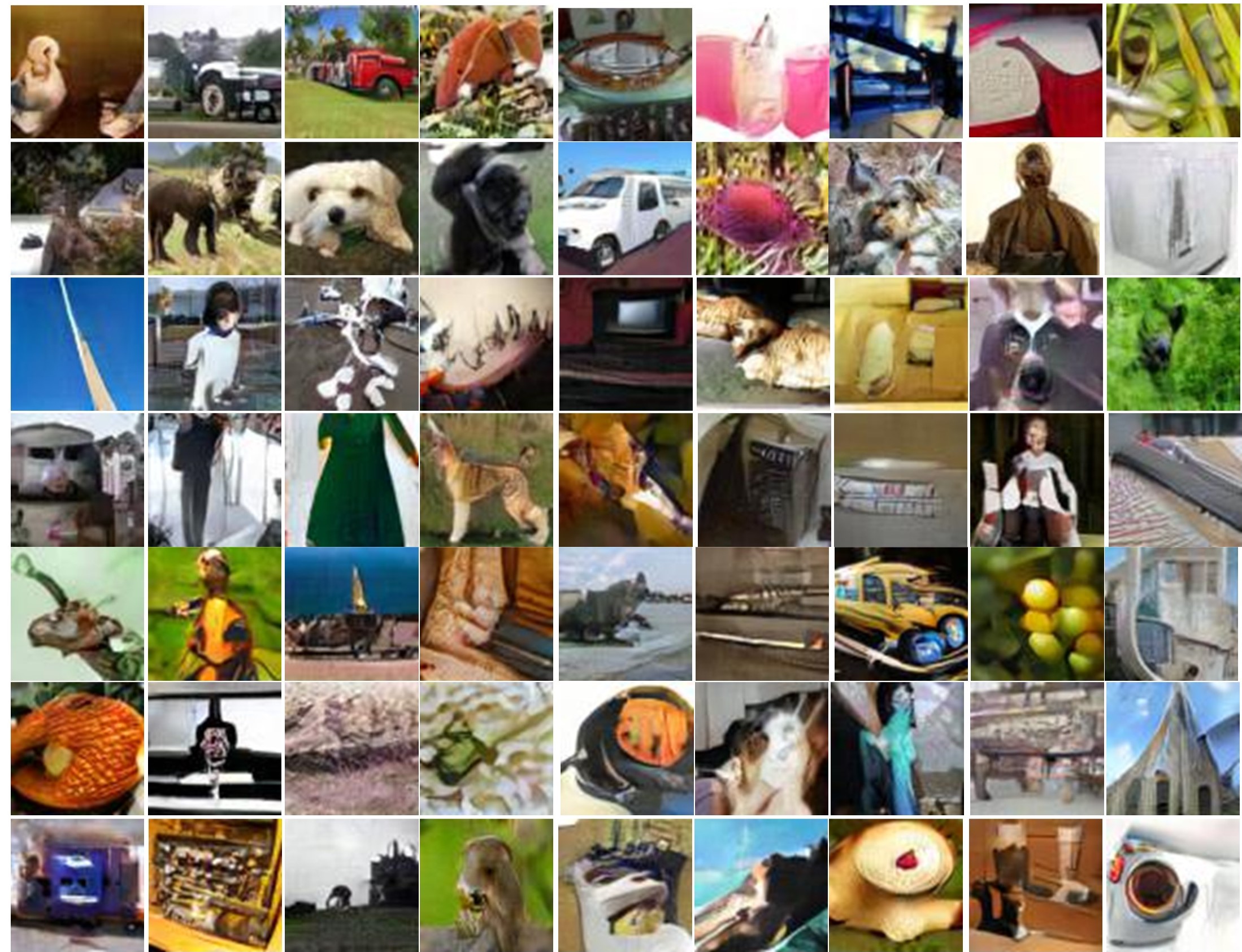}
\vspace{-0.1in}
\caption{Visual result of Li-CFG on ImageNet datasets.
.\label{Fig.image2}} 
\vspace{-0.1in}
\end{figure*}

\begin{figure*}[t]
\centering
\includegraphics[width=0.95\textwidth]{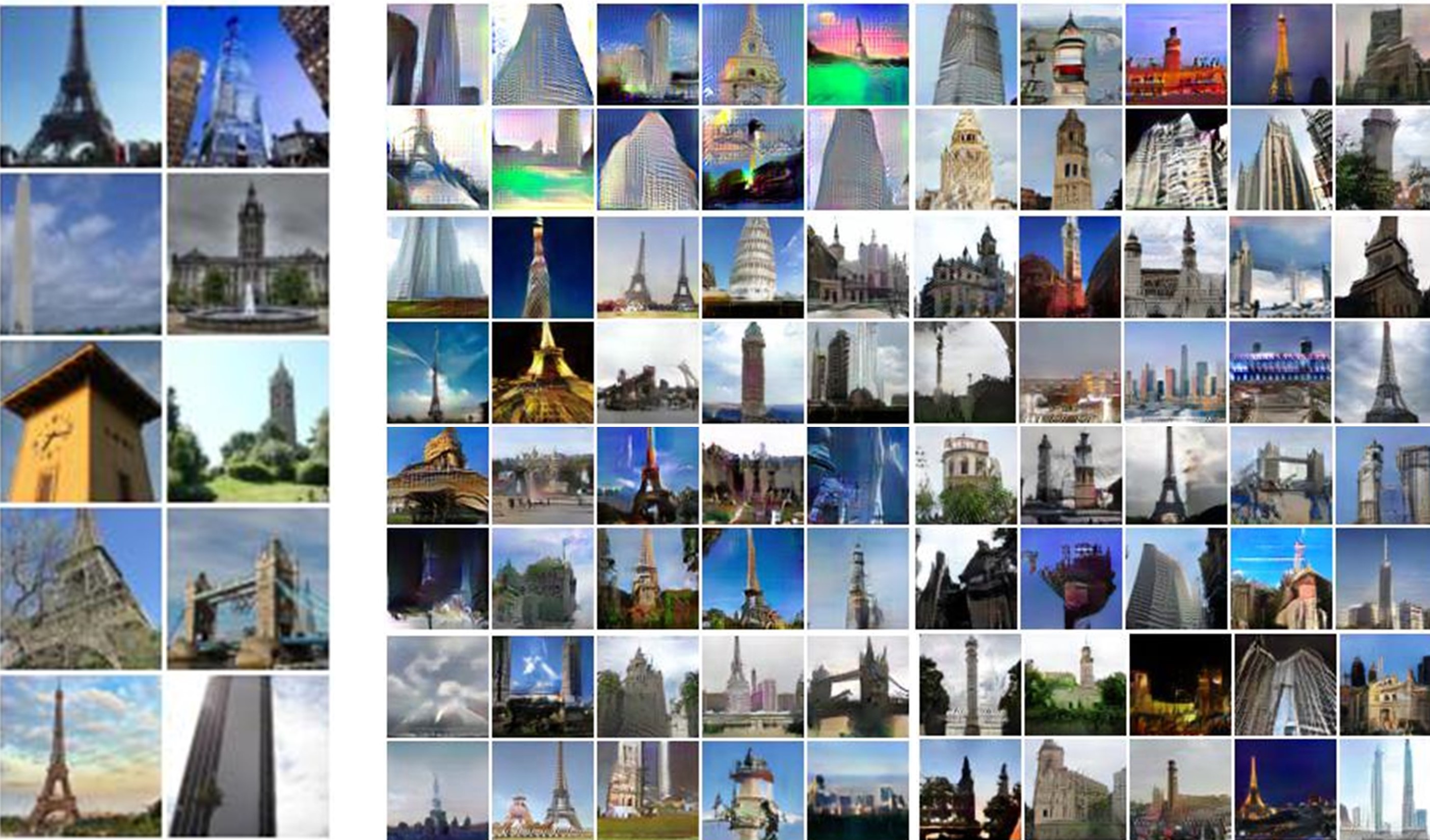} 
\caption{Results for LSUN T: The left column consists of real images. The first two rows in the right column depict the visual results of the CFG method. The second two rows in the right column depict the visual results of the Li-CFG with $\boldsymbol\varepsilon$-centered gradient penalty (ours). The third two rows in the right column depict the visual results of the Li-CFG with the $1$-centered gradient penalty. The last two rows in the right column depict the visual results of the Li-CFG with $0$-centered gradient penalty. The parameter settings for Li-CFG are as follows: $\eta =$2.5e-4, $\gamma = 0.1$, $\delta(\boldsymbol x) = 1$, and $\varepsilon' = 0.3$.\label{Fig.main0}} 
\vspace{-0.1in}
\end{figure*}

\begin{figure*}[t]
\centering
\includegraphics[width=0.95\textwidth]{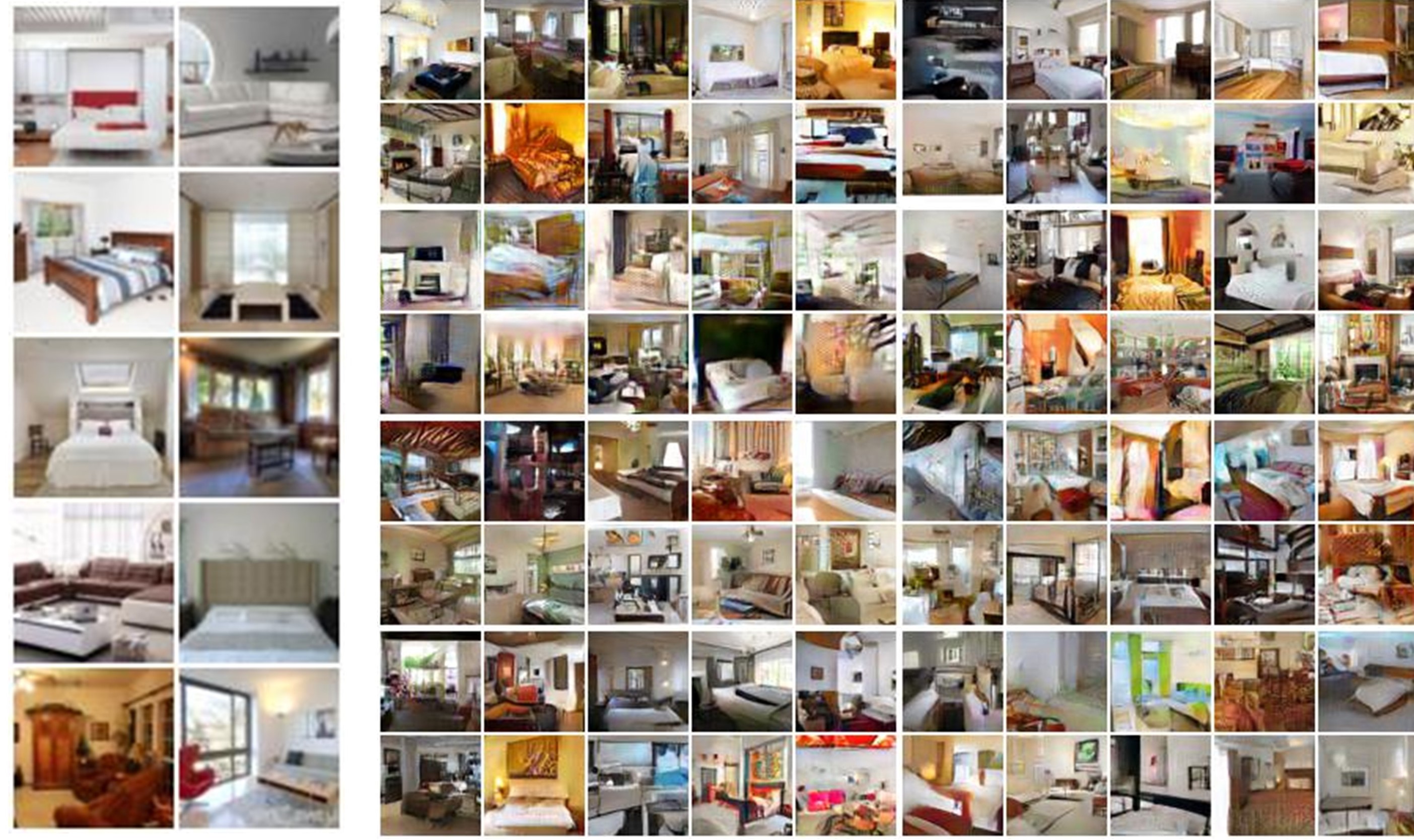} 
\caption{Results for BR+LR: The left column consists of real images. The first two rows in the right column depict the visual results of the CFG method. The second two rows in the right column depict the visual results of the Li-CFG with $\boldsymbol\varepsilon$-centered gradient penalty (ours). The third two rows in the right column depict the visual results of the Li-CFG with the $1$-centered gradient penalty. The last two rows in the right column depict the visual results of the Li-CFG with $0$-centered gradient penalty. The parameter settings for Li-CFG are as follows: $\eta =$2.5e-4, $\gamma = 0.1$, $\delta(\boldsymbol x) = 1$, and $\varepsilon' = 0.3$.\label{Fig.main1}} 
\vspace{-0.1in}
\end{figure*}

\begin{figure*}[t]
\centering
\includegraphics[width=1\textwidth]{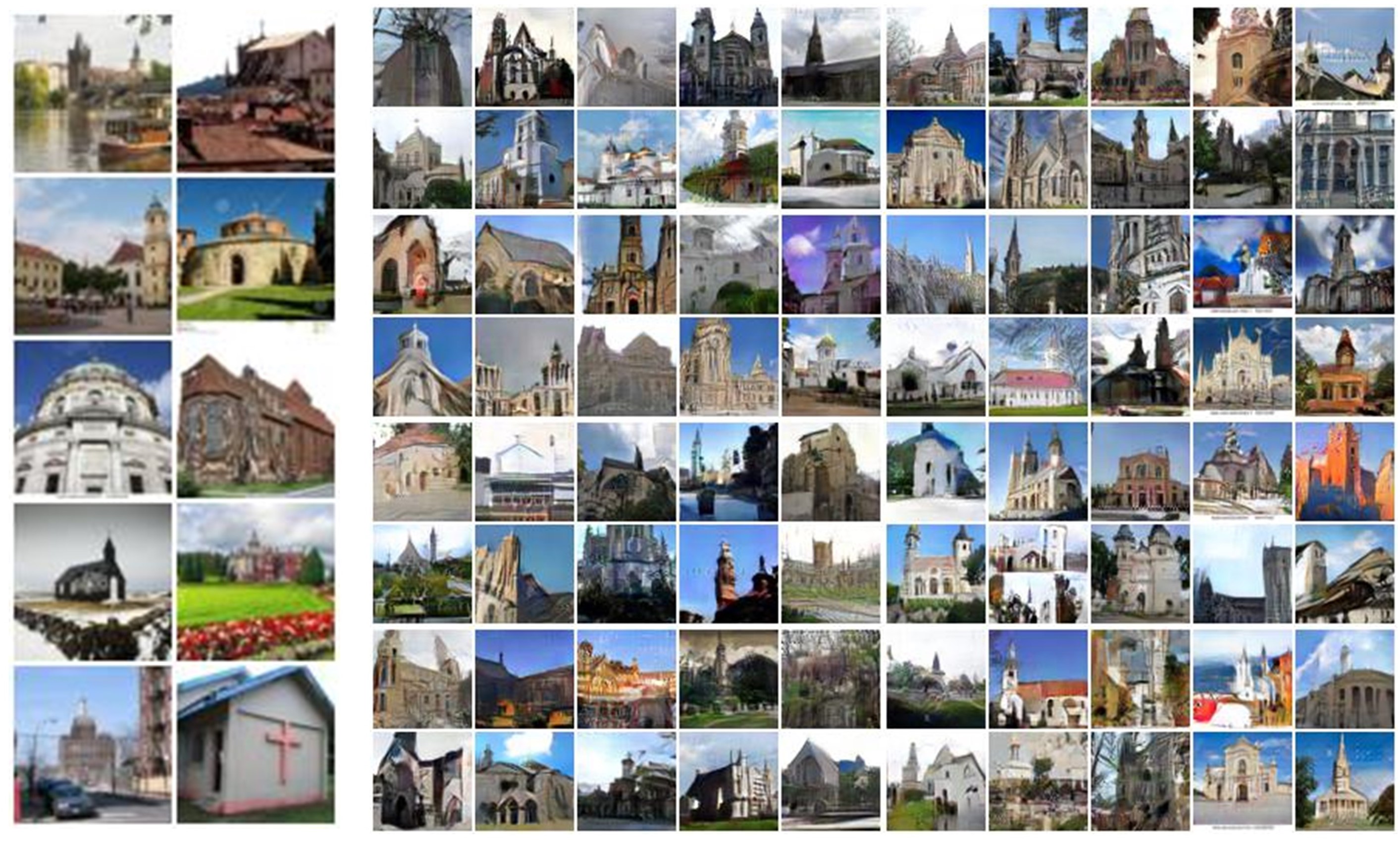} 
\caption{Results for LSUN Church: The left column consists of real images. The first two rows in the right column depict the visual results of the CFG method. The second two rows in the right column depict the visual results of the Li-CFG with $\boldsymbol\varepsilon$-centered gradient penalty (ours). The third two rows in the right column depict the visual results of the Li-CFG with the $1$-centered gradient penalty. The last two rows in the right column depict the visual results of the Li-CFG with $0$-centered gradient penalty. The parameter settings for Li-CFG are as follows: $\eta =$2.5e-4, $\gamma = 0.1$, $\delta(\boldsymbol x) = 1$, and $\varepsilon' = 0.3$. \label{Fig.main2}} 
\vspace{-0.1in}
\end{figure*}
\begin{figure*}[t]
\centering
\includegraphics[width=1\textwidth]{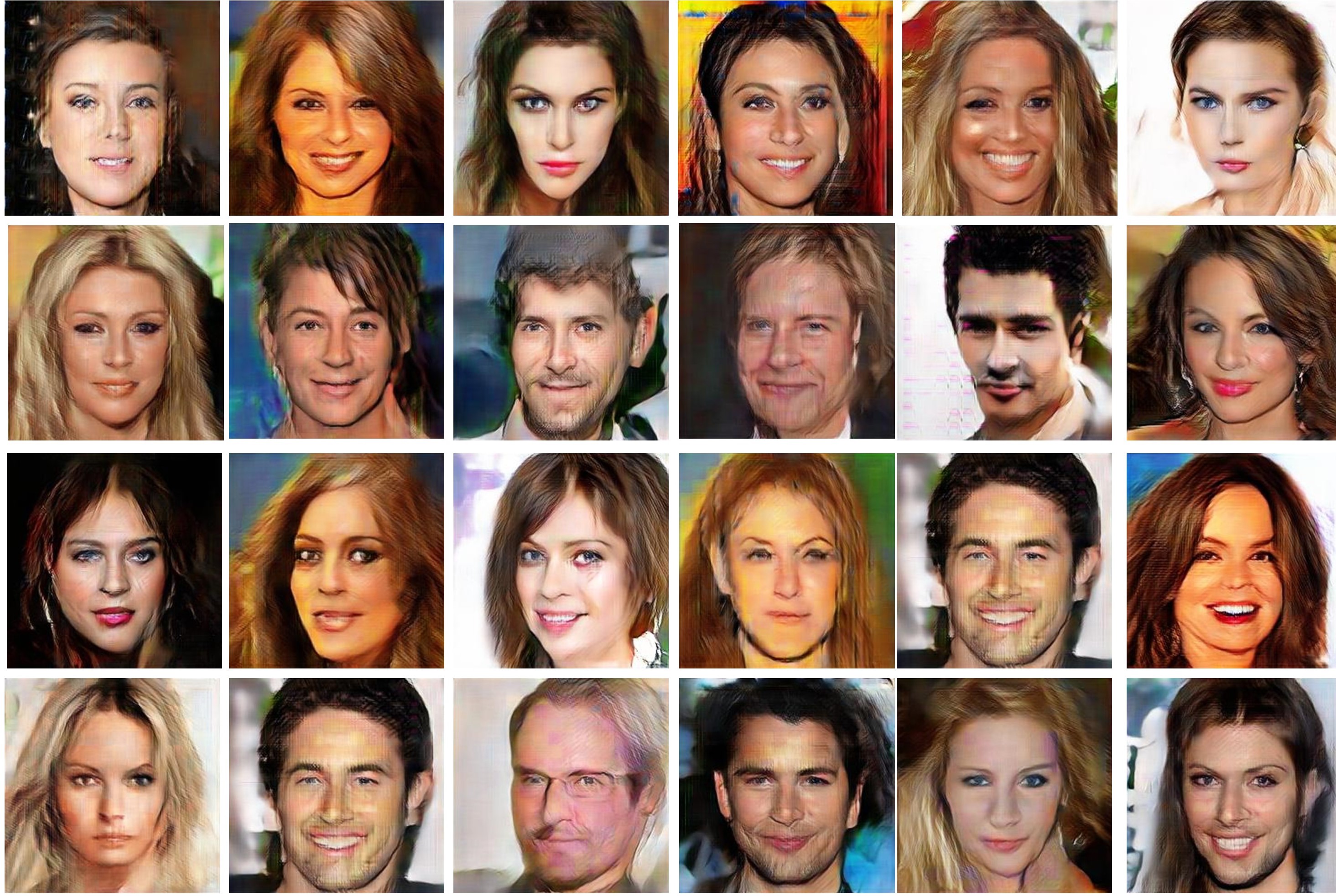} 
\caption{Results for celeBA with resolution of 256: The outcomes are obtained using our $\boldsymbol\varepsilon$-centered Li-CFG method.  The parameter settings for Li-CFG are as follows: $\eta=$2.5e-4, $\gamma=$10, $\delta(\boldsymbol x)=$10 and $\varepsilon'=$0.3. \label{Fig.main5}} 
\vspace{-0.1in}
\end{figure*}

\begin{figure*}[t]
\centering
\includegraphics[width=0.9\textwidth]{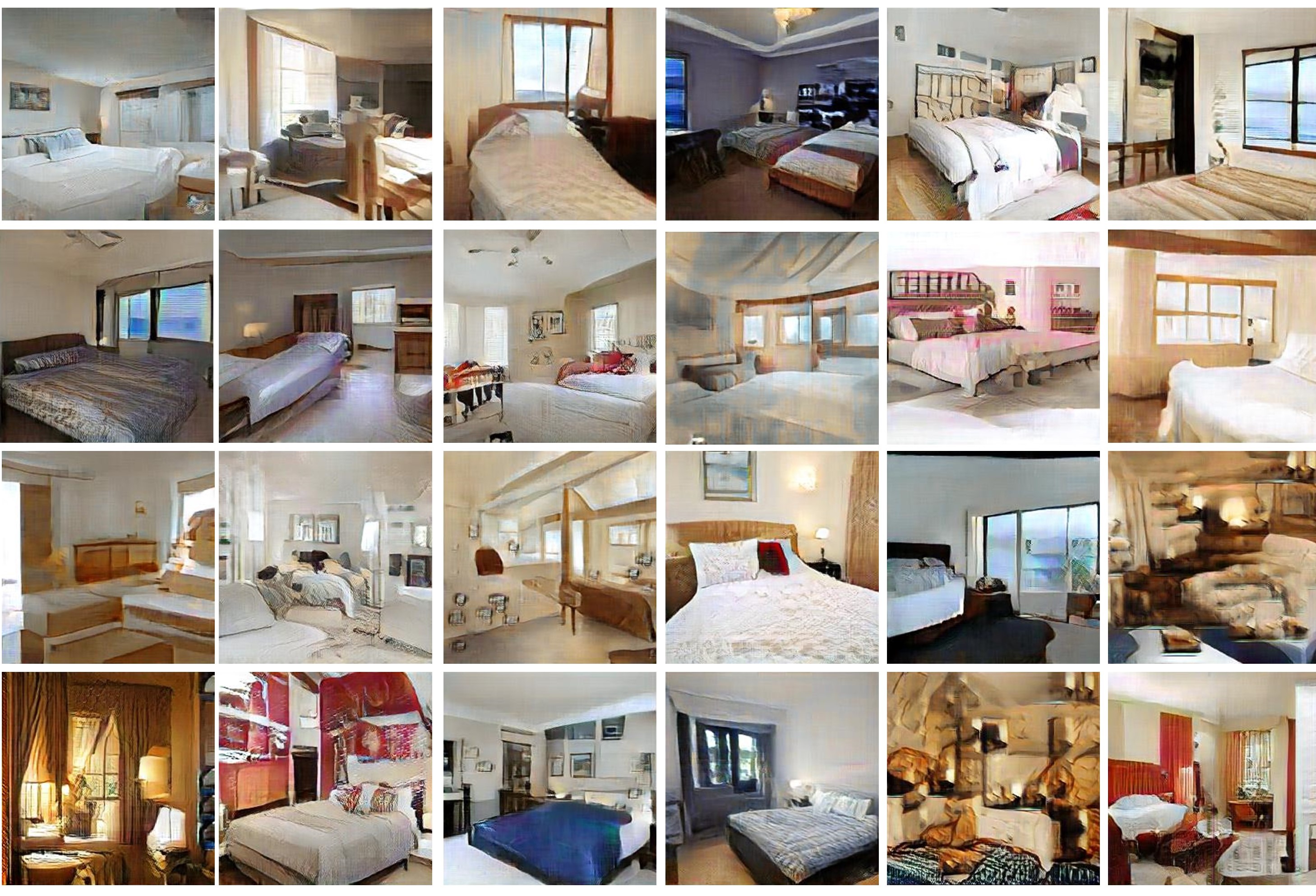} 
\caption{Results for LSUN Bedroom with the resolution of 256: The outcomes are obtained using our $\boldsymbol\varepsilon$-centered Li-CFG method.
The parameter settings for Li-CFG are as follows: 
$\eta=$2.5e-4, $\gamma=$10, $\delta(\boldsymbol x)=$10 and $\varepsilon'=$0.3. \label{Fig.bed256}} 
\vspace{-0.1in}
\end{figure*}

\begin{figure*}[t]
\centering
\includegraphics[width=0.9\textwidth]{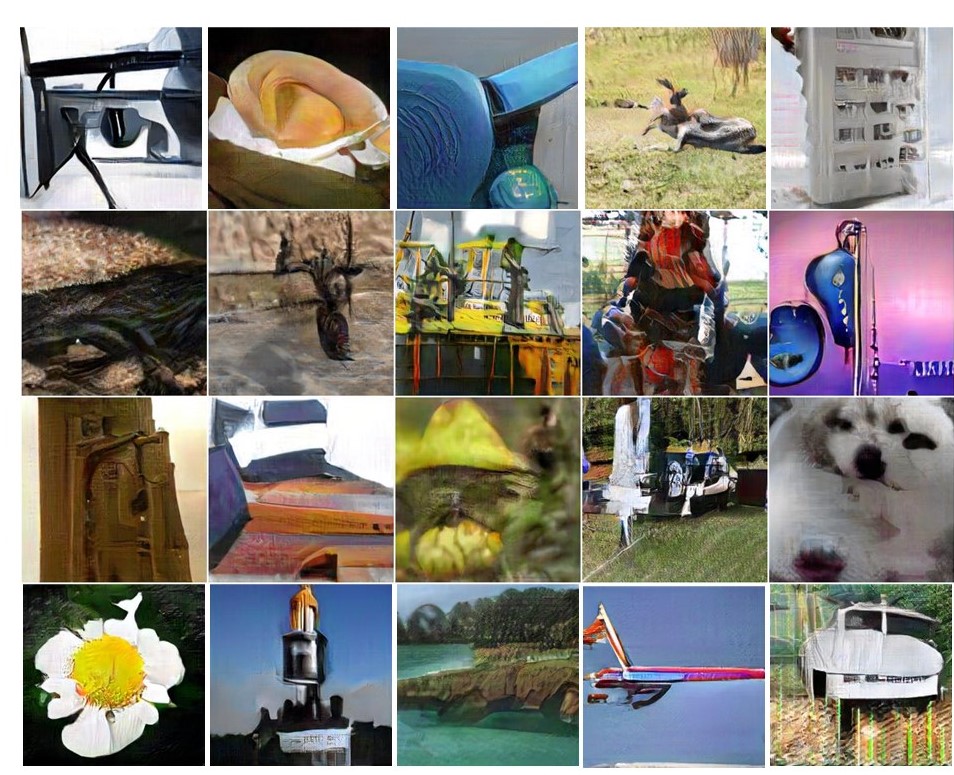}
\caption{Results for ImageNet with the resolution of 256: The outcomes are obtained using our $\boldsymbol\varepsilon$-centered Li-CFG method.
The parameter settings for Li-CFG are as follows: $\eta=$2.5e-4, $\gamma=$0.1; $\delta(\boldsymbol x)=$20 and $\varepsilon'=$1. \label{Fig.main3}} 
\vspace{-0.1in}
\end{figure*}

\begin{figure*}[t]
\centering
\includegraphics[width=1\textwidth]{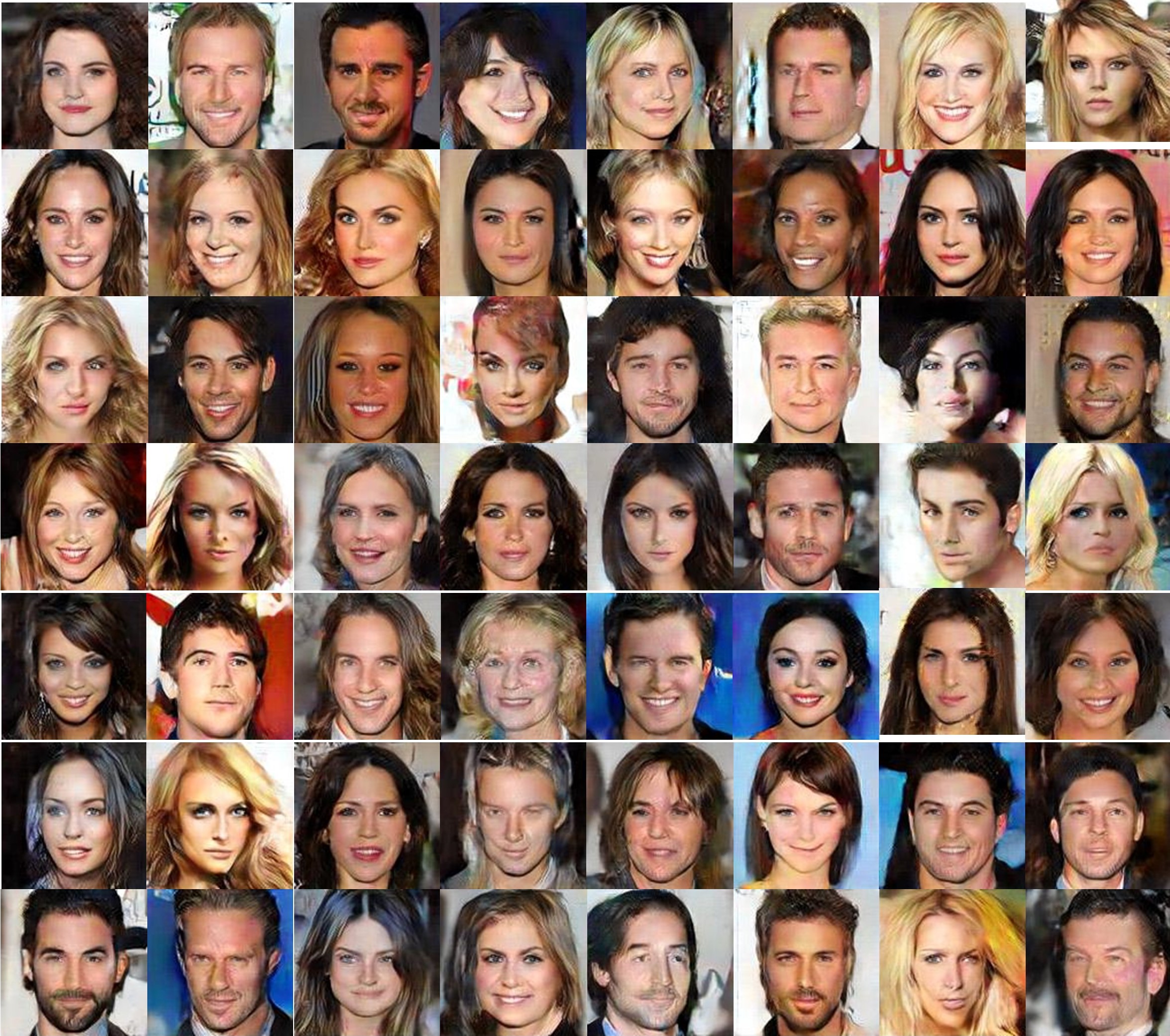} 
\caption{Results for CeleBA with the resolution of 128: The outcomes are obtained using our $\boldsymbol\varepsilon$-centered Li-CFG method. 
The parameter settings for Li-CFG are as follows: $\eta=$2.5e-4, $\gamma=$10; $\delta(\boldsymbol x)=$5 and $\varepsilon'=$0.3.\label{Fig.main4}} 
\vspace{-0.1in}
\end{figure*}




\end{appendices}


\end{document}